\newtheorem{thm}{Theorem}[section]
\newtheorem{lemma}[thm]{Lemma}
\newtheorem{definition}{Definition}
\newtheorem{cor}{Corollary}
\newcommand{\mbR}{\mathbb{R}}
\newcommand{\tr}{\rm{Tr}}
\newcommand{\mcN}{\mathcal{N}}
\newcommand{\mcS}{\mathcal{S}}
\newcommand{\card}{{\rm card}}
\newcommand{\mbE}{\mathbb{E}}
\begin{document}
\markboth{IEEE Transactions on Information Theory, ~Vol.~XXX, No.~XXX, April~2020}%
{Shell \MakeLowercase{\textit{et al.}}: Bare Demo of IEEEtran.cls for IEEE Communications Society Journals}
% The only time the second header will appear is for the odd numbered pages
% after the title page when using the twoside option.
%
% *** Note that you probably will NOT want to include the author's ***
% *** name in the headers of peer review papers.                   ***
% You can use \ifCLASSOPTIONpeerreview for conditional compilation here if
% you desire.

% If you want to put a publisher's ID mark on the page you can do it like
% this:
%\IEEEpubid{0000--0000/00\$00.00~\copyright~2015 IEEE}
% Remember, if you use this you must call \IEEEpubidadjcol in the second
% column for its text to clear the IEEEpubid mark.

% use for special paper notices
%\IEEEspecialpapernotice{(Invited Paper)}

\title{Mutual Information Learned Classifiers: an Information-theoretic Viewpoint of Training Deep Learning Classification Systems}
\author{Jirong Yi,~Qiaosheng Zhang,~Zhen Chen\\
	~Qiao Liu,~Wei Shao
	\thanks{Jirong Yi is with CAD Science Group, Hologic Inc, Santa Clara, CA 95054 and Department of Electrical and Computer Engineering, University of Iowa, Iowa City, IA 52242. Qiaosheng Zhang is with Department of Electrical and Computer Engineering at National University of Singapore, Singapore 119077. Zhen Chen is with Department of Electrical Engineering and Computer Science at University of California at Irvine, Irvine, CA 92697. Qiao Liu is with Department of Statistics at Stanford University, Stanford, CA 94305. Wei Shao is with Department of Radiology at Stanford University, Stanford, CA 94305. Emails: jirong.yi@hologic.com, ericzhang8951@gmail.com, zhenc4@uci.edu, liuqiao@stanford.edu, weishao@stanford.edu. Corresponding emails should be sent to: jirong.yi@hologic.com, jirong-yi@uiowa.edu.}}
%\date{\today}

% make the title area
\maketitle

% As a general rule, do not put math, special symbols or citations
% in the abstract or keywords.
\begin{abstract}
	Deep learning systems have been reported to acheive state-of-the-art performances in many applications, and one of the keys for achieving this is the existence of well trained classifiers on benchmark datasets which can be used as backbone feature extractors in downstream tasks. As a main-stream loss function for training deep neural network (DNN) classifiers, the cross entropy loss can easily lead us to find models which demonstrate severe overfitting behavior when no other techniques are used for alleviating it such as data augmentation. In this paper, we prove that the existing cross entropy loss minimization for training DNN classifiers essentially learns the conditional entropy of the underlying data distribution of the dataset, i.e., the information or uncertainty remained in the labels after revealing the input. In this paper, we propose a mutual information learning framework where we train DNN classifiers via learning the mutual information between the label and input. Theoretically, we give the population error probability lower bound in terms of the mutual information. In addition, we derive the mutual information lower and upper bounds for a concrete binary classification data model in $\mbR^n$, and also the error probability lower bound in this scenario.  Besides, we establish the sample complexity for accurately learning the mutual information from empirical data samples drawn from the underlying data distribution. Empirically, we conduct extensive experiments on several benchmark datasets to support our theory. Without whistles and bells, the proposed mutual information learned classifiers (MILCs) acheive far better generalization performances than the state-of-the-art classifiers with an improvement which can exceed more than 10\% in testing accuracy.
	%\footnote{To faciliate research, we will release our implementation at \underline{\href{https://github.com/JAMES-YI/MILC}{MILC}} in github.} 

{\bf Keywords}: classification; mutual information learning; error probability; sample complexity; overfitting; deep neural network
\end{abstract}

%\newpage
%\tableofcontents
%\newpage

% For peer review papers, you can put extra information on the cover
% page as needed:
% \ifCLASSOPTIONpeerreview
% \begin{center} \bfseries EDICS Category: 3-BBND \end{center}
% \fi
%
% For peerreview papers, this IEEEtran command inserts a page break and
% creates the second title. It will be ignored for other modes.
\maketitle

\section{Introduction}\label{Sec:Introduction}

Ever since the breakthrough made by Krizhevsky et al. \cite{krizhevsky_imagenet_2012}, deep learning has been finding trenmdous applications in different areas such as computer vision, natural language process, and traditonal signal processing \cite{goodfellow_deep_2016,bora_compressed_2017,yi_outlier_2018}, and it achieved the state-of-the-art performances in almost all of them. In nearly all of these applications, a fundamental classification task is usually involved, i.e., determining the class to which a given input belongs. Examples from computer vision include the image-level classification in recognition tasks, the patch-level classification in object detection, and pixel-level classification in image segmentation tasks \cite{zheng_rethinking_2021,yi_trust_2019,ren_faster_2016}. For many other applications which do not directly or explicitly involve classifications, they still use model pretrained via classification tasks as a backbone for extracting useful and meaningful representation for the specific tasks \cite{liu_swin_2021,he_deep_2015}. In practice, such extracted representations have been reported to be beneficial for the downstream tasks \cite{liu_swin_2021,ren_faster_2016,he_deep_2015}.

To train such classifiers, the deep learning community has been mainly using the cross entropy loss or its variants as the objective function for guiding the search of a good set of model weights \cite{goodfellow_deep_2016}. However, the models trained this way can easily overfit the data and result in pretty bad generalization performance, and this motivates the proposal of many techniques for improved generalization. These techniques can be broadly divided into several categories. From the propsect of data, increasing the dataset size has been proving to be beneficial for better generalization, but collecting huge amount of data can be labor-consuming and costly. For example, in medical image analysis and diagnostics, collecting the dataset can cost millions of dollars, and the data size is usually very small for some rare disease such as cancer \cite{li_breast_2017}. The data augmentation is another commonly used technique to increase the diversity of dataset such as random cropping, flipping, and color jittering \cite{he_deep_2015,szegedy_going_2015,szegedy_rethinking_2016}. However, in situations where the dataset itself is scarce, the data augmentation may not be enough for training a well-performing classifier such as few shot learning \cite{boudiaf_information_2020}.

From the angle of models, traditional machine learning theory shows that decreasing the flexibility or complexity of the models can help alleviate the overfitting phenomenon \cite{shalev-shwartz_understanding_2014,bishop_pattern_2006,theodoridis_machine_2015,mohri_foundations_2018}. However, under the background of deep learning, this does not seem to be a feasible solution because the models which achieve the state-of-the-art (SOTA) performance are becoming increasingly more complex with parameters even over one trillion \cite{devlin_bert_2019,fedus_switch_2021,liu_swin_2021}. These huge models are motivated by the increasingly more challenging learning problems which require the strong capability of huge models to extract useful information and find meaningful patterns that can be used for solving them, and which the smaller models are incapable of \cite{fedus_switch_2021}.

Another line of works for improving the generalization performance comes from the regularization viewpoint, i.e., restricting the model space for searching during training to avoid overfitting \cite{goodfellow_deep_2016}. Examples includes the weight decay (or $\ell_2$ regularization), $\ell_1$ regularization, and label smoothing regularization \cite{szegedy_rethinking_2016}. The  major limitations of these approaches are that they require prior knowledge about the learning tasks. For example, the $\ell_1$ regularization usually requires the ground truth model to have sparse weights while the $\ell_2$ regularization requires the ground truth model to have small magnitude to achieve good generalization performance. Unfortunately, such prior knowledge is not always available in practice. What makes things worse is that it is recently reported that such prior knowledge may make the learned models adversarially vulnerable such that adversarial attacks can be easily acheived, and this is because the model can be underfitted to those unseen adversarial examples \cite{xu_adversarial_2019,xu_information-theoretic_2017,zhang_towards_2020}.

\subsection{Ignored Conditional Label Entropy}

In this paper, we show that the existing cross entropy loss minimization for training deep neural network classifiers essentially learns the conditional entropy of the underlying data distribution of the input and the label. We argue that this can be the fundamental reason which accounts for the severe overfitting of models trained by cross entropy loss minimization, and the extremely small training loss in practice implies that the learned model completely ignores the conditional entropy about the label distribution. The reasons for such ignorance of the conditional entropy include that the marginal input entropy is very big when compared with the conditional label entropy, and that the annotating of input samples in the data collection process and the encoding of the labels during the training both ignore the conditional entropy of label. 

To see these, we consider the MNIST image recognition task from computer vision where we want to train a model to predict which class from 10 classes a given digit image belongs to\footnote{http://yann.lecun.com/exdb/mnist/}. In this example, the label has only 10 choices, and the maximum label entropy is $\log_2(10)\approx3.32$ bits. However, since the image input is in $\mbR^{784}$, the maximum entropy of the image distribution can be very big, i.e., $\log_2\left( 256^{784} \right)=6272$ bits if we assume each pixel to take value in $\{0,1,\cdots,255\}$ uniformly. The gap between the information contained in the input image distribution and that contained in the label distribution is so big that when the model is trained to learn the conditional entropy of the label distribution after revealing the image input, it has a strong tendency to simply ignore such remaining information and directly treat it as zero. This is indeed the case in many machine learning applications. In the MNIST classification task, when the image of the hand-written digital is given, we are usually 100\% sure which class the image belongs to. In Figure \ref{Fig:MNISTImgsSure1}, we show one of such images, and there is no doubt that the digit is 1, thus the label entropy is 0 when this image is given. However, this is not always case because we can have image samples whose classes cannot determined with complete certainty. Some such examples are also presented in Figure \ref{Fig:MNISTImgs}, the digit in Figure \ref{Fig:MNISTImgsAmbug1} has a truth label 1, but it looks like 2. Similarly, the digit in Figure \ref{Fig:MNISTImgsAmbug4} has a truth label 4 but looks like 9, and Figure \ref{Fig:MNISTImgsAmbug9} has a truth label 9 but looks like 4. Different people can have different labels for these image samples, but their ground truth annotations or labels are at the discretion of the creator of them. In more complex image classification tasks such as ImageNet classification, a single image itself can contain multiple objects, and thus belong to multiple classes. However, it has only single annotation or label which depends on the discretion of the human annotators \cite{deng_imagenet:_2009}. In Figure \ref{Fig:ImageNetImgs}, we show image examples from the ImageNet-1K classification task where the goal is to classify a given image into 1000 classes. Though Figure \ref{Fig:ImageNetImgsMultiple} contains also a pencil, the human annotator only labeled it with cauliflower label. In deep learning practice, since we usually use the one-hot encoding of the label for a given image, i.e., assigning all the probability mass to the annotated class while zero to all the other classes, this further encourages the model to ignore the conditional information of the label \cite{he_deep_2015,szegedy_rethinking_2016,yi_trust_2019}. The ignorance of label entropy allows the classifiers to give over-confident label predictions, resulting in unsatisfactory generalization performance.

\begin{figure*}[!htb]
	\centering
	\begin{subfigure}[b]{0.24\textwidth}
		\centering
		\includegraphics[width=\linewidth]{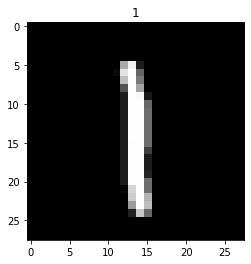}
		\caption{Truth label is 1}\label{Fig:MNISTImgsSure1}
	\end{subfigure}
~
	\begin{subfigure}[b]{0.24\textwidth}
	\centering
	\includegraphics[width=\linewidth]{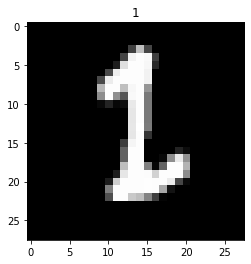}
	\caption{Truth label is 1}\label{Fig:MNISTImgsAmbug1}
\end{subfigure}%
	\begin{subfigure}[b]{0.24\textwidth}
	\centering
	\includegraphics[width=\linewidth]{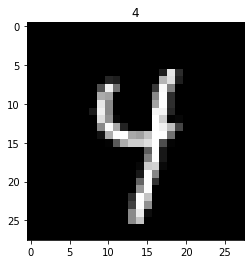}
	\caption{Truth label is 4}\label{Fig:MNISTImgsAmbug4}
\end{subfigure}%
~
\begin{subfigure}[b]{0.24\textwidth}
	\centering
	\includegraphics[width=\linewidth]{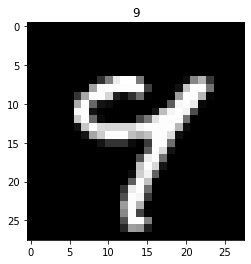}
	\caption{Truth label is 9}\label{Fig:MNISTImgsAmbug9}
\end{subfigure}
	\caption{Image examples from MNIST dataset.}\label{Fig:MNISTImgs}
\end{figure*}

\begin{figure*}[!htb]
	\centering
	\begin{subfigure}[b]{0.45\textwidth}
		\centering
		\includegraphics[width=\linewidth]{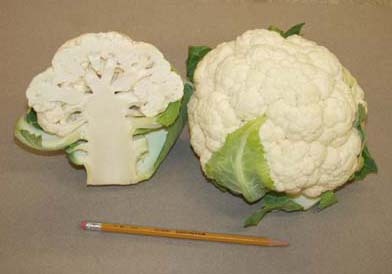}
		\caption{Truth label is cauliflower}\label{Fig:ImageNetImgsMultiple}
	\end{subfigure}%
	~
	\begin{subfigure}[b]{0.45\textwidth}
		\centering
		\includegraphics[width=\linewidth]{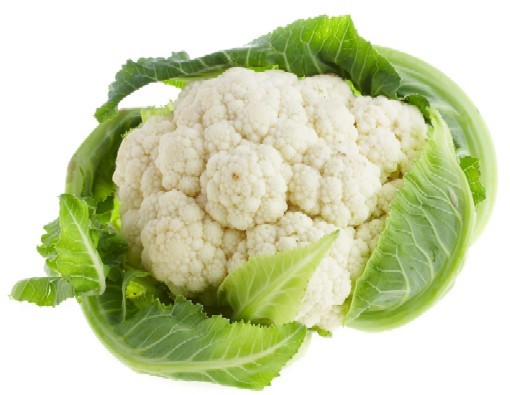}
		\caption{Truth label is cauliflower}\label{Fig:ImageNetImgsSingle}
	\end{subfigure}
	\caption{Image examples from ImageNet dataset. Though Figure \ref{Fig:ImageNetImgsMultiple} contains also a pencil, the human annotator only labeled it with cauliflower. In deep learning practice, since we usually use the one-hot encoding of the label for a given image, i.e., assigning all the probability mass to the annotated class while zero to all the other classes, this further encourages the model to ignore the conditional entropy of the label \cite{he_deep_2015,szegedy_rethinking_2016,yi_trust_2019}.}\label{Fig:ImageNetImgs}
\end{figure*}

Based on the above observations, a naive way for improving the generalization performance can be getting back the conditional entropy of label, e.g., giving multiple annotations for a single image if it contains multiple objects during the data generation process, and using other types of label encoding instead of one-hot encoding during the training process. The deep learning community seems to also realize the limitations of dataset with single annotation for images containing multiple objects, thus created the ImageNet ReaL benchmark in 2020 which is more than 10 years after the construction of the original ImageNet dataset \cite{deng_imagenet:_2009,beyer_are_2020}. However, these re-assessed labels only partially fixed the conditional information loss of labels, and can still suffer the conditional information loss when an object itself has uncertainty as we show in Figure \ref{Fig:MNISTImgs}. Besides, annotating each object in an image can be extremely labor-consuming, or even impossible in cases where some objects are so small that they can hardly be perceivable \cite{lee_interactive_2022}. 

As for using different label encoding methods instead of the one-hot encoding, all of them assume implicitly that the annotations are reasonably good \cite{szegedy_rethinking_2016,he_deep_2015,meister_generalized_2020}. Examples of other label encodings include the label-smoothing regularization (LSR), generalized entropy regularization (GER), and so on \cite{szegedy_rethinking_2016,meister_generalized_2020}. In LSR, Szegedy et al. replaced the one-hot encoding of labels in the cross entropy loss minimization with a mixture of the original one-hot distribution and a uniform distribution, and this mixture encoding is obtained by taking a small probability mass from the annotated class and then evenly spreading it over all the other classes \cite{szegedy_rethinking_2016}. In GER, Meister et al. proposed to use a skew-Jensen divergence to encourage the learned conditional distribution to approximate the mixture encoding of label, and they add this extra divergence term as a regularization to the original cross entropy loss minimization \cite{meister_generalized_2020}. However, these efforts can still have severe limitations. First of all, the construction of mixture encoding of label can be quite biased due to the similar reasons accounting for the conditional entropy loss in dataset construction process. Secondly, their focus is still on improving the conditional entropy, and this can be very challenging since the gap between the conditional entropy of the label and the differential entropy of the input is so big that after we reveal the the input, the conditional entropy of the label distribution can be very small and hard to learn. 

\subsection{Mutual Information Learned Classifiers}

In this paper, we propose a new learning framework, i.e., mutual information learning (MIL) where we train classifiers via learning the mutual information of the dataset, i.e., the dependency between the input and the label, and this is motivated by several observations.

First of all, we show that the existing cross entropy loss minimization for training DNN classifiers actualy learns the conditional entropy of the label when the input is given. From an information theoretic viewpoint, the mutual information between the input and the label quantifies the information shared by them while the conditional entropy quantifies the information remained in the label after revealing the input. Compared with the conditional entropy, the magnitude of the mutual information can be much larger, and it may not be easily ignored by the model during training, thus possible to alleviate the overfitting phenomenon. An illustration of the relation among different information quantities involved in a dataset is shown in Figure \ref{Fig:DatasetInformationAmount}. 

In addition, in 2020, there are several works which apply information theoretic tools to investigate DNN models \cite{wang_robust_2021,yi_derivation_2020,yi_towards_2021}. In \cite{yi_derivation_2020}, Yi et al. investigated the adversarial attack problems from an information theoretic viewpoint, and they proposed to acheive adversarial attacks by minimizing the mutual information between the input and the label. Yi et al. also established theoretical results for characterizing what the best an adversary can do for attacking machine learning models \cite{yi_derivation_2020}. Concurrently, in \cite{wang_robust_2021}, Wang et al. used information theoretic tools to derive interesting relations between the existing adversarial training formulations and the conditional entropy optimizations. These works imply that there are intrinsinc connections between the properties of the model learned from the dataset and the information contained in the dataset. Last but not least, some recent works on training DNN reported that increasing entropy about the labels can improve the generalization performance, and even make the model more adversarially robust \cite{szegedy_rethinking_2016,meister_generalized_2020,papernot_distillation_2016}. Though these results imply that the overfitting can be due to the severe ignorance of the conditional entropy of the label during training, we argue that a more appropriate quantity for guiding the learning and training of classification systems can be the mutual information (MI) which better characterizes the dependency between the input and the label. Besides, the MI usually has larger magnitude than the conditional label entropy in classification tasks, making it less to be ignored by the model and affected by the numerical precisions. To see this, we can consider an extreme case where the model gives uniform distribution for the label when an arbitrary example is fed to the model. In this case, the conditional entropy of the label acheives the maximum, but what the model learned can be meaningless since it cannot accurately characterize the dependency between the input and the label. 

\begin{figure}[!htb]
	\centering
	\includegraphics[width=0.5\linewidth]{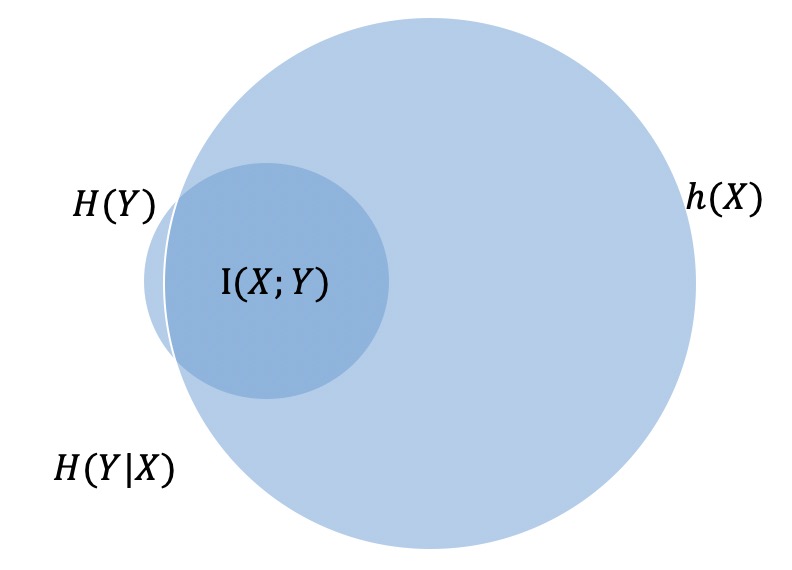}
	\caption{Information quantities involved in joint data distribution $p_{X,Y}$. The big circle represents the differential entropy $h(X)$ or information contained in input random variable $X$, and the small circle represents the entropy $H(Y)$ or information contained in the label random variable $Y$. The overlapped area corresponds to the mutual information or information shared by the $X$ and the $Y$, and the area in the small circle after excluding the overlapped area corresponds to the information remained in the $Y$ after revealing the $X$.}\label{Fig:DatasetInformationAmount}
\end{figure}

Under our mutual information learning (MIL) framework, we design a new loss for training the DNN classifiers, and the loss itself originates from a representation of mutual information of the dataset generating distribution, and we propose new pipelines associated with the proposed framework. We will refer to this loss as mutual information learning loss (milLoss), and the traditional cross entropy loss as conditional entropy learning loss (celLoss) since it essentially learning the conditional entropy of the dataset generating distribution. When reformulated as a regularized form of the celLoss, the milLoss can be interpreted as a weighted sum of the conditional label entropy loss and the label entropy loss. In the regularized form, the milLoss encourages the model not only to accurately learn the conditional entropy of the label when an input is given, but also to precisely learn the entropy of the label. This is distinctly different from the label smoothing regularization (LSR), confidence penalty (CP), label correction (LC) etc which consider the conditional entropy of the label \cite{szegedy_going_2015,wang_proselflc_2021,meister_generalized_2020}.

For the proposed MIL framework, we establish an error probability lower bound for arbitrary classification models in terms of the mutual information (MI) associated with the data generating distribution by using Fano's inequality \cite{cover_elements_2012} and an upper bound of error probability entropy developed by Yi et al. \cite{yi_derivation_2020}. These bounds explicitly characterize how the performance of the classification models trained from a dataset is connected to the mutual information contained in it, i.e., the dependency between the input and the output. Compared to Fano's inequality,  our bound is tighter due to a carefully designed relaxation. Our error probability bound is applicable for arbitrary distribution and arbitrary learning algorithms. We also consider a concrete binary classification problem in $\mbR^n$, and derive both lower and an upper bounds of the mutual information associated with the data distribution. Besides, we derive an error probability bound for this binary classification data model. We also establish theoretical guarantees for training models to accurately learn the mutual information associated with arbitrary dataset generating distribution, and we give the sample complexity for achieving this in practice. The keys for establishing these are the universal approximation properties of neural networks and the concentration of measure phenomenon from statistics \cite{boucheron_concentration_2013,hornik_multilyaer_1989,belghazi_mine_2018,boucheron_concentration_2013,shalev-shwartz_understanding_2014,mohri_foundations_2018}. We conduct extensive experiments to validate our theoretical analysis by using classification tasks on benchmark dataset such as MNIST and CIFAR-10\footnote{https://www.cs.toronto.edu/~kriz/cifar.html}. The empirical results show that the proposed MIL framework can achieve far superior generalization performance than the existing conditional entropy learning approach and its variants \cite{szegedy_going_2015,pereyra_regularizing_2017,wang_proselflc_2021}. 

\subsection{Related Works}

Our work is highly related to the following several works, but there are distinct differentiations between our work and them \cite{yi_trust_2019,yi_derivation_2020,yi_towards_2021}. First of all, in 2019, Yi et al. formulated the classification problem under the encoding-encoding paradigm by assuming there is an observation synthesis process which can generate observations or inputs for a given label, and the classification task is simply about inferring the label from the observation. Under this framework, they give theoretical characterizations of the robustness of machine learning models to different types of perturbations \cite{yi_trust_2019}. They also characterized the limit of an arbitrary adversarial attacking algorithm for an arbitrary machine learning system for answering the question of what is the best attack that an adversary can acheive and what the optimal adversarial attacks look like \cite{yi_derivation_2020}. We continue to investigate the classification tasks using encoding-decoding paradigm. Though the works by Yi et al., are the major motivations for this work, the goal of this work is completely different.  We investigate the learning of  classification models without presence of adversaries, and the connection between the models' generalization performance and the mutual information of the dataset generating distribution \cite{yi_trust_2019,yi_derivation_2020,yi_towards_2021}. 

Our work is also highly related to that by \cite{mcallester_formal_2020} where the authors proposed a difference-of-entropy (doe) formulation for estimating the mutual information of a distribution from empirical observations sampled from it \cite{mcallester_formal_2020}. In their formulation, two different neural networks are trained jointly to learn the conditional entropy and the entropy, respectively. In this paper, our goal is to train DNN classifiers with good generalization performances rather than estimating the mutual information. We use a formulation similar to the doe in \cite{mcallester_formal_2020}, but we also consider the scenario where we use a single neural network to learn both of them, and the new scenario follows the weight sharing ideas in deep learning practice \cite{goodfellow_deep_2016}. Besides, we prove that the existing cross entropy loss minimization approach is essentially learning the conditional entropy, and establish error probability lower bound in terms of the mutual information. In addition, we give sample complexity for the accurate learning of the mutual information from empirical samples, and derived the lower bound and upper bounds of the mutual information in a binary classification data model. 

Our work is also related to \cite{xu_information-theoretic_2017} where Xu and Raginsky investigated the generalization performance from an information-theoretic viewpoint, and they derived upper bounds for generalization error (essentially equivalent to error probability) in terms of mutual information between the dataset and the model set (the model is also assumed to follow a distribution). Though we also derived the error porbability lower bound in terms of the mutual information, the mutual information we consider in this paper is associated with the dataset only, i.e., the mutual information between the data input and the data label while the mutual information considered by Xu and Raginsky is from the joint distribution of data input, data output, and the model itself \cite{xu_information-theoretic_2017}.

\begin{table}[h]
	\centering
	\small
	\begin{tabular}{|l | c |}
		\hline 
		Loss obj. & Formula \\
		\hline 
		celLoss & $H(P_{Y|X}, Q_{Y|X})$\\
		\hline 
		celLoss$+$LSR & $(1-\epsilon)H(P_{Y|X}, Q_{Y|X}) + \epsilon H(U_{Y|X}, Q_{Y|X})$\\
		\hline celLoss$+$CP & $(1-\epsilon)H(P_{Y|X}, Q_{Y|X}) - \epsilon H(Q_{Y|X}, Q_{Y|X})$\\
		\hline 
		celLoss$+$LC & $(1-\epsilon)H(P_{Y|X}, Q_{Y|X}) + \epsilon H(Q_{Y|X}, Q_{Y|X})$\\
		\hline 
		milLoss (proposed) & $H(P_{Y|X}, Q_{Y|X}) + \lambda_{ent} H(P_{Y}, Q_{Y})$\\
		\hline 
	\end{tabular}
	\caption{Comparisons among celLoss, LSR+celLoss, CP+celLoss, LC+celLoss, and regularized form of milLoss. The $P_{Y|X}$ and $P_Y$ are the conditional and marginal label distribution, respectively. The $Q_{Y|X}$ and the $U_{Y|X}$ are the predicted conditional label distribution and the uniform conditional label distribution.}\label{Tab:LossFunctionComparisons}
\end{table}

Another line of works which is highly related to this paper includes \cite{szegedy_rethinking_2016,meister_generalized_2020,pereyra_regularizing_2017,wang_proselflc_2021} where the regularized forms of celLoss are considered such as the LSR, CP, and LC. See the difference between these loss functions and the regularized form of our proposed one in Table \ref{Tab:LossFunctionComparisons}. The key assumption of the LSR and the CP is that the one-hot label is too confident, and a less confident prediction should be preferred. This is achieved by encouraging the prediction to be also close to a uniform distribution in LSR, or to have high entropy in CP \cite{szegedy_going_2015,pereyra_regularizing_2017}. The LC assumes the model will fit to the data distribution before overfitting to the noise during training, and the model should trust its prediction after certain stages during training. This is achieved by encouraging the model to have low-entropy or high-confidence prediction \cite{wang_proselflc_2021}. In both \cite{szegedy_rethinking_2016,pereyra_regularizing_2017,wang_proselflc_2021} and most of other related works, the regularizations still look at the conditional label distribution only while the regularization term in our formulation looks at the marginal label distribution.

\subsection{Contributions}

The contributions of this work are summarized as follows.
\begin{itemize}
	\item We show that the existing cross entropy loss minimization approach for training DNN essentially learns the conditional entropy, and we point out some of the fundamental limitations of this approach. These limitations motivate us to propose a new training paradigm via mutual information learning.
	
	\item For the proposed mutual information learning (MIL) framework, we give theoretical anaysis to answer several fundamental questions, i.e., how the error probability over the distribution is connected to the mutual information between the data input and the data label, and what the sample complexity is for accurately learning the mutual information and thus a classifiers with excellent generalization performance. For the formal, we derive a lower bound for the error probability in terms of the mutual information. For the later, we derive the sample complexity for learning the MI from empirical risk minimization. These results are applicable for arbitrary data distributions. To better appreciate the MIL framework, we consider a concrete binary classification data model, and derive bounds for the mutual information error probability associated with the data distribution.
	
	\item As a proof of concept, we conduct extensive experiments with training DNN classifiers on several benchmark datasets to validate our theory, and the empirical results show that the proposed MIL can improve greatly the generalization performance of DNN classifiers.
	
\end{itemize}

This paper is organized as follows. In Section \ref{Sec:Preliminaries}, we present necessary definitions and relations which will be used in later sections. In Section \ref{Sec:CrossEntropyLossTrainingAsConditionalEntropyEstimation}, we show that the existing cross entropy loss minimization is equivalent to learning the conditional entropy of the label, and we also give upper bound of estimating entropy or conditional entropy from empirical data samples. We present the mutual information learning (MIL) framework in Section \ref{Sec:TrainingClassifiersViaMutualInformationEstimation}, derive the error probability lower bound in terms of the mutual information in Section \ref{Sec:ErrorProbLBound}, and establish the guarantees for learning the mutual information from empirical data samples in Section \ref{Sec:MILGuarantees}. In Section \ref{Sec:BinaryClassificationDataModel}, we consider a binary classification data model, and derive bounds of the mutual information and the error probability of the data distribution. We present experimental results in Section \ref{Sec:ExperimentalResults}, and conclude this paper in Section \ref{Sec:Conclusions}.

{\bf Notations:} We use $X$ to represent a random variable or vector, and its dimensions should be determined in the specific context.  We denote by $P(X)$ or $P_X$ the probability mass function of $X$ if $X$ is a discrete random variable or vector, and by $p(X)$ or $p_X$ the probability density function of $X$ if $X$ is continuous. Without loss of generality, we will refer to both as probability distribution. The joint distribution of a continuou radnom variable $X$ and discrete random variable $Y$ will be denoted by $p_{X,Y}$ or $p(X,Y)$. For the distribution $P_X$ of a discrete random variable $X$ with $N$ realizations, we will alternatively use it as a vector representation $P\in[0,1]^N$. Similarly, for the joint distribution $P_{X,Y}$ of discrete random variable $X$ with $N$ realizations and discrete random variable $Y$ with $C$ realizations, the $P$ will be alternatively denoted as a matrix in $[0,1]^{N\times C}$. We denote by $p_{X|U}$ or $P_{X|U}$ conditional distribution of $X$ given $U$. The entropy (or the differential entropy) of a discrete (or continuous) random variable $X$ is denoted by $H(X)$ (or $h(X)$). Unless specified, all the entropy (or differential entropy), and mutual information quantities are in nats. When it is necessary, we also use subscript to emphasize the distribution with respect to which these quantities are computed or simply for avoiding confusions, e.g., $I_{p_{X,Y}}(X;Y)$ means that the mutual information between $X,Y$ is calculated under distribution $p_{X,Y}$. For a distribution $Q_{Y}$ parameterized by $\theta$, we will denote it alternatively by $Q_{Y;\theta}$ and $Q_{Y}(\cdot;\theta)$. The probability mass (or the probability density) at a realization $y$ of discrete (continuous) random variable $Y$ will be denoted by $P_Y(Y)$ and $P_Y(Y=y)$ (or $p_Y(y)$ and $p_Y(Y=y)$) alternatively.

We use $[B]$ where $B$ is a positive integer to denote a set $\{1,2,\cdots,B\}$. We denote by $\mcS:=\{(x_i,y_i)\}_{i=1}^N \subset\mbR^n\times[C]$ a set of data samples where $x_i\in\mbR^n$ is the input (or feature) and $y_i$ is the corresponding output (or label, or prediction, or target), and by $\mcS_x:=\{x_i\}_{i=1}^N$ a set of features or input by dropping the labels. In this paper, we assume $\|x_i - x_j\| >0, \forall i\neq j$. The $\mcS|_x$ denotes a subset of $\mcS$ with the input being $x$, i.e., $\mcS|_x :=\{(x_i,y_i)\in\mcS: x_i=x\}$. We denote by $\bm{0}$ a vector or a matrix whose elements are all zero, and by $\bm{1}$ a vector or a matrix whose elements are 1. We use $I_n\in\mbR^{n\times n}$ to denote an identity matrix. The $|Q|$ denotes the determinant of a square matrix $Q$, and the cardinality of a set $\mcS$ is denoted by $\card(\mcS)$ or $|\mcS|$. All the proofs can be found in the Appendix.

\section{Preliminaries}\label{Sec:Preliminaries}

We consider the classification tasks in machine learning, i.e., given a dataset $\mcS:=\{(x_i,y_i)\}_{i=1}^N $ drawn according to a joint data distribution $p_{X,Y}$ where $(x_i,y_i)\in\mbR^n\times[C]$ with $C$ being a positive integer, we want to learn a mapping $M:\mbR^n\to[C]$ from $\mcS$ such that $M$ can classify an unseen sample $x'\sim p_X$ in $\mbR^n$ to the correct class. The mutual information $I(X,Y)$ of the input $X$ and the label $Y$ under the joint distribution $p_{X,Y}$ in this setup can then be defined as
\begin{align}\label{Eq:MutualInfoDefn}
	I(X;Y)
	&:= \int_{x\in\mbR^n} \sum_{y\in [C]} p(x,y) \log\left( \frac{p(x,y)}{p(x) P(y)} \right) dx,
\end{align} 
where we also define $p(x,y):=p(x)P(y|x)$ and $p(x,y):=P(y)p(x|y), \forall x\in\mbR^n, y\in[C]$. Similar to the Shannon information theory framework, we define several other information-theoretic quantities as in Definition \ref{Definition:Ent}, \ref{Definition:ConditionalEnt}, \ref{Definition:CrossEntropy}, and \ref{Definition:ConditionalCrossEntropy}. The definitions of entropy, differential entropy, and cross entropy are exactly the same as those in Shannon information theory, and we present them for self-containedness.

\begin{definition}\label{Definition:Ent}
	(Differential Entropy and Entropy) For a continuous random vector $X\in\mbR^n$ with distribution $p_X$, we define its differential entropy $h(X)$
	as
	\begin{align}\label{Defn:DifferentialEntropy}
		h(X): = - \int_{x\in\mbR^n} p(x) \log(p(x)) dx.
	\end{align}
For a discrete random variable $Y\in[C]$ with distribution $P_Y$, we define its entropy as
\begin{align}\label{Defn:Entropy}
	H(Y):= -\sum_{y\in[C]} P(y) \log(P(y)).
\end{align}
\end{definition}

\begin{definition}\label{Definition:ConditionalEnt}
	(Conditional Differential Entropy and Conditional Entropy) For a joint distribution $p_{X,Y}$ of a continuous random vector $X\in\mbR^n$ and discrete random variable $Y\in[C]$, we define the conditional differential entropy $h(X|Y)$ as 
	\begin{align}\label{Defn:ConditionalDifferentialEnt}
		h(X|Y):=  \sum_{y\in[C]} P(y) \int_{x\in\mbR^n} p(x|y) \log\left( \frac{1}{p(x|y)} \right)dx,
	\end{align}
and the instance conditional differential entropy at realization $y$ for $Y$ as
\begin{align}\label{Defn:InstcConditionalDifferentialEnt}
	h(X|y):= \int_{\mbR^n} p(x|y) \log\left( \frac{1}{p(x|y)} \right)dx.
\end{align}

We define the conditional entropy $H(Y|X)$ as
\begin{align}\label{Defn:ConditionalEnt}
	H(Y|X):=  \int_{x\in\mbR^n}  p(x)   \sum_{y\in[C]} P(y|x) \log\left( \frac{1}{P(y|x)} \right)dx, 
\end{align}
and the instance conditional entropy at realization $x$ of $X$ as 
\begin{align}\label{Defn:InstcConditionalEnt}
	H(Y|x):= \sum_{y\in[C]} P(y|x) \log\left( \frac{1}{P(y|x)} \right).
\end{align}
\end{definition}

\begin{definition}\label{Definition:CrossEntropy}
	(Cross Entropy) We define the cross entropy between two continuous distributions $p_X, q_X$ over the same continuous support set $\Omega$ as
	 \begin{align} 
		h(p_X,q_X):=\int_{x\in\Omega} p_X(x) \log\left( \frac{1}{q_X(x)} \right) dx.
	\end{align} 
We define the cross entropy between discrete distributions $P_Y, Q_Y$ over the same discrete support set $\Omega$ as 
\begin{align} 
	H(P_Y,Q_Y):=\sum_{y\in\Omega} P_Y(y) \log\left( \frac{1}{Q_Y(y)} \right).
\end{align}
\end{definition}

\begin{definition}\label{Definition:ConditionalCrossEntropy}
	(Conditional Cross Entropy) For two joint distributions $p_{X,Y}$ and $q_{X,Y}$ of a continuous random vector $X\in\mbR^n$ and discrete random variable $Y\in[C]$, we define the conditional cross entropy $H(P_{Y|X}, Q_{Y|X})$ as 
	\begin{align}\label{Defn:ConditionalCrossEnt}
		H(P_{Y|X},Q_{Y|X}):=
		\int_{x\in\mbR^n} p_X(x) \sum_{y\in[C]} P_{Y|X}(y|x) \log\left( \frac{1}{Q_{Y|X}(y|x)} \right)dx,
	\end{align}
and the conditional cross entropy $H(P_{X|Y}, Q_{X|Y})$ as 
\begin{align}\label{Defn:ConditionalCrossEnt2}
	h(p_{X|Y}, q_{X|Y}):= \sum_{y\in[C]} P_Y(y)  \int_{x\in\mbR^n} p_{X|Y}(x|y) \log\left( \frac{1}{q_{X|Y(x|y)}} \right)dx.
\end{align}
\end{definition}

The proposed concept of conditional cross entropy will be used to derive a new formulation for training classifiers. The connections among these information-theoretic quantities are presented in Theorem \ref{Thm:ITQuantitiesConnections}. Theorem 2.1 shows that for the joint distribution of a continous random variable and a discrete random variable, the relations among the mutual information, the entropy, and the conditional entropy are are exactly the same as those in the case where the joint distribution is over two continuous random variables or over two discrete random variables. The proof of Theorem \ref{Thm:ITQuantitiesConnections} can be found in the Appendix.

\begin{thm}\label{Thm:ITQuantitiesConnections}
	(Connections among different information-theoretic quantities) For a joint distribution $p_{X,Y}$ over a continous random vector $X\in\mbR^n$ and a discrete random variable $Y\in[C]$ where $C$ is a positive integer constant, with the definition of mutual information in \eqref{Eq:MutualInfoDefn} and Defintion \ref{Definition:Ent}-\ref{Definition:ConditionalCrossEntropy}, we have
	\begin{align}\label{Eq:ITQuantitiesConnections}
		I(X,Y) = H(Y) - H(Y|X), \nonumber\\
		I(X,Y) = h(X) - h(X|Y).
	\end{align}
\end{thm}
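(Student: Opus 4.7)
The plan is to derive both identities directly from the definition in \eqref{Eq:MutualInfoDefn}, using the two factorizations $p(x,y)=p(x)P(y|x)=P(y)p(x|y)$ of the joint distribution together with the marginalization identities $\int_{\mbR^n} p(x,y)\,dx = P(y)$ and $\sum_{y\in[C]} p(x,y) = p(x)$. The symmetry between the two identities means that the argument for the second mirrors the first, so the core of the proof reduces to one short algebraic manipulation carried out in two ways.

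For $I(X;Y) = H(Y) - H(Y|X)$, I would substitute $p(x,y)=p(x)P(y|x)$ inside the logarithm of \eqref{Eq:MutualInfoDefn} to reduce the ratio to $P(y|x)/P(y)$, then split the log into $\log(1/P(y)) - \log(1/P(y|x))$. The $-\log(1/P(y|x))$ piece reproduces the defining integrand of $H(Y|X)$ in \eqref{Defn:ConditionalEnt} up to sign. For the $\log(1/P(y))$ piece, I would exchange the sum and integral (Tonelli, since the integrand is nonnegative) and use $\int_{\mbR^n} p(x)P(y|x)\,dx = P(y)$ to collapse the outer $x$-integral, leaving $\sum_{y\in[C]} P(y)\log(1/P(y)) = H(Y)$ as in \eqref{Defn:Entropy}. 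Combining the two pieces gives the first identity. For $I(X;Y)=h(X)-h(X|Y)$ I would instead substitute $p(x,y)=P(y)p(x|y)$, reduce the ratio to $p(x|y)/p(x)$, and split analogously: the $-\log(1/p(x|y))$ piece matches the integrand of $h(X|Y)$ in \eqref{Defn:ConditionalDifferentialEnt}, while the $\log(1/p(x))$ piece collapses via $\sum_{y\in[C]} P(y)p(x|y) = p(x)$ to $-\int p(x)\log p(x)\,dx = h(X)$ as in \eqref{Defn:DifferentialEntropy}.

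The main obstacle is purely technical rather than conceptual: I need to justify the sum--integral interchanges and the separation of $\log$ into positive and negative parts in a mixed continuous--discrete setting. Provided the quantities $I(X;Y)$, $H(Y)$, $H(Y|X)$, $h(X)$, $h(X|Y)$ are finite (or even merely well-defined as extended reals with unambiguous signs), Tonelli's theorem applied to the positive part $\log^{+}$ and dominated convergence applied to the negative part $\log^{-}$ handle the interchanges, since all relevant integrands are measurable functions of $(x,y)$ with respect to the product of Lebesgue measure on $\mbR^n$ and counting measure on $[C]$. Modulo this standard measure-theoretic bookkeeping, the proof collapses to a couple of lines of algebra per identity.
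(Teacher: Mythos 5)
Your proposal is correct and takes essentially the same route as the paper's own proof: both start from the definition \eqref{Eq:MutualInfoDefn}, split the logarithm of the ratio using the two factorizations $p(x,y)=p(x)P(y|x)$ and $p(x,y)=P(y)p(x|y)$, and collapse the marginal term by summing or integrating out the other variable, yielding $H(Y)-H(Y|X)$ and $h(X)-h(X|Y)$ respectively. The measure-theoretic bookkeeping you flag (Tonelli on the positive part, etc.) is extra care the paper does not bother with, since it interchanges the sum and integral without comment.
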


\section{Cross Entropy Loss Minimization as Conditional Entropy Learning}\label{Sec:CrossEntropyLossTrainingAsConditionalEntropyEstimation}

The common practice in the machine/deep learning commnunity separates the training or learning process and the decision or inference process, i.e., by first learning a conditional probability $Q_{Y|X}(y|x; \theta_{Y|X})\in[0,1]$of $Y$ given a realization $x$ of $X$, and then making decisions about the labels via checking which class achieves the highest probability. The $\theta_{Y|X}$ denotes the parameters associated with the function $Q_{Y|X}(\cdot;\theta_{Y|X})$. The former process is achieved by minimizing the cross entropy loss between an empirical conditional distribution $\hat{P}_{Y|X}$ from data samples and the estimated conditional distribution $Q_{Y|X}(\cdot;\theta_{Y|X})$, i.e.,
\begin{align}\label{Eq:CondEntLoss}
	\min_{\theta_{Y|X}} \frac{1}{N} \times \sum_{(x,y)\in\mcS} \left(\sum_{ c\in[C]} 
	\left( \hat{P}_{Y|X}(c|x) \log\left( \frac{1}{ Q_{Y|X}(c|x; \theta_{Y|X})} \right) \right) \right),
\end{align}
while the inference process is then achieved via 
\begin{align}\label{Eq:DecideLabelFromConditionalProbability}
	\hat{y}_i = \arg\max_{c\in[C]} \left[Q_{Y|X}(x_i; \theta_{Y|X})\right]_c.
\end{align} 
Thus, the mapping $M:\mbR^n\to[C]$ is a composite function of the probability prediction function $Q_{Y|X}(\cdot;\theta_{Y|X})$ and the maximum probability inference function $\arg\max$.

The empirical distribution $\hat{P}_{Y|X}$ is usually affected by the data collection process and the encoding methods for labels. For example, in image recognition tasks, a single image can have multiple objects, but it is at the human annotators' discretion about which label we want to use. Even for this particular label, different label encoding methods can give different label representations. When the one-hot representation is used for encoding labels, the empirical distribution $\hat{P}_{Y|X}$ will be 
\begin{align}\label{Eq:EmpiricalLabelConditionalDistribution}
	\hat{P}_{Y|X}(y|x_i) = 
	\begin{cases}
		1, {\rm if\ } y=y_i,\\
		0, {\rm o.w.}
	\end{cases}
\end{align}
which means that only the target class gets all the probability mass while all the other classes have zero probability mass. When the label-smoothing regularization is used, the $\hat{P}_{Y|X}$ is defined as
\begin{align}\label{Eq:LabSmthReg}
	\hat{P}_{Y|X}(y|x_i) =
	\begin{cases}
		1-\epsilon, \text{ if } y=y_i,\\
		\frac{\epsilon}{C-1}, \text{o.w.}
	\end{cases}
\end{align}
where $\epsilon>0$ is a constant \cite{szegedy_rethinking_2016}. With the one-hot encoding for labels, the cross entropy minimization \eqref{Eq:CondEntLoss} can be simplified as
\begin{align}\label{Eq:CondEntLoss_OneHot}
	\min_{\theta_{Y|X}} \frac{1}{N} \times \sum_{(x,y)\in\mcS} \left( 
	\log\left( \frac{1}{ Q_{Y|X}(y|x; \theta_{Y|X})}  \right) \right)
\end{align}
The $Q_{Y|X}(\cdot; \theta_{Y|X})$ can be interpreted as estimated conditional distribution of $Y$ when the realization $x$ of the continuous random variable $X$ is given.

The $\sum_{ c\in[C]} 
\hat{P}_{Y|X}(c|x) \log\left( \frac{1}{ Q_{Y|X}(c|x; \theta_{Y|X})} \right)$ in \eqref{Eq:CondEntLoss} can be interpreted as an estimate of the instance conditional entropy of the truth data distribution $P_{Y|X}(Y|x)$ conditioning on the realization $x$ of $X$, i.e., 
\begin{align}\label{Eq:InstanceConditionalEntropy}
	H(Y|x) \approx \sum_{ c\in[C]} 
	\hat{P}_{Y|X}(c|x) \log\left( \frac{1}{ Q_{Y|X}(c|x;\theta_{Y|X})} \right).
\end{align}
In Theorem \ref{Thm:EntEstViaCrossEnt}, we will show that this is indeed the case under certain conditions, and the cross entropy minimization in \eqref{Eq:CondEntLoss} learns the conditional entropy of the truth data distribution $P_{Y|X}$. Thus, we will refer to \eqref{Eq:CondEntLoss} as conditional entropy learning (CEL) and the corresponding loss function as CEL loss (celLoss). In the later sections, we will propose a mutual information learning (MIL) framework, and refer to the corresponding loss as MIL loss (milLoss). 

Similar to Shannon information theory, we define the $\hat{P}_X$ associated with the input data $\mcS_x:=\{x_i\}_{i=1}^N$ of the data set $\mcS:=\{(x_i,y_i)\}_{i=1}^N$ drawn from distribution $p_{X,Y}$ as 
\begin{align}\label{Eq:XType}
	\hat{P}(X=x_i) = \frac{\left|\{x \in \mcS_x: x = x_i\} \right|}{N} \in[0,1], \forall i \in[N],
\end{align}
where we define $[N]:=\{1,2,\cdots, N\}$. Since we assume $\|x_i-x_j\|>0, \forall i\neq j$, we have $\hat{P}(X=x_i)=\frac{1}{N}$. The objective function in \eqref{Eq:CondEntLoss} becomes an estimate of the conditional entropy $H(Y|X)$ with respect to $P_{Y|X}$, i.e., 
\begin{align}
	 H(Y|X)
	 \approx \sum_{(x,y)\in\mcS} 
	\left( \hat{P}_X(x) \hat{P}_{Y|X}(y|x) \log\left( \frac{1}{ \left[Q_{Y|X}(x;\theta^*_{Y|X})\right]_y} \right) \right),
\end{align}
where $Q_{Y|X}(\cdot;\theta^*_{Y|X})$ is an optimal conditional distribution determined by $\theta_{Y|X}$. In Theorem \ref{Thm:EntEstViaCrossEnt} and its implications, we show that the cross entropy minimization in \eqref{Eq:CondEntLoss} for training classifiers learns the conditional entropy when $Q_{Y|X}(\cdot|\theta_{Y|X})$ is the solution to an optimization problem.

\begin{thm}\label{Thm:EntEstViaCrossEnt}
	(Cross Entropy Minimization as Entropy Learning) For an arbitrary discrete distribution $P_{Y}$ in $[C]$, we have 
	\begin{align}
		H(Y) \leq \inf_{Q_Y} H(P_Y,Q_Y),
	\end{align}
	where $Q_Y$ is a distribution of $Y$, and the equality holds if and only if $P_Y=Q_Y$. When a set of $N$ data points $\mcS:=\{y_i\}_{i=1}^N$ drawn independently from $P_{Y}$ is given, by defining $R(y):= \frac{P_Y(y)}{\hat{P}_Y(y)}$ where $\hat{P}_Y$ is the empirical distribution associated with $\{y_i\}_{i=1}^N$, we have 
	\begin{align}
		H(Y) \leq \inf_{Q^g_Y} H(\hat{P}_Y^g, Q_Y^g),
	\end{align}
	where $\hat{P}_Y^g$ is defined as
	\begin{align}
		\hat{P}_Y^g(y) : = \hat{P}_Y(y) R(y), \forall {y\in[C]},
	\end{align}
	and $Q_Y^g$ is defined as 
	\begin{align}
		Q_Y^g(y) = Q_Y(y)R(y), \forall {y\in[C]},
	\end{align}
	with $Q_Y$ being a distribution of $Y$. The inequality holds if and only if $P_Y = \hat{P}_Y = Q_Y$.
\end{thm}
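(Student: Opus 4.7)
The plan is to reduce both assertions to the classical Gibbs' inequality: for any two probability mass functions $P, Q$ on a common discrete support, $D(P \| Q) = \sum_y P(y) \log(P(y)/Q(y)) \geq 0$ with equality iff $P = Q$. This in turn follows from a one-line application of Jensen's inequality to the concave function $\log$, and there is no need to reprove it from scratch.

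For the first inequality I would simply rewrite the gap as a KL divergence,
\begin{align*}
H(P_Y, Q_Y) - H(Y) = \sum_{y \in [C]} P_Y(y)\log\frac{1}{Q_Y(y)} - \sum_{y \in [C]} P_Y(y)\log\frac{1}{P_Y(y)} = D(P_Y \| Q_Y) \geq 0,
\end{align*}
which yields $H(Y) \leq H(P_Y, Q_Y)$ uniformly in $Q_Y$, and hence after taking an infimum over $Q_Y$. Equality throughout holds iff $D(P_Y \| Q_Y) = 0$, i.e., iff $Q_Y = P_Y$, recovering the stated condition.

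For the second inequality the key observation is purely algebraic. Since $R(y) = P_Y(y)/\hat{P}_Y(y)$, the definition gives
\begin{align*}
\hat{P}_Y^g(y) = \hat{P}_Y(y) R(y) = P_Y(y), \qquad \forall y \in [C],
\end{align*}
so $\hat{P}_Y^g$ collapses exactly onto the true distribution $P_Y$, regardless of the particular draw $\{y_i\}_{i=1}^{N}$. Consequently $H(\hat{P}_Y^g, Q_Y^g) = H(P_Y, Q_Y^g)$, and applying part (1) with $Q_Y$ replaced by the distribution $Q_Y^g$ yields $H(Y) \leq H(\hat{P}_Y^g, Q_Y^g)$. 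Taking the infimum over the admissible parameterizing $Q_Y$ delivers the desired bound.

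The only delicate point, and what I expect to be the main obstacle, is pinning down the equality condition in the stated triple form $P_Y = \hat{P}_Y = Q_Y$. From part (1), equality in $H(Y) \leq H(P_Y, Q_Y^g)$ forces $Q_Y^g = P_Y$, which rearranged means $Q_Y = P_Y/R = \hat{P}_Y$. For $Q_Y^g = Q_Y R$ to be a bona fide probability distribution summing to one in a way consistent with $\hat{P}_Y^g = P_Y$, one additionally needs $R \equiv 1$, i.e., $\hat{P}_Y = P_Y$; chained with $Q_Y = \hat{P}_Y$ this forces $P_Y = \hat{P}_Y = Q_Y$. So the substantive work is not analytic but bookkeeping: carefully tracking which reweighted objects are legitimate probability distributions versus free parameters, and ensuring that the admissibility constraint forces the equality case to collapse to the claimed triple identity.
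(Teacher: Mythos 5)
Your first inequality is handled exactly as in the paper: write $H(P_Y,Q_Y)=H(Y)+D_{KL}(P_Y\Vert Q_Y)$ and invoke nonnegativity of the KL divergence, with equality iff $Q_Y=P_Y$. That part is correct and identical in substance to the paper's proof.

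The second part has a genuine gap. Your observation that $\hat{P}_Y^g=P_Y$ is correct, but the key step, ``applying part (1) with $Q_Y$ replaced by the distribution $Q_Y^g$,'' is not legitimate: $Q_Y^g$ is in general \emph{not} a probability distribution (the paper itself warns that $\hat{P}_Y^g,Q_Y^g$ ``may not be valid distributions''), and Gibbs' inequality $\sum_y P(y)\log(P(y)/Q(y))\geq 0$ requires the total mass of $Q$ to be at most one. Worse, taken literally with the statement's definition $Q_Y^g=Q_Y R$, the inequality you are trying to establish is false: since $H(P_Y,Q_YR)=H(P_Y,Q_Y)-D_{KL}(P_Y\Vert\hat{P}_Y)$, one gets $\inf_{Q_Y}H(\hat{P}_Y^g,Q_Y^g)=H(Y)-D_{KL}(P_Y\Vert\hat{P}_Y)<H(Y)$ whenever $\hat{P}_Y\neq P_Y$ (take $C=2$, $P_Y=(1/2,1/2)$, $\hat{P}_Y=(9/10,1/10)$, $Q_Y=P_Y$). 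The paper's own proof actually uses $Q_Y^g(y)=Q_Y(y)/R(y)$ (note the discrepancy with the theorem statement), for which $H(\hat{P}_Y^g,Q_Y^g)=H(P_Y,Q_Y)+D_{KL}(P_Y\Vert\hat{P}_Y)\geq H(P_Y,Q_Y)\geq H(Y)$; the bound follows by chaining with part (1) and discarding a nonnegative KL term, not by treating $Q_Y^g$ as a distribution. Your equality bookkeeping also does not rescue the argument: if $Q_Y=\hat{P}_Y$ then $Q_Y^g=\hat{P}_YR=P_Y$ is a perfectly valid distribution even when $\hat{P}_Y\neq P_Y$, so nothing in your route forces $R\equiv 1$; in the paper's version the condition $P_Y=\hat{P}_Y=Q_Y$ falls out because both KL terms must vanish simultaneously.
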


The proof of Theorem \ref{Thm:EntEstViaCrossEnt} can be found in the Appendix. Theorem \ref{Thm:EntEstViaCrossEnt} shows that the entropy $H(Y)$ is upper bounded by the cross entropy $H(P_Y,Q_Y)$, and the calculation of $H(Y)$ can be achieved by finding $Q_Y$ to minimize $H(P_Y,Q_Y)$. Theorem \ref{Thm:EntEstViaCrossEnt} also tells us that the entropy $H(Y)$ can actually be estimated from the empirical distribution $\hat{P}_Y$ over the sample set by minimizing a generalized cross entropy $H(\hat{P}_Y^g,{Q}_Y^g)$. We want to emphasize that though we call $H(\hat{P}_Y^g,{Q}_Y^g)$ generalized cross entropy, it may not actually be an entropy since the $\hat{P}_Y^g,{Q}_Y^g$ may not be valid distributions. In practice, when we assume $R(y)=1, \forall y\in[C]$, and this gives 
\begin{align}\label{Eq:EntViaCrossEntCont2}
	\hat{H}(Y)
	:= \inf_{Q_Y} H(\hat{P}_Y,{Q}_Y)
\end{align}
which is equal to $H(Y)$ if and only if $P_Y(y) = \hat{P}_Y(y) = Q_Y(y)$. Theorem \ref{Thm:EntEstViaCrossEnt} essentially implies the possibility of learning entropy from empirical samples. 

In the classification tasks as we discussed in Section \ref{Sec:Preliminaries}, we can have similar formulation for $H(Y|X)$ 
\begin{align}\label{Eq:EntViaCrossEntCont3}
	{H}(Y|X)
	& \approx \inf_{Q_{Y|X}} H(\hat{P}_{Y|X},{Q}_{Y|X}) \nonumber \\
	& = \inf_{Q_{Y|X}} \sum_{(x,y)\in \mcS} \hat{p}_{X,Y}(x,y) \log\left( \frac{1}{Q_{Y|X}(y|x)} \right) \nonumber\\
	& = \inf_{Q_{Y|X}} \sum_{(x,y)\in \mcS} \hat{P}_{X}(x) \hat{P}_{Y|X}(y|x) \log\left( \frac{1}{Q_{Y|X}(y|x)} \right)\\
\end{align}
where $\hat{P}_{Y|X}$ is the empirical conditional distribution of $Y$ given $X$.
In \eqref{Eq:CondEntLoss}, the objective function in the cross entropy minimization is essentially $H(\hat{P}_{Y|X},{Q}_{Y|X})$ with ${Q}_{Y|X}$ parameterized by $\theta_{Y|X}$. This implies that if the $\hat{P}_{Y|X}$ and $\hat{P}_X$ are the same as the ground truth distributions, then the optimal objective function value in \eqref{Eq:CondEntLoss} will be the conditional entropy $H(Y|X)$, i.e., the commonly used cross entropy loss minimization is essentially learning the conditional entropy $H(Y|X)$. We will refer to these classifiers as conditional entropy learning classifiers (CELC, /selk/).

From Theorem \ref{Thm:ITQuantitiesConnections},  the mutual information (MI) can be learned via
\begin{align}\label{Eq:MutualInfoViaDoE_YEntMinusYCondEnt}
	I(X;Y)
	& = H(Y) - H(Y|X) \nonumber \\
	& \approx \inf_{Q_Y} H(\hat{P}_Y,Q_Y) - \inf_{Q_{Y|X}} H(\hat{P}_{Y|X},Q_{Y|X}),
\end{align}
where $\hat{P}_Y$ is the empirical distribution of $Y$ associated with the label components $\mcS_y:=\{y_i\}_{i=1}^N$ in $\mcS$, i.e.
\begin{align}\label{Eq:UType}
	\hat{P}_Y(Y=c) := \frac{\left| \{y\in\mcS_y: y=c\} \right|}{N}, c\in[C].
\end{align}

We want to point out that in \cite{mcallester_formal_2020}, McAllester and Stratos also proposed a formula similar to \eqref{Eq:MutualInfoViaDoE_YEntMinusYCondEnt} as an estimation for MI, i.e., 
estimating $h(X) = \inf_{q_X}H(p_X,q_X)$ via
\begin{align}\label{Eq:CrossEntEst}
	\hat{H}^N(\hat{P}_X,Q_X) = \inf_{Q_X} \left(- \frac{1}{N} \sum_{i=1}^N \log(Q_X(x_i)) \right),
\end{align}
but they did not quantify the relation between the cross entropy computed via the truth distribution and that computed via the empirical distribution. Besides, there are other fundamental differentiations between their formulation and the one we use for establishing the mutual information learning framework in later sections, e.g., we will use a single neural network to learn both the marginal distribution $Q_Y$ and the conditional distribution $Q_{Y|X}$ while McAllester and Stratos used two separate neural networks to acheive this. 

\section{Training Classifiers via Mutual Information Learning}\label{Sec:TrainingClassifiersViaMutualInformationEstimation}

In this section, we formally present a new framework for training classifiers, i.e., via mutual information learning instead of the conditional entropy learning in existing paradigm, and we refer to classifier trained in this way as mutual information learned classifier (MILC, /milk/). 

For the mutual information learning formulation in \eqref{Eq:MutualInfoViaDoE_YEntMinusYCondEnt}, we can parameterize $Q_Y$ and $Q_{Y|X}$ via a group of parameters $\theta_{Y|X}\in\mbR^{m}$, e.g., $Q_Y(Y; \theta_{Y|X})$ is completely determined by $\theta_{Y|X}$.  More specifically, we parameterize $Q_{Y|X}(Y|X;\theta_{Y|X})$ using $\theta_{Y|X}$, and then calculate the marginal estimation ${Q}_{Y}(Y;\theta_{Y|X})$ via
\begin{align}\label{Eq:MarginalDistributionFromLearnedJointDistribution}
	{Q}_Y(Y=y; \theta_{Y|X}) 
	&:= \sum_{x\in\mcS_x} \hat{P}_X(x) Q_{Y|X}(y|x;\theta_{Y|X})\\
	& = \frac{1}{N}\sum_{(x_i,y_i)\in\mcS} 1_{\{y\}}(y_i) Q_{Y|X}(y_i|x;\theta_{Y|X})
\end{align}
where $\mcS_x=\{x_i\}_{i=1}^N$ and $ 1_{A}(y)=1$ if $y$ is in set $A$, and 0 if otherwise. Thus, the mutual information has the following form
\begin{align}\label{Eq:MIAlterEstParametericMethod_MargDistFromLearnedJointDist}
	I(X;Y)
	\approx  \inf_{\theta_{Y|X}} 
	H(\hat{P}_Y,{Q}_Y(Y;{\theta_{Y|X}})) - 
	\inf_{\theta_{Y|X}}H(\hat{P}_{Y|X}, Q_{Y|X}(Y|X; \theta_{Y|X})),
\end{align}
which is a multi-object optimization problem \cite{miettinen_nonlinear_2012}. 

An equivalent form of \eqref{Eq:MIAlterEstParametericMethod_MargDistFromLearnedJointDist} can be a regularized form as follows
\begin{align}\label{Eq:MIAlterEstParametericMethodEquivalentRegluarizedForm}
	\inf_{\theta_{Y|X}} 
	\left( H(\hat{P}_{Y|X}, Q_{Y|X}(Y|X; \theta_{Y|X})) +  \lambda_{ent}H(\hat{P}_Y, Q_Y(Y; \theta_{Y|X})) \right),
\end{align}
where $\lambda_{ent}>0$ is a regularization hyperparameter. In \eqref{Eq:MIAlterEstParametericMethodEquivalentRegluarizedForm}, the $H(\hat{P}_{Y|X}, Q_{Y|X})$ essentially corresponds to the cross entropy loss in multi-class classification while $H(\hat{P}_Y, Q_Y)$ can be treated as a regularization term. The cross entropy term guides machine learning algorithms to learn accurate estimation of condition entropy $H(Y|X)$, and the regularization term encourages the model also to learn the label entropy $H(Y)$. Intuitively, such a mutual information learning goal can guide the model to learn more accurately the dependency between input $X$ and $Y$, thus better generalization performance. To the best of our knowledge, the label entropy has never been used to train machine learning systems by the community, and we are the first to propose mutual information learning (MIL) framework for training classifiers. Though similar ideas were proposed in previous works, what considered previously is essentially the conditional entropy instead of the entropy \cite{meister_generalized_2020}. Their goal is to increase the conditional entropy of the label when the the input is given while our formulation aims at accurately characterizing the dependency between the input and ouput. Besides, in previous work, the increase of conditional entropy of label is acheived by encouraging the predicted label conditional distribution to be close to a uniform label distribution. However, our formula encourages the model to give predictions which characterize the mutual information well. 

\begin{figure*}[!htb]
	\centering
	\begin{subfigure}[b]{\textwidth}
		\centering
		\includegraphics[width=\linewidth]{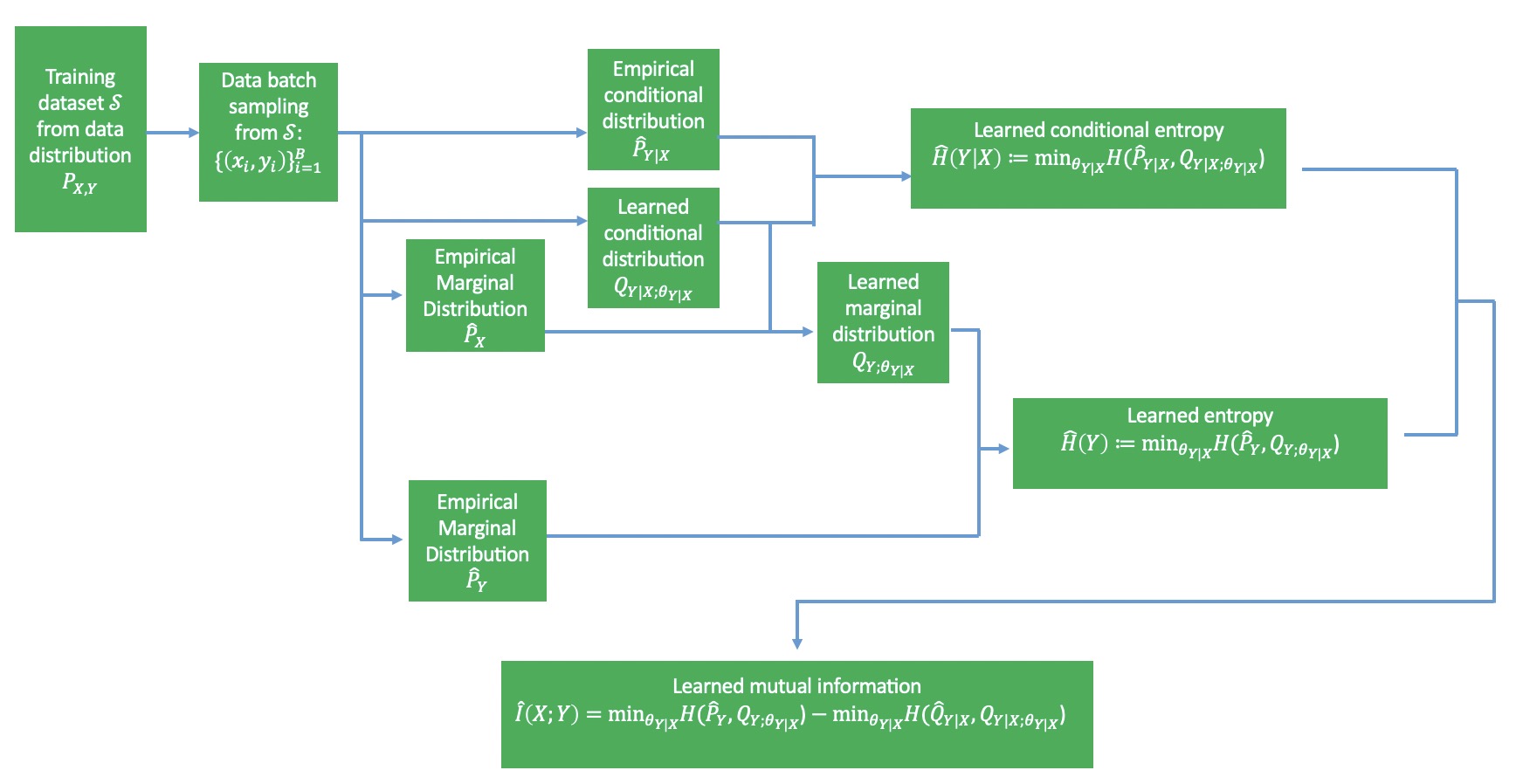}
		\caption{Training/learn pipeline}\label{Fig:MIML_TrainingPipelineV2}
	\end{subfigure}%
	\\
	\begin{subfigure}[b]{\textwidth}
		\centering
		\includegraphics[width=\linewidth]{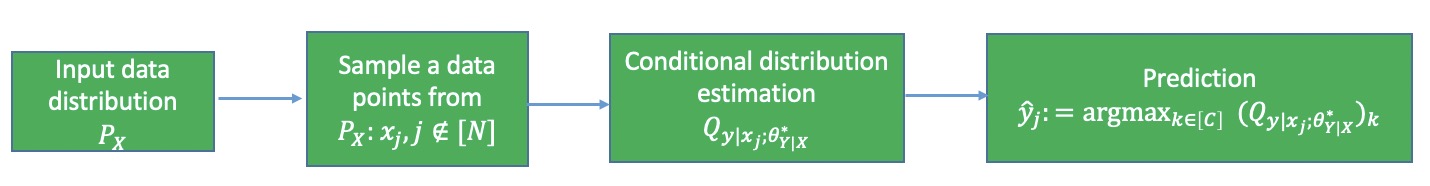}
		\caption{Inference/decision pipeline}\label{Fig:MIML_InferencePipeline}
	\end{subfigure}
	\caption{Mutual information learning classifiers (MILC): $\theta^*_{Y|X}$ is an optimal solution to \eqref{Eq:MIAlterEstParametericMethodEquivalentRegluarizedForm}}\label{Fig:MIML}
\end{figure*}

The overall DNN classifiers' trainning or learning pipeline under the MIL framework is presented in Figure \ref{Fig:MIML_TrainingPipelineV2}, and the corresponding decision or inference pipeline is presented in Figure \ref{Fig:MIML_InferencePipeline}. During the training process, we sample a data batch from training dataset $\mcS$ in each iteration, and then calculate the empirical marginal distributions $\hat{P}_X$, $\hat{P}_Y$ and $\hat{P}_{Y|X}$. The inputs $\{x_i\}_{i=1}^B$ will be fed to a machine learning system for it to learn the conditional distribution $Q_{Y|X;\theta_{Y|X}}$. We then combine the $Q_{Y|X;\theta_{Y|X}}$ with $\hat{P}_X$ and $\hat{P}_{Y|X}$ separately to calculate the learned marginal distribution $Q_{Y;\theta_{Y|X}}$ and the learned conditional entropy $\hat{H}(Y|X)$. The $Q_{Y;\theta_{Y|X}}$ is then combined with the $\hat{P}_Y$ to calculate the label entropy. We finally calculate the mutual information by subtracting conditional entropy from entropy. During inference, we feed an input $x$ to the model to get a conditional distribution, and the final class label prediction will be the one achieving the highest probability. 

One may want to estimate the MI via empirical distribution only, i.e.,
\begin{align}\label{Eq:MutualInfoNaive}
	I(X;Y) 
	& = H(Y) - H(Y|X) \nonumber\\
	& \approx \sum_{c\in[C]} \hat{P}_Y(Y=c)\log\left(\frac{1}{\hat{P}_Y(Y=c)}\right)
	-  \sum_{i\in[N]} \hat{P}_X(x_i) \times \sum_{c\in[C]} \hat{P}_{Y|X}(c|x_i) \times \log\left(\frac{1}{\hat{P}_{Y|X}(c|x_i)}\right), 
\end{align}
where 
\begin{align*}
	\hat{P}(Y=c):=\frac{\left|\{(x_i,y_i )\in\mcS: {y}_i=c\} \right|}{N}, \forall c\in [C],\\
	\hat{P}(X=x):=\frac{\left|\{(x_i,y_i )\in\mcS: {x}_i=x\} \right|}{N}, \forall x\in \mcS|_x,\\
	\hat{P}_{Y|X}(c|x_i):=\frac{ |\{(x,y)\in\mcS: x=x_i, y=c\}| }{|\{(x,y)\in\mcS: x=x_i\}|}.
\end{align*}
The problem is that the estimation of entropy or cross entropy from type only can be quite inaccurate, and this is formally presented in Theorem \ref{Thm:EntEstBound} where we give the error bound of estimating entropy of $Y$ via the empirical distribution $\hat{p}_Y$. However, in our previous formulations, we use the combination of the empirical distirbution and a learned distribution to avoid this.

\begin{thm}\label{Thm:EntEstBound}
	(Error Bound of Entropy Learning from Empirical Distribution) For two arbitrary distributions $P_Y$ and $\hat{P}_Y$ of a discrete random variable $Y$ over $[C]$, we have 
	\begin{align}
		\sum_{y\in[C]} R(y) \log\left( \frac{1}{{P}_Y(y)} \right)
		\leq 
		H_{P_Y}(Y) - H_{\hat{P}_Y}(Y) 
		\leq  \sum_{y\in[C]} R(y) \log\left( \frac{1}{\hat{P}_Y(y)} \right)
	\end{align}
	where $R(y) = P_Y(y) - \hat{P}_Y(y), \forall y \in[C]$, and $H_{P_Y}(Y)$ is the entropy of $Y$ calculated via $P_Y$. The equality holds if and only if $P_Y = \hat{P}_Y$. 
\end{thm}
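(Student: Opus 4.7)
The plan is to express the target difference $H_{P_Y}(Y) - H_{\hat{P}_Y}(Y)$ in two different equivalent forms by inserting and cancelling appropriate cross entropies, and then reduce each of the two inequalities to a single application of Gibbs' inequality (equivalently, the non-negativity of the KL divergence).

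For the upper bound, I would add and subtract the cross entropy $H(P_Y,\hat{P}_Y)=\sum_{y\in[C]} P_Y(y)\log(1/\hat{P}_Y(y))$. Using $R(y)=P_Y(y)-\hat{P}_Y(y)$, a direct algebraic identity gives
\begin{align*}
H(P_Y,\hat{P}_Y) - H_{\hat{P}_Y}(Y) = \sum_{y\in[C]} R(y)\log\!\left(\tfrac{1}{\hat{P}_Y(y)}\right).
\end{align*}
So the desired upper bound is equivalent to $H_{P_Y}(Y)\le H(P_Y,\hat{P}_Y)$, which is exactly Gibbs' inequality applied to the pair $(P_Y,\hat{P}_Y)$, i.e. $D(P_Y\|\hat{P}_Y)\ge 0$.

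For the lower bound I would do the symmetric manipulation, adding and subtracting the reversed cross entropy $H(\hat{P}_Y,P_Y)=\sum_{y\in[C]}\hat{P}_Y(y)\log(1/P_Y(y))$. The same bookkeeping yields
\begin{align*}
H_{P_Y}(Y) - H(\hat{P}_Y,P_Y) = \sum_{y\in[C]} R(y)\log\!\left(\tfrac{1}{P_Y(y)}\right),
\end{align*}
so the required lower bound reduces to $H(\hat{P}_Y,P_Y)\ge H_{\hat{P}_Y}(Y)$, i.e. $D(\hat{P}_Y\|P_Y)\ge 0$, again Gibbs' inequality but with the roles of the two distributions swapped. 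The equality condition in both reductions is the standard equality condition for Gibbs, which forces $P_Y=\hat{P}_Y$ termwise, and the two conditions coincide.

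I do not anticipate any real obstacle: once the two cross entropies are identified as the right intermediate quantities, the proof is a pair of one-line algebraic identities combined with two invocations of $D(\cdot\|\cdot)\ge 0$. The only care required is to keep the signs in $R(y)$ and the logs consistent so that the two inequalities actually sandwich $H_{P_Y}(Y)-H_{\hat{P}_Y}(Y)$ in the correct direction (upper bound using $\log(1/\hat{P}_Y)$, lower bound using $\log(1/P_Y)$).
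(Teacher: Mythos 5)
Your proposal is correct and follows essentially the same route as the paper: the paper's proof also inserts the cross entropy $H(P_Y,\hat{P}_Y)$, rewrites the difference as $-D_{KL}(P_Y\|\hat{P}_Y)$ plus the term $\sum_{y\in[C]} R(y)\log\bigl(1/\hat{P}_Y(y)\bigr)$, and drops the nonnegative KL term (Gibbs' inequality), handling the lower bound "similarly" with the roles of the two distributions swapped, exactly as you describe. The equality condition is likewise obtained from $D_{KL}=0$ iff $P_Y=\hat{P}_Y$.
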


Theorem \ref{Thm:EntEstBound} gives a bound for the gap between entropies calculated using different distributions, and it  applies to arbitrary distributions. In the special scenario where $P_Y$, and $\hat{P}_Y$ are the true data distribution and the empirical distribution associated with data samples,  Theorem \ref{Thm:EntEstBound} actually gives error bound of estimating entropy using empirical distribution. 

We want to point out that though our formulation in \eqref{Eq:MIAlterEstParametericMethod_MargDistFromLearnedJointDist} can be used to estimate the mutual information because $\inf_{\theta_{Y|X}} 
H(\hat{p}_Y, \hat{Q}_Y(Y;{\theta_{Y|X}}))$ and $\inf_{\theta_{Y|X}}H(\hat{p}_{Y|X}, q_{Y|X}(Y|X; \theta_{Y|X}))$ can give accurate estimate of entropy $H(Y)$ and conditional entropy $H(Y|X)$, our primary goal is to learn classifiers with excellent generalization performance. Besides, our Theorem \ref{Thm:EntEstViaCrossEnt} implies that our formulation can also be used to estimate the joint entropy of the dataset $H(X,Y)$ via
\begin{align}\label{Eq:JointEntropyEstimation}
	H(X,Y)\approx \inf_{\theta_{X,Y}} H(\hat{P}_{X,Y}, Q_{X,Y}(X,Y; \theta_{X,Y})),
\end{align}
which quantifies the amount of information contained in a dataset. The $\theta_{X,Y}$ is the model parameter. However, the joint entropy only quantifies the information contained in the dataset, but gives no characterization of the dependency between $X$ and $Y$, thus it may not help with learning classifiers with good generalization performance. 

\section{Error Probability Lower Bounds via Mutual Information}\label{Sec:ErrorProbLBound}

In this section, we establish the error probability bound of an arbitrary learning algorithm in terms of mutual information associated with the dataset used to train the models. To do this, we first follow Yi et al. to model the learning process as in Figure \ref{Fig:ErrorProbability} \cite{yi_trust_2019,xie_information-theoretic_2019,yi_derivation_2020,yi_towards_2021}. More specifically, we assume there is a label distribution $P_Y$, and based on realizations from $P_Y$, we can generate a set of observations from $p_X$. Given the observations, we want to infer the label of them. By combining the ground truth labels sampled from $P_Y$ and the predicted labels, we then calculate the error probability as
\begin{align}\label{Eq:ErrorProbability}
	P_{error} :=
	P_{Y,\hat{Y}}\left( \{(Y,\hat{Y}): Y\neq \hat{Y}\} \right).
\end{align}
This learning process is consistent with practice. For example, in a dog-cat image classification tasks, we first have the concepts of the two classes, i.e., cat and dog. Then, we can generate observations of these labels/concepts, i.e.,  images of cat and dog by taking pictures of them, or simply drawing them. We then use these observations to train models, hoping that they will finally be able to predict the correct labels. We want to mention that the community has witnessed significant progress in image generation, and it is fairly straightforward to general such dog and cat images \cite{makhzani_pixelgan_2017,lin_infinitygan_2021,preechakul_diffusion_2022}.

\begin{figure}[!htb]
	\centering
	\includegraphics[width=\linewidth]{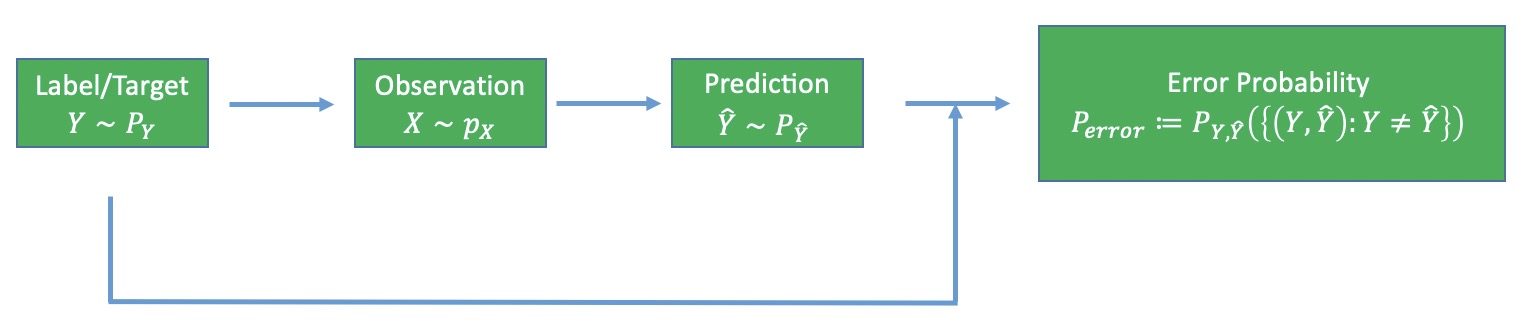}
	\caption{Information-theoretic view point of learning process.}\label{Fig:ErrorProbability}
\end{figure}

Under this framework, we can show that the error probability associated with the learning process as shown in Figure \ref{Fig:ErrorProbability} can be bounded via mutual information $I(X;Y)$. The results are formally shown in Theorem \ref{Thm:ErrorProbabilityMI_Bounds}, and we first present a useful lemma which will be used in Theorem \ref{Thm:ErrorProbabilityMI_Bounds}.

\begin{lemma}\label{Lem:ErrorEntropyUB}
	(\cite{yi_derivation_2020}) For arbitrary $x\in[0,1]$, we have
	\begin{align}\label{Eq:BinaryEntrpUpBound}
		x\log\left(\frac{1}{x}\right) + (1-x) \log\left(\frac{1}{1-x}\right)  \leq 1-2(x - 0.5)^2.
	\end{align}
\end{lemma}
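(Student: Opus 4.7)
The plan is to rewrite the inequality as $f(x) \geq 0$, where
\[
f(x) := 1 - 2(x - 0.5)^2 - \bigl[ x\log(1/x) + (1-x)\log(1/(1-x)) \bigr],
\]
on $[0,1]$, interpreting $\log$ as $\log_2$ so that the right hand side of the lemma is a true pointwise upper bound on the binary entropy (both sides equal $1$ at $x=0.5$, and the binary entropy in bits is bounded by $1$). I would first check the boundary: continuously extending $t\log(1/t) \to 0$ as $t\to 0^+$ gives $f(0) = f(1) = 0.5 > 0$, so it suffices to prove the inequality on the open interval $(0,1)$.

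Next, I would differentiate twice. A direct computation gives $f'(x) = -4(x - 0.5) - \log_2\bigl((1-x)/x\bigr)$, so $f'(0.5) = 0$; and
\[
f''(x) = -4 + \frac{1}{\ln 2}\cdot\frac{1}{x(1-x)}.
\]
The key observation is that $x(1-x) \leq 1/4$ on $(0,1)$, so $1/(x(1-x)) \geq 4$, and therefore
\[
f''(x) \geq -4 + \frac{4}{\ln 2} = 4\left(\frac{1}{\ln 2} - 1\right) > 0
\]
for every $x \in (0,1)$. Thus $f$ is strictly convex on $(0,1)$.

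Strict convexity combined with $f'(0.5) = 0$ forces $x = 0.5$ to be the unique global minimum of $f$ on $(0,1)$. Since $f(0.5) = 1 - 0 - 1 = 0$, I conclude $f(x) \geq 0$ on $(0,1)$, which together with the boundary check yields the lemma, with equality if and only if $x = 0.5$.

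I do not expect any real obstacles here; the only subtle point is the convention on the logarithm base, since the upper bound $1 - 2(x-0.5)^2$ is only tight when the entropy is measured in bits. A one-line remark clarifying this convention (or equivalently, a rescaling by $\ln 2$ if one insists on nats, which would replace $1$ by $\ln 2$ throughout) would accompany the proof to keep it consistent with the rest of the paper.
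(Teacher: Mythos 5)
Your proof is correct. Note that the paper itself does not prove this lemma --- it is imported verbatim from the cited reference \cite{yi_derivation_2020} --- so there is no in-paper argument to compare against; your write-up would serve as a self-contained substitute. The convexity route is sound: the identity $f''(x) = -4 + \frac{1}{\ln 2}\cdot\frac{1}{x(1-x)} \geq 4\left(\frac{1}{\ln 2}-1\right) > 0$, together with $f'(0.5)=0$ and $f(0.5)=0$, does force $f\geq 0$ on $(0,1)$, and the boundary values $f(0)=f(1)=0.5$ close the argument. On the logarithm base: your worry is legitimate but resolves in your favor. The paper's stated convention is nats, and the inequality with right-hand side $1-2(x-0.5)^2$ holds in nats as well, a fortiori, because the natural-log binary entropy is $\ln 2 < 1$ times the base-2 entropy and $1-2(x-0.5)^2 \geq 0$ on $[0,1]$; so the base-2 version you prove is the sharper statement (the only one with equality at $x=0.5$) and immediately implies the version the paper actually uses in the proof of Theorem \ref{Thm:ErrorProbabilityMI_Bounds}. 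A one-line remark to that effect, rather than a rescaling of the bound, is all that is needed.
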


Lemma \ref{Lem:ErrorEntropyUB} can be used to bound the entropy associated with a binary distribution, and a simple visual illustration of it is presented in Figure \ref{Fig:BinaryDistrEntUB} where we let $x$ be the error probability and $1-x$ be the correct probability (or accuracy).

\begin{figure}[!htb]
	\centering
	\includegraphics[width=0.5\linewidth]{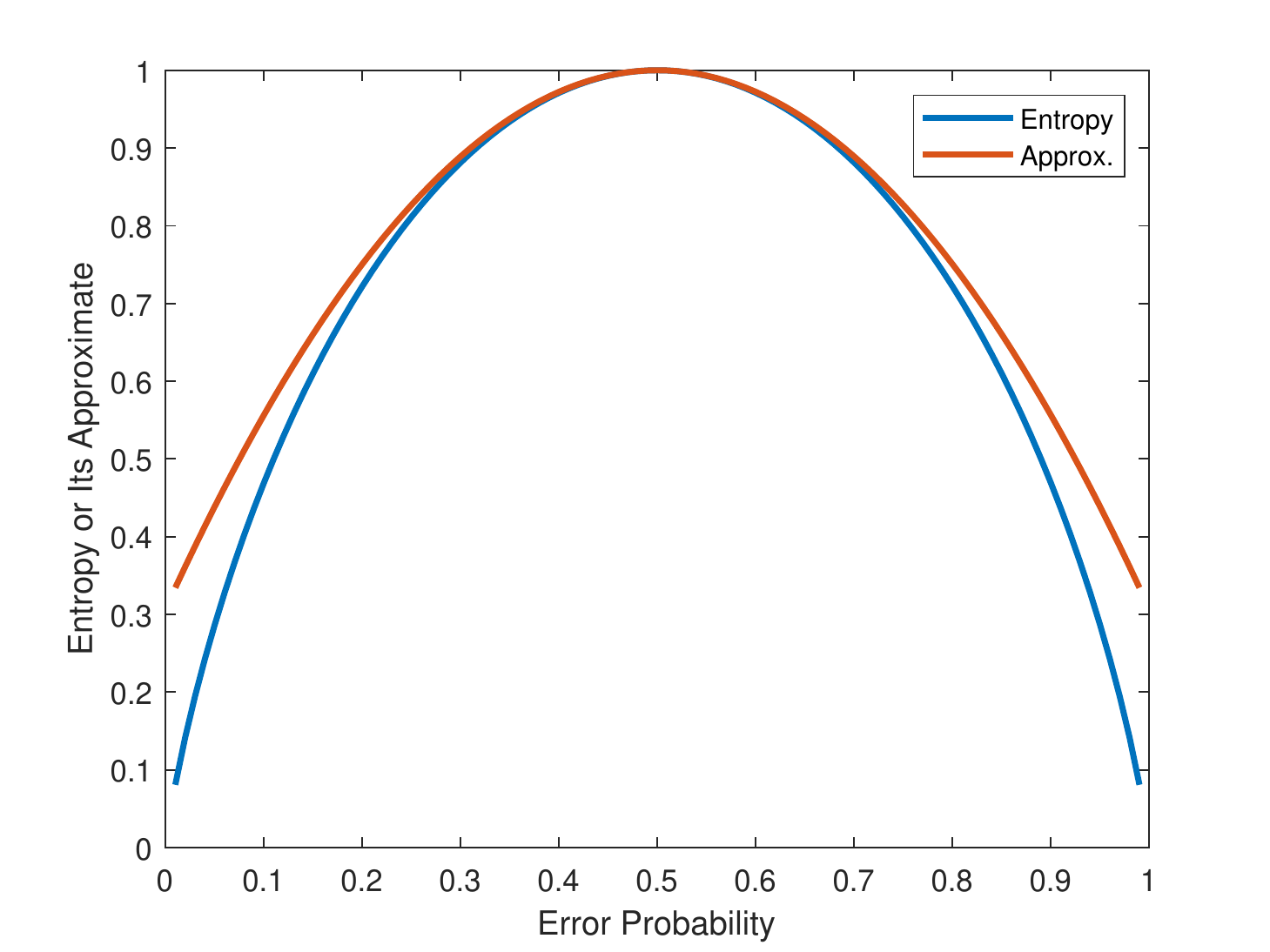}
	\caption{Upper bound of entropy assoicated with binary distribution: error probability $x$ and correct probability $1-x$. The blue curve corresponds to left hand side of \eqref{Eq:BinaryEntrpUpBound}, and the red curve corresponds to the right hand side of \eqref{Eq:BinaryEntrpUpBound}.}\label{Fig:BinaryDistrEntUB}
\end{figure}

\begin{thm}\label{Thm:ErrorProbabilityMI_Bounds}
	(Error Probability Bound via Mutual Information) Assume that the learning process $Y\to X\to\hat{Y}$ in Figure \ref{Fig:ErrorProbability} is a Markov chain where $Y\in[C]$, $X\in\mbR^n$, and $\hat{Y}\in[C]$, then for the prediction $\hat{Y}$ from an arbitrary learned model, we have
	\begin{align}\label{Eq:ErrorProbabilityBounds}
			\max\left( 0,\frac{2+H(Y) - I(X;Y) - a}{4} \right)
		\leq P_{error}
		%\leq 	\min\left( 1,\frac{2+H(Y) - I(X;Y) + a}{4}\right),
	\end{align}
where $a:=\sqrt{(H(Y) - I(X;Y) - 2)^2 + 4}$.
\end{thm}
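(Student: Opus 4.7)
The plan combines three ingredients: the data processing inequality (DPI) on the Markov chain $Y\to X\to\hat Y$, which lower-bounds $H(Y|\hat Y)$; a Fano-type upper bound on $H(Y|\hat Y)$ in terms of $P_{error}$; and Lemma~\ref{Lem:ErrorEntropyUB}, which replaces the binary entropy by its quadratic upper bound. Chaining these three should produce a quadratic inequality in $P_{error}$ whose smaller root is precisely $(2+m-a)/4$, where $m:=H(Y)-I(X;Y)$ and $a:=\sqrt{(m-2)^2+4}$.

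First I would apply DPI: since $Y\to X\to\hat Y$ is Markov, $I(Y;\hat Y)\leq I(X;Y)$, hence $H(Y|\hat Y)=H(Y)-I(Y;\hat Y)\geq m$. Next, introducing the error indicator $E=\mathbb{1}[Y\neq\hat Y]$ (which is a deterministic function of $(Y,\hat Y)$), the chain rule together with $H(Y|\hat Y,E=0)=0$ gives $H(Y|\hat Y)=H(E|\hat Y)+P_{error}\,H(Y|\hat Y,E=1)\leq h_2(P_{error})+P_{error}\,H(Y|\hat Y,E=1)$. The crucial step is to bound $H(Y|\hat Y,E=1)$ by $m$ (rather than by the classical $\log(C-1)$), exploiting the Markov structure $Y\to X\to\hat Y$; combined with the DPI lower bound, this yields the clean inequality $m(1-P_{error})\leq h_2(P_{error})$.

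Then I would invoke Lemma~\ref{Lem:ErrorEntropyUB} to substitute $h_2(P_{error})\leq 1-2(P_{error}-1/2)^2=\tfrac12+2P_{error}(1-P_{error})$, which after rearranging yields the quadratic inequality $2P_{error}^2-(m+2)P_{error}+(m-\tfrac12)\leq 0$. The quadratic formula produces the two roots $(m+2\pm a)/4$ with $a=\sqrt{(m-2)^2+4}$, and the quadratic is non-positive between them, so in particular $P_{error}\geq (m+2-a)/4$. Taking $\max$ with $0$ then handles the small-$m$ regime in which this closed form is negative.

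I expect the Fano-type step to be the main obstacle: the textbook Fano bound carries a $\log(C-1)$ factor, and its replacement by $m$ must exploit the Markov assumption on $Y\to X\to\hat Y$ carefully rather than merely the cardinality of the label alphabet. Once that refined bound is in hand, the remainder is routine: a direct substitution via Lemma~\ref{Lem:ErrorEntropyUB} and a standard quadratic solve.
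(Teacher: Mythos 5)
Your skeleton is the paper's: introduce the error indicator $E$, expand $H(E,Y|\hat{Y})$ two ways, invoke Lemma \ref{Lem:ErrorEntropyUB} to replace the binary entropy by $1-2(P_{error}-0.5)^2$, and solve the quadratic $2P_{error}^2-(2+m)P_{error}+(m-\tfrac{1}{2})\le 0$ for the root $(2+m-a)/4$ with $m:=H(Y)-I(X;Y)$; that endgame algebra is correct. The genuine gap is exactly the step you flag as ``the main obstacle'' and then leave unproved: the bound on $H(Y|\hat{Y},E=1)$. Worse, the specific candidate you propose, $H(Y|\hat{Y},E=1)\le m$, is false in general, since $m=H(Y|X)$ can be zero while errors still occur. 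Concretely, take $Y$ uniform on $\{1,2,3\}$ and $X=Y$ (so $m=0$), and let $\hat{Y}$ be a randomized predictor equal to $X$ with probability $1-2\epsilon$ and each wrong label with probability $\epsilon$; conditioned on $\hat{Y}$ and on an error, $Y$ is uniform over two labels, so $H(Y|\hat{Y},E=1)=\log 2>0=m$. The route you describe therefore cannot be completed as stated.

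The paper instead relaxes $H(Y|\hat{Y},E=1)$ to $H(Y|\hat{Y})$ (not to $m$), which gives $(1-P_{error})H(Y|\hat{Y})\le H(P_{error})$ with $H(Y|\hat{Y})$ still symbolic, and only afterwards applies the data processing inequality $H(Y|\hat{Y})=H(Y)-I(Y;\hat{Y})\ge H(Y)-I(X;Y)=m$ together with $1-P_{error}\ge 0$ to reach $(1-P_{error})\,m\le H(P_{error})$; from there the substitution via Lemma \ref{Lem:ErrorEntropyUB} and the quadratic solve proceed exactly as you wrote. You should note, however, that even this relaxation needs more than ``conditioning reduces entropy,'' because $E=1$ is a conditioning event rather than a random variable: the same three-class example gives $H(Y|\hat{Y},E=1)=\log 2$ while $H(Y|\hat{Y})\to 0$ as $\epsilon\to 0$. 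The classical safe choice is Fano's $H(Y|\hat{Y},E=1)\le\log(C-1)$, which yields a valid but weaker inequality than \eqref{Eq:ErrorProbabilityBounds}. In short: your reduction to the quadratic is right and matches the paper, but the key entropy bound is missing, and the particular bound you name for it does not hold.
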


The proof of Theorem \ref{Thm:ErrorProbabilityMI_Bounds} can be found in the appendix.  From \eqref{Eq:IntermediateBound} in the proof of Theorem \ref{Thm:ErrorProbabilityMI_Bounds}, we can see that $P_{error}  \geq 1- \frac{1}{H(Y) - I(X;Y)}$, and the $P_{error}$ lower bound will decrease when $I(X;Y)$ increases. The Theorem also implies that $I(X;Y) \geq H(Y) - \frac{H(P_{error})}{1-P_{error}}$, which means the mutual information between $X,Y$ should be at least $H(Y) - \frac{H(P_{error})}{1-P_{error}}$ so that we can achieve an error probability $P_{error}$. These are consistent with our intuitions. For example, when the dependence or MI between the observation $X$ and the label $Y$ gets weak, it will be more challenging to infer $Y$ from $X$, thus a larger error probability can occur. We also want to mention that our bound is tighter than the Fano's inequality because Fano's inequality relaxed $H(Y|\hat{Y},E=1)$ to $\log(C-1)$, while we relax $H(Y|\hat{Y},E=1)$ to $H(Y|\hat{Y})$ in \eqref{Eq:HYcYhat} \cite{cover_elements_2012}. 

In Figure \ref{Fig:ErrroProbability_and_MI}, we give illustrations of the relation between the error probability lower bound and the mutual information for both a balanced underlying data distribution and an unbalanced data distribution. For the balanced data distribution, we assume uniform marginal distribution for the label, while for the unbalanced data distribution, we assume one of the classes takes probability mass $1-\delta$ and all the other classes share the probability mass $\delta$ evenly. From the figure, we can see that when the mutual information decreases, the error probability will increase, which is consistent with our intuitions. For example, for the case with 100 classes, if the label and the input has zero mutual information, i.e., no dependency between them, we can only draw a random guess and get 0.99 error probability while the lower bound from Theorem \ref{Thm:ErrorProbabilityMI_Bounds} is about 0.9. Figure \ref{Fig:ErrroProbability_and_MI} also shows that under the same setup, the error probability associated with balanced dataset will be larger than that associated with imbalanced dataset. For example, when the mutual information is zero, the $P_{error}$ for balanced dataset is above 0.8 while the $P_{error}$ associated with imbalanced dataset is below 0.8. This is also intuitive since for an imbalanced dataset, we can set the label of the class which contains the most number of examples to all the examples. Since more samples have the correct labels, the error probability cannot be too big. However, for balanced dataset, since all classes have the same number of examples, we cannot get a too small error probability.

\begin{figure*}[!htb]
	\centering
	\begin{subfigure}[b]{0.45\textwidth}
		\centering
		\includegraphics[width=\linewidth]{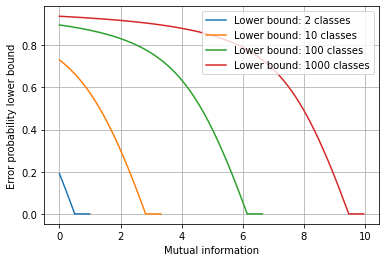}
		\caption{Balanced data distribution}\label{Fig:BalancedData}
	\end{subfigure}%
	~
	\begin{subfigure}[b]{0.45\textwidth}
		\centering
		\includegraphics[width=\linewidth]{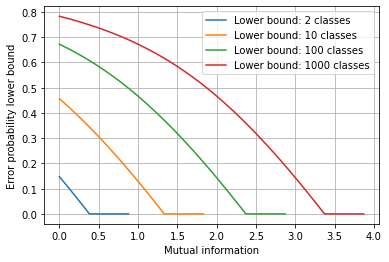}
		\caption{Unbalanced data distribution}\label{Fig:ImbalancedData}
	\end{subfigure}
	\caption{Error probability lower bound and mutual information: : uniform distirbution of labels is assumed for the balanced data distribution. For unbalanced data disrtribution, one class takes probability mass 0.7, while the other classes share the 0.3 evenly.}\label{Fig:ErrroProbability_and_MI}
\end{figure*}

\section{Guarantees for Mutual Information Learning}\label{Sec:MILGuarantees}

We have shown that the mutual information can be used to bound the error probability in Section \ref{Sec:ErrorProbLBound}. In this section, we show that the ground truth mutual information (MI) $I(X;Y)$ can be accurately learned from $\mcS\subset\mbR^n\times[C]$ under certain conditions, and we give the sample complexity for achieving this goal. 

\subsection{Weight Sharing for Learning Mutual Information via Single Neural Network}

From Theorem \ref{Thm:EntEstViaCrossEnt}, we know that
\begin{align}
	H(Y) \leq \inf_{Q_Y}  H(P_Y,Q_Y),\\
	H(Y|X) \leq \inf_{Q_{Y|X}} H(P_{Y|X},Q_{Y|X}),
\end{align}
where the equality holds if and only if $Q_Y=P_Y$ and $Q_{Y|X}=P_{Y|X}$. This allows us to estimate the mutual information defined in \eqref{Eq:MutualInfoDefn} via solving
\begin{align}\label{Eq:OrgnMIEstForm}
	\inf_{Q_Y}  H(P_Y,Q_Y)
    - \inf_{Q_{Y|X}} H(P_{Y|X},Q_{Y|X}), 
\end{align}
and the ground truth MI is
\begin{align}\label{Eq:TruthMI}
	I(X;Y) 
	& = H(P_Y,P_Y) - H(P_{Y|X},P_{Y|X}) \nonumber\\
	& = \mbE_{P_Y}[-\log(P_Y)] - \mbE_{P_{X,Y}}[- \log(P_{Y|X})] \nonumber\\
	& = \sum_{y\in[C]} -P_Y(y) \log(P_Y(y)) - \int_{\mbR^n}  p_X(x) \sum_{y\in[C]} - P_{Y|X}(y|x) \log(P_{Y|X}(y|x)) dx
\end{align}
In \eqref{Eq:OrgnMIEstForm}, we use two sets of parameters $\theta_{Y|X}$ and $\theta_Y$ to learn the conditional entropy and the entropy separately. Under the deep learning paradigma, two different neural networks can be used to acheive this in practice.

Following the weight sharing idea in deep learning community for reducing computational cost, we can use a single neural network with a single set of parameter $\theta\in\mbR^m$ to parameterize both $Q_Y$ and $Q_{Y|X}$. More specifically, we use a neural network with parameter $\theta$ to approximate the conditional distribution, i.e., $Q_{Y|X}(Y|X;\theta)$ , and define $Q_{Y}(Y;\theta)$ as
\begin{align}\label{Eq:Q_Y_via_pX}
	Q_Y(y|\theta):= \int_{\mbR^n} p_X(x) Q_{Y|X}(y|x;\theta) dx, \forall y\in[C].
\end{align} 
When the $p_X$ is not available, we can define $Q_Y(\cdot|\theta)$ by the empirical distribution $\hat{P}_X$, i.e., 
\begin{align}\label{Eq:Q_Y_via_hatpX}
	Q_Y(y|\theta):=\sum_{x\in\mcS_x} \hat{P}(x) Q_{Y|X}(y|x;\theta), \forall y\in[C].
\end{align}
In the ideal situation as we pointed out previously, we find an optimal $\theta^*$ such that $Q_Y(Y;\theta^*) =P_Y(Y)$ and $Q_{Y|X}(Y|X;\theta^*) = P_{Y|X}(Y|X)$.

\subsection{Parameterization and Empirical Minimization for Mutual Information Learned Classifiers}

To ensure efficiency for finding an optimal $\theta^*$ in practice, we search for the optimum only from a domain $\Theta\subset\mbR^m$ instead of searching it in $\mbR^m$. Thus, we calulate the MI by solving
\begin{align}\label{Eq:OrgnMIEstModApproxFormSig}
	I_\Theta 
	& := \inf_{\theta\in\Theta}  H(P_Y,Q_Y(Y;\theta))
	- \inf_{\theta\in\Theta} H(P_{Y|X},Q_{Y|X}(Y|X;\theta))\nonumber \\
	& = \inf_{\theta\in\Theta} \mbE_{P_Y}[-\log(Q_Y(Y;\theta))] 
	- \inf_{\theta\in\Theta} \mbE_{p_{X,Y}}[-\log(Q_{Y|X}(Y|X;\theta))] \nonumber\\
	& = \inf_{\theta\in\Theta} \mbE_{P_Y}\left[-\log\left( \int_{\mbR^n} p_X(x) Q_{Y|X}(Y|x;\theta) dx \right) \right] 
	- \inf_{\theta\in\Theta} \mbE_{p_{X,Y}}[-\log(Q_{Y|X}(Y|X;\theta))],
\end{align}
or 
\begin{align}
	\theta^* := \arg\min_{\theta\in\Theta} (H(P_Y,Q_Y(Y;\theta)), H(P_{Y|X},Q_{Y|X}(Y|X;\theta))),
\end{align}
where $\theta^*$ is an optimal solution to a multi-objective function \cite{miettinen_nonlinear_2012}. The price we pay for such efficiency from constraining $\theta$ is the introduction of model approximation error, i.e., the $\theta$ that corresponds to the ground truth MI can be out of $\Theta$. 

Since the ground truth distribution is not available in practice, we seek to solve an empirical form of \eqref{Eq:OrgnMIEstModApproxFormSig} using sample set $\mcS:=\{(x_i,y_i)\}_{i=1}^N$ from the distribution $p_{X,Y}$. This motivates us to solve 
\begin{align}\label{Eq:OrgnMIEstModEmpriFormSig}
	I_\Theta ^{(N)}
 &:= \inf_{\theta \in\Theta} \mbE_{\hat{P}_Y}[-\log(Q_Y(\theta))] - \inf_{\theta\in\Theta} \mbE_{\hat{p}_{X,Y}}\left[ -\log(Q_{Y|X}(\theta)) \right] \nonumber\\
 & = \inf_{\theta\in\Theta} \sum_{y\in[C]}-\hat{P}(y) \log\left( \sum_{i=1}^N \hat{P}(x_i) Q_{Y|X}(y|x_i;\theta) \right) - \inf_{\theta\in\Theta} \sum_{i=1}^N \hat{P}_X(x_i) \sum_{y\in[C]} - \hat{P}_{Y|X}(y|x_i)\log(Q_{Y|X}(y|x_i;\theta)) \nonumber\\
 &:= \inf_{\theta\in\Theta} \sum_{y\in[C]}-\hat{P}(y) \log\left( \frac{1}{N} \sum_{i=1}^N Q_{Y|X}(y|x_i;\theta) \right) 
  - \inf_{\theta\in\Theta} \frac{1}{N} \sum_{i=1}^N\sum_{y\in[C]} - \hat{P}_{Y|X}(y|x_i)\log(Q_{Y|X}(y|x_i;\theta)),
\end{align}
where we used that assumption that $\|x_i - x_j\|>0, \forall i\neq j$, i.e., uniform empirical distribution with $\hat{P}(x_i)=\frac{1}{N}$. When the empirical conditional distribution $\hat{P}_{Y|X}$ uses one-hot representation, the \eqref{Eq:OrgnMIEstModEmpriFormSig} can be further simplified. The empirical estimation error can occur due to the fact that the sample set $\mcS$ cannot completely characterize the data distribution $p_{X,Y}$.

Similarly, when we use two separate neural networks with two sets of parameters $\theta\in\mbR^m, \gamma\in\mbR^{m'}$ to parameterize $Q_Y$ and $Q_{Y|X}$, respectively, i.e., $Q_Y(Y;\gamma)$ and $Q_{Y|X}(Y|X;\theta)$, we can similarly define
\begin{align}\label{Eq:OrgnMIEstModApproxFormTwo}
	I_{\Gamma, \Theta}
	& := \inf_{\gamma\in\Gamma}  H(P_Y,Q_Y(Y;\gamma))
	- \inf_{\theta\in\Theta} H(P_{Y|X},Q_{Y|X}(Y|X;\theta))\nonumber \\
	& = \inf_{\gamma\in\Gamma} \mbE_{P_Y}[-\log(Q_Y(Y;\gamma))] 
	- \inf_{\theta\in\Theta} \mbE_{p_{X,Y}}[-\log(Q_{Y|X}(Y|X;\theta))] \nonumber\\
	&  = \inf_{\gamma\in\Gamma} \sum_{y\in[C]}- P_Y(y) \log\left( Q_{Y}(y;\gamma)\right) - \inf_{\theta\in\Theta} \int_{\mbR^n} p_X(x) \sum_{y\in[C]} - P_{Y|X}(y|x)\log(Q_{Y|X}(y|x;\theta))dx,
\end{align}
where $\Theta\subset\mbR^m$ and $\Gamma\subset\mbR^{m'}$. An empirical estimation form of $I_{\Theta,\Gamma}$ can be 
\begin{align}\label{Eq:OrgnMIEstModEmpriFormTwo}
	I_{\Gamma, \Theta} ^{(N)}
	&:= \inf_{\gamma\in\Gamma } \mbE_{\hat{P}_Y}[-\log(Q_Y(\gamma))] - \inf_{\theta\in\Theta} \mbE_{\hat{p}_{X,Y}}\left[ -\log(Q_{Y|X}(\theta)) \right] \nonumber\\
	&:= \inf_{\gamma\in\Gamma} \sum_{y\in[C]}-\hat{P}(y) \log\left( Q_{Y}(y;\gamma) \right) 
	- \inf_{\theta\in\Theta} \frac{1}{N} \sum_{i=1}^N\sum_{y\in[C]} - \hat{P}_{Y|X}(y|x_i)\log(Q_{Y|X}(y|x_i;\theta)).
\end{align}

As we can see from the above, the error for learning MI comes mainly from two sources, i.e., the model approximation error and empirical estimation error. The model approximation error can be easily bounded via the universal approximating properties of neural networks \cite{hornik_multilyaer_1989}. In this paper, we focus on the emprical estimation error. In Theorem \ref{Thm:SampleComplexity4EmpiricalEst}, we give the sample complexity for bounding the empirical estimation error by using concentration of measure arguments.

\subsection{Empirical Estimation Guarantees}\label{Sec:EmpiricalEstimationGuarantees}

Before we give the guarantees for accurate learning of MI from empirical sample set $\mcS$, we introduce some useful technical lemmas which will be used in later sections. 

\begin{lemma}\label{Lem:HoeffdingIneqSingleSide}
	(Theorem 2.8 in \cite{boucheron_concentration_2013}) Let $X_1, \cdots, X_n$ be independent random variables such that $X_i$ takes its values in $[a_i,b_i]$ almost surely for all $i\leq n$. Let $S=\sum_{i=1}^n (X_i - \mbE[X_i])$. Then, for every $t>0$, 
	\begin{align}
		P(\{X_1,\cdots,X_n: S\geq t\}) \leq \exp\left( - \frac{2t^2}{\sum_{i=1}^n (b_i - a_i)^2} \right).  
	\end{align}
\end{lemma}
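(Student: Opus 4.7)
The plan is to use the Cramér--Chernoff exponential-moment method together with Hoeffding's lemma for bounded random variables. First I would fix an arbitrary $\lambda > 0$ and apply Markov's inequality to the nonnegative random variable $e^{\lambda S}$, which gives
\begin{align*}
P(S\geq t) \;=\; P(e^{\lambda S}\geq e^{\lambda t}) \;\leq\; e^{-\lambda t}\,\mbE[e^{\lambda S}].
\end{align*}
Because $S$ is a sum of independent centered random variables $Y_i := X_i - \mbE[X_i]$, the moment generating function factorizes as $\mbE[e^{\lambda S}] = \prod_{i=1}^n \mbE[e^{\lambda Y_i}]$, so it suffices to bound each factor separately.

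The key technical ingredient is Hoeffding's lemma: for any random variable $Y\in[a,b]$ with $\mbE[Y]=0$, one has $\mbE[e^{\lambda Y}]\leq e^{\lambda^2(b-a)^2/8}$. I would derive this by writing $Y = \tfrac{b-Y}{b-a}\,a + \tfrac{Y-a}{b-a}\,b$ as a convex combination, applying convexity of $y\mapsto e^{\lambda y}$, taking expectations, and reducing the problem to showing that the log-moment generating function $\varphi(u) := \log\!\bigl(\tfrac{b}{b-a}e^{ua} - \tfrac{a}{b-a}e^{ub}\bigr)$ satisfies $\varphi(u)-\varphi(0)-u\varphi'(0)\leq u^2(b-a)^2/8$. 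A second-order Taylor expansion with integral remainder, combined with the elementary inequality $p(1-p)\leq 1/4$ for $p\in[0,1]$ to control $\varphi''$, then yields the claimed bound. The centered variables $Y_i = X_i - \mbE[X_i]$ take values in $[a_i-\mbE[X_i],\,b_i-\mbE[X_i]]$, an interval of the same length $b_i - a_i$, so the lemma applies with the original widths.

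Combining the factorization with Hoeffding's lemma gives
\begin{align*}
\mbE[e^{\lambda S}] \;\leq\; \exp\!\Bigl(\tfrac{\lambda^2}{8}\sum_{i=1}^n (b_i - a_i)^2\Bigr),
\end{align*}
hence $P(S\geq t)\leq \exp\!\bigl(-\lambda t + \tfrac{\lambda^2}{8}\sum_i(b_i-a_i)^2\bigr)$ for every $\lambda>0$. The last step is to optimize this free parameter: the exponent is a quadratic in $\lambda$ minimized at $\lambda^\star = 4t/\sum_i(b_i-a_i)^2$, and substituting back produces the advertised exponent $-2t^2/\sum_i(b_i-a_i)^2$.

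The main obstacle is Hoeffding's lemma itself; the surrounding steps (Markov's inequality, the independence-based factorization, and a one-variable quadratic optimization) are routine. Since the statement is cited verbatim as Theorem 2.8 of Boucheron, Lugosi and Massart, one may alternatively just quote it directly without reproducing the Cramér--Chernoff derivation in the paper.
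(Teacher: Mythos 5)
Your proposal is correct, but note that the paper itself does not prove this lemma at all: it is quoted verbatim as Theorem 2.8 of Boucheron, Lugosi and Massart and used as an imported black box (the only derivation the paper carries out from it is the two-sided version in Corollary 5.2). What you supply is the standard Cram\'er--Chernoff argument that the cited reference uses: Markov's inequality applied to $e^{\lambda S}$, factorization of the moment generating function by independence, Hoeffding's lemma $\mbE[e^{\lambda Y}]\leq e^{\lambda^2 (b-a)^2/8}$ for a centered variable on an interval of length $b-a$ (proved via convexity of the exponential and a second-order Taylor bound on the log-moment generating function using $p(1-p)\leq 1/4$), and finally optimization over $\lambda$, where $\lambda^\star = 4t/\sum_i (b_i-a_i)^2$ indeed yields the exponent $-2t^2/\sum_i (b_i-a_i)^2$. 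All steps check out, including the observation that centering does not change the interval widths. So your derivation buys self-containedness relative to the paper, at the cost of reproducing a classical textbook proof; your closing remark that one may simply cite the result is exactly what the authors chose to do.
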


Lemma \ref{Lem:HoeffdingIneqSingleSide} shows that for a sequence of bounded I.I.D. random variables, their empirical mean has a small probability to be much greater than the distribution mean. In Corollary \ref{Cor:HoeffdingIneqDouble}, we show that the empirical mean can neither be much greater nor much smaller than the distribution mean.

\begin{cor}\label{Cor:HoeffdingIneqDouble}
	(Double sided Hoeffiding inequality) Let $X_1, \cdots, X_n$ be independent random variables such that $X_i$ takes its values in $[a_i,b_i]$ almost surely for all $i\leq n$. Let $S=\sum_{i=1}^n (X_i - \mbE[X_i])$. Then, for every $t>0$, 
	\begin{align}
		P\left(\{X_1,\cdots,X_n: |S|\geq t\}\right) \leq 2\exp\left( - \frac{2t^2}{\sum_{i=1}^n (b_i - a_i)^2} \right).  
	\end{align}
\end{cor}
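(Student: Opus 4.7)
The plan is to obtain the two-sided tail bound by applying the one-sided Hoeffding inequality of Lemma \ref{Lem:HoeffdingIneqSingleSide} twice, once to the original variables and once to their negations, and then combining the two deviation events with a union bound. Since the statement of the event $\{|S| \ge t\}$ decomposes as $\{S \ge t\} \cup \{S \le -t\}$, controlling each piece separately is the natural route.

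First I would invoke Lemma \ref{Lem:HoeffdingIneqSingleSide} directly on $X_1, \dots, X_n$ to get
\[
P(\{X_1,\dots,X_n : S \ge t\}) \le \exp\left(-\frac{2t^2}{\sum_{i=1}^n (b_i - a_i)^2}\right).
\]
Next I would apply the same lemma to the sequence $X'_i := -X_i$. The key observation is that if $X_i \in [a_i, b_i]$ almost surely, then $X'_i \in [-b_i, -a_i]$ almost surely, and the length of this new interval is still $b_i - a_i$, so the denominator in the exponent is unchanged. Because the $X'_i$ remain independent and $\sum_{i=1}^n (X'_i - \mathbb{E}[X'_i]) = -S$, Lemma \ref{Lem:HoeffdingIneqSingleSide} yields
\[
P(\{X_1,\dots,X_n : -S \ge t\}) \le \exp\left(-\frac{2t^2}{\sum_{i=1}^n (b_i - a_i)^2}\right).
\]

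Finally I would combine the two bounds via the union bound, using the identity $\{|S| \ge t\} = \{S \ge t\} \cup \{S \le -t\}$, to obtain the desired factor of $2$ in front of the exponential. There is no substantial obstacle here; the only point to double-check is that negation preserves both independence and the interval length, which is what makes the exponent identical in the two one-sided estimates so that the union bound produces a clean factor of two rather than a sum of two different exponentials.
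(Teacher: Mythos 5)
Your proposal is correct and matches the paper's own argument: the paper likewise applies the one-sided Hoeffding bound to the negated variables $Y_i := -X_i$ (noting the interval $[-b_i,-a_i]$ has the same length) to control $P(S \le -t)$, and then combines the two tails by a union bound. No gaps; this is essentially the same proof.
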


Corollary \ref{Cor:HoeffdingIneqDouble} generalizes the Hoeffiding' inequality from single-sided to double-sided. With Lemma \ref{Lem:HoeffdingIneqSingleSide} and Corollary \ref{Cor:HoeffdingIneqDouble}, we can derive the concentration inequalities for the contional cross entropy  random variable and the marginal distribution random variable as in Lemma \ref{Lem:ConcentrationIneq}.% and \ref{Lem:ConcentrationIneqMarginal}.

\begin{lemma}\label{Lem:ConcentrationIneq}
	(Concentration Inequality for Conditional Cross Entropy) We consider a set of random variable pairs $\mcS:\{(X_i,Y_i)\}_{i=1}^N$ with each $(X_i,Y_i)$ I.I.D. according to $p_{X,Y}$ in $\mbR^n\times[C]$, and define 
	\begin{align}
		D:= \frac{1}{N} \sum_{i=1}^N\log(Q_{Y|X}(Y_i|X_i; \theta))   - \mbE_{P_{Y,X}} \left[ \log(Q_{Y|X}(Y|X; \theta)) \right],
	\end{align}
where $\theta\in\Theta_c$, $\Theta_c$ is a countable set, and $Q_{Y|X}(Y|X;\theta)$ is a function of $X, Y$ with parameters $\theta$. Assume $Q_{Y|X}(y|x;\theta)\geq P^\#, \forall x,y,\theta$ where $P^\#>0$ is a constant. Then, we have 
	\begin{align}
		P(\{\mcS: \max_{\theta\in\Theta_c} |D| \geq t\}) \leq 2|\Theta_c| \exp\left( -\frac{2Nt^2}{(\log(P^\#))^2} \right)
	\end{align}
where $t>0$ is a constant.  
\end{lemma}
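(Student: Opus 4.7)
The plan is to fix a single $\theta \in \Theta_c$, apply the double-sided Hoeffding inequality from Corollary \ref{Cor:HoeffdingIneqDouble} to the i.i.d. sequence $Z_i(\theta) := \log(Q_{Y|X}(Y_i|X_i;\theta))$, and then conclude by a union bound over the countable family $\Theta_c$. For this to work, I need each $Z_i(\theta)$ to be almost surely bounded in an explicit interval whose length is controlled uniformly in $\theta$.

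First I would verify the range of $Z_i(\theta)$. Since $Q_{Y|X}(y|x;\theta)$ is a probability, $Z_i(\theta) \le \log(1) = 0$, and by the uniform lower bound assumption $Q_{Y|X}(y|x;\theta) \ge P^\#$, we have $Z_i(\theta) \ge \log(P^\#)$. Hence $Z_i(\theta) \in [\log(P^\#), 0]$ almost surely, and the interval length satisfies $(b_i - a_i)^2 = (\log(P^\#))^2$ uniformly in $i$ and $\theta$. Note this uses $P^\# \in (0,1]$, so that $\log(P^\#) \le 0$ and $|\log(P^\#)|$ is the length of the interval.

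Next, for fixed $\theta$, set $S(\theta) := \sum_{i=1}^N (Z_i(\theta) - \mbE[Z_i(\theta)]) = N \cdot D(\theta)$. Applying Corollary \ref{Cor:HoeffdingIneqDouble} with $t$ replaced by $Nt$ yields
\begin{align*}
P\left(\{\mcS : |D(\theta)| \ge t\}\right)
= P\left(\{\mcS : |S(\theta)| \ge Nt\}\right)
\le 2\exp\!\left(-\frac{2(Nt)^2}{N(\log(P^\#))^2}\right)
= 2\exp\!\left(-\frac{2Nt^2}{(\log(P^\#))^2}\right).
\end{align*}

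Finally, I would use the union bound. Because $\Theta_c$ is countable,
\begin{align*}
P\left(\{\mcS : \max_{\theta \in \Theta_c}|D(\theta)| \ge t\}\right)
\le \sum_{\theta \in \Theta_c} P\left(\{\mcS : |D(\theta)| \ge t\}\right)
\le 2|\Theta_c| \exp\!\left(-\frac{2Nt^2}{(\log(P^\#))^2}\right),
\end{align*}
which is the claimed bound. The only delicate step is the uniform boundedness argument: it is crucial that $P^\#$ does not depend on $\theta$, $x$, or $y$, so that one common Hoeffding constant governs every term in the union bound; the rest is essentially bookkeeping. The countability of $\Theta_c$ is what keeps the union bound finite (or, if $|\Theta_c| = \infty$, interpreted as a countable sum that still produces a meaningful bound once $\Theta_c$ is finite in later applications, or is replaced by a covering net).
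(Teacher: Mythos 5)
Your proposal is correct and follows essentially the same route as the paper's proof: bound $\log(Q_{Y|X}(Y_i|X_i;\theta))\in[\log(P^\#),0]$ uniformly, apply the double-sided Hoeffding inequality (Corollary \ref{Cor:HoeffdingIneqDouble}) for each fixed $\theta$, and finish with a union bound over the countable set $\Theta_c$, yielding the same factor $2|\Theta_c|\exp\left(-\frac{2Nt^2}{(\log(P^\#))^2}\right)$.
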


Theorem \ref{Lem:ConcentrationIneq} shows that when we randomly sample data points $(X_i,Y_i)$ from $p_{X,Y}$ to form a dataset $\mcS$, then with high probability over the dataset $\mcS$, the empirical mean $\frac{1}{N} \sum_{i=1}^N\log(Q_{Y|X}(Y_i|X_i; \theta))$ will be concentrated around the distribution mean $\mbE_{P_{Y,X}} \left[ \log(Q_{Y|X}(\theta_j)) \right]$. When $Q_{Y|X}$ becomes the $P_{Y|X}$, the negative of the distribution mean is essentially the conditional entropy $H(Y|X)$, and Theorem \ref{Lem:ConcentrationIneq} shows that the empirical estimation will be close the the truth conditional entropy for large $N$. Simlarly, we can derive the concentration inequality for cross entropy, and the results are presented in Lemma \ref{Lem:ConcentrationIneqCrossEnt}.

\begin{lemma}\label{Lem:ConcentrationIneqCrossEnt}
	(Concentration Inequality for Cross Entropy) We consider a set of random variable pairs $\mcS:\{(Y_i)\}_{i=1}^N$ with each $Y_i$ independently and identically distributed according to $p_{Y}$ in $\mbR^n\times[C]$, and define 
	\begin{align}
		D:= \frac{1}{N} \sum_{i=1}^N\log(Q_{Y}(Y_i; \gamma))   - \mbE_{P_{Y}} \left[ \log(Q_{Y}(\cdot; \gamma)) \right],
	\end{align}
	where $\gamma\in\Gamma_c$, $\Gamma_c$ is a countable set, and $Q_{Y}(Y;\gamma)$ is a function of $Y$ with parameters $\gamma$. Assume $Q_{Y}(y;\gamma)\geq P^\#, \forall y, \gamma$ where $P^\#>0$ is a constant. Then, we have 
	\begin{align}
		P(\{\mcS: \max_{\theta\in\Theta_c} |D| \geq t\}) \leq 2|\Gamma_c| \exp\left( -\frac{2Nt^2}{(\log(P^\#))^2} \right)
	\end{align}
	where $t>0$ is a constant.  
\end{lemma}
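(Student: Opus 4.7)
The plan is to imitate the proof of Lemma \ref{Lem:ConcentrationIneq} verbatim, since the cross entropy case is essentially the conditional cross entropy case with the conditioning variable suppressed. I would first fix a single parameter $\gamma \in \Gamma_c$ and define the i.i.d.\ random variables $Z_i := \log(Q_Y(Y_i;\gamma))$ for $i=1,\ldots,N$. Since $Q_Y(y;\gamma)$ is a probability mass value, it lies in $[P^\#, 1]$ by hypothesis (together with the implicit fact that $Q_Y(y;\gamma)\leq 1$), so each $Z_i$ is almost surely bounded in the interval $[\log(P^\#), 0]$, whose length is $|\log(P^\#)|$.

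Next I would apply the double-sided Hoeffding inequality (Corollary \ref{Cor:HoeffdingIneqDouble}) to $S = \sum_{i=1}^N (Z_i - \mbE[Z_i]) = N\cdot D$. With $b_i - a_i = |\log(P^\#)|$ for every $i$, the denominator in the exponent becomes $N(\log(P^\#))^2$, and using threshold $Nt$ in place of $t$ gives
\begin{equation*}
P(\{\mcS : |D| \geq t\}) = P(\{\mcS : |S| \geq Nt\}) \leq 2\exp\!\left(-\frac{2N^2 t^2}{N(\log(P^\#))^2}\right) = 2\exp\!\left(-\frac{2Nt^2}{(\log(P^\#))^2}\right).
\end{equation*}
Finally I would take a union bound over the countable parameter set $\Gamma_c$, which multiplies the right-hand side by $|\Gamma_c|$ and yields the stated bound on $P(\max_{\gamma\in\Gamma_c}|D|\geq t)$.

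There is no real obstacle here; the only point requiring attention is confirming the endpoints of the range of $Z_i$ so that the Hoeffding constant $(b_i - a_i)^2 = (\log(P^\#))^2$ matches the statement. Countability of $\Gamma_c$ is essential for the union bound step, as in Lemma \ref{Lem:ConcentrationIneq}; if one wanted to extend the result to uncountable $\Gamma$ one would need a covering/Lipschitz argument in $\gamma$, but that is outside the scope of this lemma.
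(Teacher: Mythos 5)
Your proposal is correct and is exactly the argument the paper intends: the paper's own "proof" of this lemma simply defers to the proof of Lemma \ref{Lem:ConcentrationIneq}, and your steps (bounding $\log(Q_Y(Y_i;\gamma))$ in $[\log(P^\#),0]$, applying Corollary \ref{Cor:HoeffdingIneqDouble} to $S=N\cdot D$ with threshold $Nt$, then a union bound over the countable set $\Gamma_c$) reproduce that argument with the conditioning variable dropped.
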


\begin{proof}\label{Proof:ConcentrationIneqCrossEnt}
	(of Lemma \ref{Lem:ConcentrationIneqCrossEnt}) We can follow similar arguments as in the proof of Lemma \ref{Lem:ConcentrationIneq} to get Lemma \ref{Lem:ConcentrationIneqCrossEnt}, and we leave it out here.
	
\end{proof}

To get the sample complexity results, we need two more technical lemma as presented in Lemma \ref{Lem:IndpdtIneq} which gives upper bound of the distance between two infimum over the same domain, and Lemma \ref{Lem:SubspaceCoveringNumber} which gives the covering number for constructing a set of balls for covering a uncountable set. Lemma \ref{Lem:SubspaceCoveringNumber} will be used to construct countable set for approximating a noncountable set.

\begin{lemma}\label{Lem:IndpdtIneq}
	(Upper Bound of Minimums Difference) For two arbitrary functions $f(x)$ and $g(x)$ defined over the same domain $\mathcal{X}$, we have 
	\begin{align}
		\left| \inf_{x\in\mathcal{X}} f(x) - \inf_{x\in\mathcal{X}} g(x) \right| \leq \sup_{x\in\mathcal{X}} |f(x) - g(x)|.
	\end{align}
\end{lemma}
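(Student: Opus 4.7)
The plan is to prove the inequality by the standard two-sided bounding argument: show that each of $\inf f - \inf g$ and $\inf g - \inf f$ is at most $M := \sup_{x\in\mathcal{X}} |f(x) - g(x)|$, and then combine into the absolute value. To avoid measurability or attainment issues, I will work directly with the infimum characterization rather than assume a minimizer exists.

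First I would observe that by definition of $M$, for every $x \in \mathcal{X}$ we have
\begin{align*}
    f(x) \le g(x) + M, \qquad g(x) \le f(x) + M.
\end{align*}
Next, I would take the infimum over $x$ on both sides of the first inequality. Since the constant $M$ passes through the infimum, this yields
\begin{align*}
    \inf_{x\in\mathcal{X}} f(x) \le \inf_{x\in\mathcal{X}} \bigl( g(x) + M \bigr) = \inf_{x\in\mathcal{X}} g(x) + M,
\end{align*}
so $\inf f - \inf g \le M$. Symmetrically, the second pointwise inequality gives $\inf g - \inf f \le M$, and combining these two one-sided bounds yields $|\inf f - \inf g| \le M$, which is exactly the claim.

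There is a minor technical subtlety if either infimum is $-\infty$: in that case $M$ must itself be $+\infty$ (since otherwise both functions would be bounded below by the other minus $M$), so the inequality holds trivially in the extended real sense. I would note this briefly to cover the edge case, but otherwise the argument is essentially a three-line exercise. Honestly, the ``hard part'' here is just being careful about which direction of inequality survives the infimum; once the two pointwise bounds $f \le g + M$ and $g \le f + M$ are written down, the rest is automatic, so no substantive obstacle is expected.
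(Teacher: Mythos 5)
Your proof is correct, and it is actually more careful than the one in the paper. The paper's argument assumes minimizers exist, writing $x_f:=\arg\inf_{x\in\mathcal{X}} f(x)$ and $x_g:=\arg\inf_{x\in\mathcal{X}} g(x)$, and then simply asserts $|f(x_f)-g(x_g)|\leq \sup_{x\in\mathcal{X}}|f(x)-g(x)|$; as stated this is doubly problematic, since for arbitrary functions the infima need not be attained, and even when they are, $f(x_f)$ and $g(x_g)$ are values at two different points, so the asserted bound does not follow from the definition of the supremum without exactly the kind of two-sided argument you supply. Your route — writing the pointwise bounds $f(x)\le g(x)+M$ and $g(x)\le f(x)+M$ with $M:=\sup_{x\in\mathcal{X}}|f(x)-g(x)|$, passing each through the infimum, and combining the two one-sided estimates — is the standard rigorous proof, needs no attainment assumption, and your remark about the $-\infty$ case covers the extended-real-valued situation cleanly. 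In short, you prove the same statement the paper intends, but by a sound mechanism where the paper's own sketch glosses over the key step.
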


\begin{proof}\label{Proof:IndpdtIneq}
	(of Lemma \ref{Lem:IndpdtIneq}) Assume $x_f:=\arg\inf_{x\in\mathcal{X}} f(x)$ and $x_g:=\arg\inf_{x\in\mathcal{X}} g(x)$, then the Lemma \ref{Lem:IndpdtIneq} is obvious since $|f(x_f) - g(x_g)| \leq \sup_{x\in\mathcal{X}} |f(x) - g(x)|.$
\end{proof}

\begin{lemma}\label{Lem:SubspaceCoveringNumber}
	(Covering Number of Subspace in $\mbR^m$, Example 27.1 in \cite{shalev-shwartz_understanding_2014}) Suppose that $A\subset\mbR^m$, let $c=\max_{a\in A}\|a\|$, and assume that $A$ lies in a $d$-dimensional subspace of $\mbR^m$. Then, $N(r,A) \leq (2c\sqrt{d}/r)^d$.
\end{lemma}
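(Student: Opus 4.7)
The plan is to reduce the problem to covering a subset of $\mbR^d$ (rather than $\mbR^m$) and then to use a standard axis-aligned cube grid to produce the covering. Concretely, let $V\subset\mbR^m$ denote the $d$-dimensional subspace containing $A$. Pick an orthonormal basis of $V$; the associated coordinate map $\Phi:V\to\mbR^d$ is a linear isometry, so it preserves Euclidean distance and hence $N(r,A)=N(r,\Phi(A))$. Since $\|a\|\leq c$ for all $a\in A$, the image $\Phi(A)$ is contained in the closed Euclidean ball $B_d(0,c)\subset\mbR^d$, and it suffices to cover this ball with balls of radius $r$.

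Next I would lay down an axis-aligned cubic grid in $\mbR^d$ of side length $s:=2r/\sqrt{d}$. Any such $d$-cube has Euclidean diameter $s\sqrt{d}=2r$, so the ball of radius $r$ centered at the cube's center contains the entire cube. Hence it is enough to bound the number of grid cubes that intersect $B_d(0,c)$. A cube intersecting $B_d(0,c)$ is contained in $B_d(0,c+s\sqrt{d}/2)=B_d(0,c+r)$, which itself lies in the axis-aligned hypercube $[-(c+r),c+r]^d$ of side $2(c+r)$. The number of grid cubes that fit inside this hypercube is at most $\bigl(2(c+r)/s\bigr)^d=\bigl((c+r)\sqrt{d}/r\bigr)^d$. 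For the nontrivial regime $r\leq c$ this is bounded by $(2c\sqrt{d}/r)^d$, which is the claimed estimate; when $r>c$, the set $A$ fits in a single ball of radius $r$ and the bound holds trivially since the right-hand side exceeds $1$.

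The only slightly delicate step is the boundary accounting in the grid count; one must allow the enlargement from $c$ to $c+r$ to absorb cubes whose centers lie outside $B_d(0,c)$ but which still intersect it. Once that inflation is written down cleanly, the rest is routine counting. No macros beyond those already used in the paper are needed, and the argument depends only on Pythagoras in $\mbR^d$ together with the isometric identification $V\cong\mbR^d$.
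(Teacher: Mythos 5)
The paper never proves this lemma: it is quoted verbatim, with citation, from Example 27.1 of Shalev-Shwartz and Ben-David, whose argument is the standard one — identify the $d$-dimensional subspace with $\mbR^d$ via an orthonormal basis, observe $A\subset B_d(0,c)\subset[-c,c]^d$, and tile $[-c,c]^d$ by subcubes of side $r/\sqrt{d}$; each subcube has diameter $r$, hence lies in a ball of radius $r$ centered at any of its points, and the number of subcubes is $(2c\sqrt{d}/r)^d$. Your proof is the same idea (isometric identification plus a cube grid), so it is not a genuinely different route, but your bookkeeping has two concrete slips. First, a grid cube of side $s=2r/\sqrt{d}$ that meets $B_d(0,c)$ is contained in $B_d(0,c+s\sqrt{d})=B_d(0,c+2r)$, not in $B_d(0,c+r)$; it is only the cube's \emph{center} that lies within distance $c+r$ of the origin. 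Second, the number of grid cells meeting an axis-aligned box of side $L$ is at most $L/s+1$ per axis, so your count $(2(c+r)/s)^d$ silently drops a $+1$ in the base. Once either correction is made, the inflation $c\mapsto c+r$ no longer yields $(2c\sqrt{d}/r)^d$ on the whole range $r\le c$: counting via centers gives $\left((c+r)\sqrt{d}/r+1\right)^d$, which is below the target only when $r\le c\sqrt{d}/(\sqrt{d}+1)$.

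The repair is easy and stays within your framework: since $B_d(0,c)\subset[-c,c]^d$, count the grid cells that meet $[-c,c]^d$ directly, which is at most $\left(2c/s+1\right)^d=\left(c\sqrt{d}/r+1\right)^d\le\left(2c\sqrt{d}/r\right)^d$ whenever $r\le c\sqrt{d}$, covering the nontrivial regime with no inflation and no case split; or simply take side length $r/\sqrt{d}$ as in the textbook, so each cell already has diameter $r$ and the count is the stated constant outright (integrality of $2c\sqrt{d}/r$ is glossed over there as well). One last caveat: your trivial case ``$r>c$, one ball suffices and the right-hand side exceeds $1$'' is only valid for $r\le 2c\sqrt{d}$; for larger $r$ the right-hand side drops below $1$ and the bound, as quoted, fails for any nonempty $A$ — but that is a defect of the statement itself, not of your argument.
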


We now derive the sample complexity for achieving an accurate estimate of $I_{\Gamma,\Theta}$ in \eqref{Eq:OrgnMIEstModApproxFormTwo} by solving \eqref{Eq:OrgnMIEstModEmpriFormTwo} with the samples in the dataset $\mcS:\{(X_i,Y_i)\}_{i=1}^N$ independently and identically distributed (IID) and follow $p_{X,Y}$. 

\begin{thm}\label{Thm:SampleComplexity4EmpiricalEst}
	(Sample Complexity for Estimation Error Bound) We consider a joint distirbution $p_{X,Y}$ in $\mbR^n\times[C]$ where $X\in\mbR^n$ is a continuous random vector, and $Y\in[C]$ is a discrete random variable. We define $I_{\Gamma,\Theta}$ associated with $p_{X,Y}$ similar to \eqref{Eq:OrgnMIEstModApproxFormTwo}, i.e., 
\begin{align}
	I_{\Gamma, \Theta}
= \inf_{\gamma\in\Gamma} \mbE_{P_Y}[- \log\left( Q_{Y}(Y;\gamma)\right)] - \inf_{\theta\in\Theta} \mbE_{p_{X,Y}}[-\log(Q_{Y|X}(Y|X;\theta))],
\end{align}
	where $Q_{Y|X}(Y|X;\theta)$ is a neural network with parameters $\theta\in\Theta\subset\mbR^m$ which predicts the conditional probability of $Y$ conditioning on $X$, and $Q_Y(Y;\gamma)$ is another neural network with parameters $\gamma\in\Gamma\subset\mbR^{m'}$ which predicts the marginal probability of $Y$. Assume that we are given a set of random examples $\mcS: \{(X_i,Y_i)\}_{i=1}^N$ such that $(X_i,Y_i), i=1\cdots,N$ are I.I.D. and follow $p_{X,Y}$. Define $I_{\Gamma,\Theta}^{(N)}$ similar to that in \eqref{Eq:OrgnMIEstModEmpriFormTwo}, i.e., 
\begin{align}
	I_{\Gamma, \Theta} ^{(N)}
	= \inf_{\gamma\in\Gamma } \frac{1}{N} \sum_{i=1}^N - \log(Q_{Y}(Y_i; \gamma)) - \inf_{\theta\in\Theta} \frac{1}{N} \sum_{i=1}^N - \log(Q_{Y|X}(Y_i|X_i; \theta)).
\end{align}
Assume both $\Theta$ and $\Gamma$ are compact sets, and bounded, i.e., $\|\theta\|\leq M_\theta$ and $\|\gamma\|\leq M_\gamma$ where $M_\theta>0, M_\gamma>0$ are constants. We assume both $Q_{Y}(\cdot;\gamma)$ and $Q_{Y|X}(\cdot;\theta)$ are lower bounded by $P^\#$, and they are Lipschitz continuous with respect to $\theta$ for all $y\in[C]$ and all $(x,y)\in\mbR^n\times[C]$, and the Lipschitz constants are $L_\gamma>0$ and $L_\theta>0$, respectively. Then, when $N
\geq \frac{2\log\left(\frac{1}{\epsilon} \right) \left( P^\# \log(P^\#) \right)^2}{\left( \delta P^\# - 4^{\frac{1+m'}{m'}} L_\gamma M_\gamma \sqrt{m'} - 4^{\frac{1+m}{m}} L_\theta M_\theta \sqrt{m} \right)^2}$, we have
\begin{align}
	P\left( \left\{\mcS: \left| I_\Theta^{(N)} - I_\Theta\right| \leq \delta \right\} \right) \geq 1-\epsilon.
\end{align} 
	
\end{thm}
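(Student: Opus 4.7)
The plan is to decouple the error into the marginal-entropy part and the conditional-entropy part, then combine a covering-number reduction of the parameter spaces with the concentration inequalities of Lemmas~\ref{Lem:ConcentrationIneq} and~\ref{Lem:ConcentrationIneqCrossEnt}. Concretely, write
\begin{align*}
\bigl| I_{\Gamma,\Theta}^{(N)} - I_{\Gamma,\Theta} \bigr|
\leq \Bigl| \inf_{\gamma\in\Gamma} \tfrac{1}{N}\sum_{i=1}^N -\log Q_Y(Y_i;\gamma) - \inf_{\gamma\in\Gamma} \mbE_{P_Y}[-\log Q_Y(Y;\gamma)] \Bigr| + \Bigl| \inf_{\theta\in\Theta} \tfrac{1}{N}\sum_{i=1}^N -\log Q_{Y|X}(Y_i|X_i;\theta) - \inf_{\theta\in\Theta} \mbE_{p_{X,Y}}[-\log Q_{Y|X}(Y|X;\theta)] \Bigr|,
\end{align*}
and apply Lemma~\ref{Lem:IndpdtIneq} to each piece to replace the gap of infima by the supremum over $\gamma$ (resp.\ $\theta$) of the empirical-vs-population deviation. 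It suffices to show that each of these two suprema is at most $\delta/2$ with high probability.

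Next, I would build finite $r$-covers $\Theta_c \subset \Theta$ and $\Gamma_c \subset \Gamma$ using Lemma~\ref{Lem:SubspaceCoveringNumber}, which gives $|\Theta_c| \leq (2M_\theta\sqrt{m}/r)^m$ and $|\Gamma_c| \leq (2M_\gamma\sqrt{m'}/r)^{m'}$. For any $\theta \in \Theta$ with nearest cover point $\theta_c$, since $Q_{Y|X}(y|x;\cdot)$ is $L_\theta$-Lipschitz and bounded below by $P^\#$, the function $\log Q_{Y|X}(y|x;\cdot)$ is $(L_\theta/P^\#)$-Lipschitz uniformly in $(x,y)$, so the deviation between using $\theta$ and $\theta_c$ is at most $L_\theta r / P^\#$ on both the empirical average and the population expectation. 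The same argument applies to $\gamma$ with Lipschitz constant $L_\gamma/P^\#$. Hence
\begin{align*}
\sup_{\theta\in\Theta} \bigl| \tfrac{1}{N}\sum_i -\log Q_{Y|X}(Y_i|X_i;\theta) - \mbE[-\log Q_{Y|X}(Y|X;\theta)] \bigr|
\leq \max_{\theta_c\in\Theta_c} \bigl| \cdot \bigr| + \tfrac{2L_\theta r}{P^\#},
\end{align*}
and analogously for the marginal term. Applying Lemma~\ref{Lem:ConcentrationIneq} to $\Theta_c$ and Lemma~\ref{Lem:ConcentrationIneqCrossEnt} to $\Gamma_c$ together with a union bound bounds the discretized maxima by something proportional to $|\log P^\#|\sqrt{\log(|\Theta_c| + |\Gamma_c|)/N}$ with failure probability $\epsilon$.

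Finally I would tune $r$ to balance the discretization error against the concentration error: choosing $r$ so that $2L_\theta r/P^\#$ and $2L_\gamma r/P^\#$ absorb into a clean form of $\delta$, while $(2M\sqrt{\cdot}/r)^{(\cdot)}$ inside the logarithm contributes the factors $4^{(1+m)/m}$ and $4^{(1+m')/m'}$ visible in the stated bound. Solving the resulting inequality for $N$ and collecting constants should yield precisely
\begin{align*}
N \geq \frac{2\log(1/\epsilon)\,(P^\#\log P^\#)^2}{\bigl(\delta P^\# - 4^{(1+m')/m'} L_\gamma M_\gamma \sqrt{m'} - 4^{(1+m)/m} L_\theta M_\theta \sqrt{m}\bigr)^2}.
\end{align*}

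The main obstacle I anticipate is the last step: the bookkeeping needed to match the stated constants $4^{(1+m)/m}$ and $4^{(1+m')/m'}$ is delicate, because these arise from a specific choice of $r$ that simultaneously controls the covering number's logarithm (contributing an $m\log(1/r)$ term from $\log|\Theta_c|$) and the Lipschitz discretization residual $L_\theta r/P^\#$. The Lipschitz-to-log-Lipschitz passage through the lower bound $P^\#$ is the main structural ingredient, but the precise algebraic balancing is what will require care; everything else (the triangle-inequality decomposition, the concentration union bound, and Lemma~\ref{Lem:IndpdtIneq}) is essentially mechanical.
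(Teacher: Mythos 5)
Your proposal follows essentially the same route as the paper's proof: Lemma~\ref{Lem:IndpdtIneq} to convert the gap of infima into suprema of empirical-vs-population deviations, covers of $\Theta$ and $\Gamma$ via Lemma~\ref{Lem:SubspaceCoveringNumber}, the $L/P^\#$ Lipschitz passage from $Q$ to $\log Q$, a union bound with Lemmas~\ref{Lem:ConcentrationIneq} and~\ref{Lem:ConcentrationIneqCrossEnt}, and a final tuning of the radii, so the plan is correct. The only bookkeeping nuance differing from your sketch: the paper picks $r_\gamma = 4^{\frac{1}{2}+\frac{1}{m'}}M_\gamma\sqrt{m'}$ and $r_\theta = 4^{\frac{1}{2}+\frac{1}{m}}M_\theta\sqrt{m}$ so each covering number collapses to the constant $\tfrac{1}{4}$ (nothing ends up inside a logarithm), and the factors $4^{\frac{1+m'}{m'}}L_\gamma M_\gamma\sqrt{m'}$ and $4^{\frac{1+m}{m}}L_\theta M_\theta\sqrt{m}$ in the stated sample complexity come from the discretization residuals $2L_\gamma r_\gamma$ and $2L_\theta r_\theta$ rather than from the covering-number logarithm.
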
 

Theorem \ref{Thm:SampleComplexity4EmpiricalEst} essentially gives the sample complexity for learning mutual information $I_{\Gamma,\Theta}$ from empirical samples via $I_{\Gamma,\Theta}^{(N)}$ to acheive arbitrary precision $\delta$ with arbitrary probability over a dataset $\mcS:\{(X_i,Y_i)\}_{i=1}^N$ where $(X_i,Y_i)$ are I.I.D. and follow $p_{X,Y}$. The above sample complexity is also consistent with our intuitions, e.g., if we want to achieve higher precision (smaller $\delta$) with higher probability (smaller $\epsilon$), we need higher sample complexity (lower bound of $N$ will increase).  We want to point out that Theorem \ref{Thm:SampleComplexity4EmpiricalEst} gives the sample complexity for mutual information learning using two neural networks without weight sharing, similar sample complexity bounds can also be established for the case where a single neural network is used, and we leave it for future work.

\begin{table*}[h]
	\centering
	\begin{tabular}{|l | c | c| c| c| c|}
		\hline 
		top-1 accuracy & celLoss & celLoss$+$LSR & celLoss$+$CP & celLoss$+$LC & milLoss (ours)\\
		\hline
		MLP & $0.934\pm0.000$ & 0.934$\pm$0.001 & 0.930$\pm$0.002 & 0.932$\pm$0.000 & $0.945\pm0.001$\\
		\hline 
		CNN & $0.981\pm0.000$ & 0.982$\pm$0.001 & 0.980$\pm$0.000 & 0.980$\pm$0.001 & $0.984\pm0.001$\\
		\hline 
	\end{tabular}
	\caption{Top-1 accuracy on MNIST dataset associated with different models which are trained with different loss objective function.}\label{Tab:ProofOfConcept_MNIST}
\end{table*}

\begin{table*}[h]
	\centering
	\begin{tabular}{|l | c | c| c| c| c|}
		\hline 
		top-1 accuracy & celLoss & celLoss$+$LSR & celLoss$+$CP & celLoss$+$LC & milLoss (ours) \\
		\hline
		GoogLeNet & $0.803\pm0.006$ & 0.766$\pm$0.004 & 0.784$\pm$0.006 & 0.791$\pm$0.002 & $0.866\pm0.000$\\
		\hline
		ResNet-18 & $0.732\pm0.002$ & 0.679$\pm$0.001 & 0.703$\pm$0.006 & 0.726$\pm$0.005 & $0.832\pm0.004$\\
		\hline
		MobileNetV2 &0.676$\pm$0.007 & 0.647$\pm$0.005 & 0.660$\pm$0.008 & 0.677$\pm$0.004 & 0.762$\pm$0.006\\
		\hline 
		EfficientNet-B0 & 0.524$\pm$0.013 & 0.510$\pm$0.006 & 0.503$\pm$0.008 & 0.524$\pm$0.009 & 0.682$\pm$0.006\\
		\hline 
		%		ResNeXt29\_2x64d & 0.707$\pm$0.004 & 0.664$\pm$0.001 &0.689$\pm$0.009 & 0.698$\pm$0.007 & 0.628$\pm$0.066\\
		%		\hline 
		ShuffleNetV2 & 0.604$\pm$0.003 & 0.554$\pm$0.005 & 0.578$\pm$0.005 & 0.600$\pm$0.004 & 0.677$\pm$0.003\\
		\hline
	\end{tabular}
	\caption{Top-1 accuracy on CIFAR-10 dataset associated with different models which are trained with different loss objective function.}\label{Tab:ProofOfConcept_CIFAR10}
\end{table*}

\begin{table}[h]
	\small
	\centering
	\begin{tabular}{|l | c | c| c| c| c|}
		\hline 
		top-1 (top-5) accuracy & celLoss & celLoss$+$LSR & celLoss$+$CP & celLoss$+$LC & milLoss (ours)\\
		\hline
		DenseNet-121 &0.502 (0.781) & 0.497 (0.766) & 0.485 (0.766) & 0.509 (0.788) & 0.627 (0.869)\\ % D:\RP202202-now_MIC\MILC\checkpoint\CIFAR-100\densenet121\ProOfCocept
		\hline 
		Inception-Resnet-V2 & 0.488 (0.771) & 0.451 (0.715) & 0.434 (0.711) & 0.486 (0.760) & 0.554 (0.824)\\
		\hline
		Inception-V3 &0.503 (0.770) & 0.472 (0.735) & 0.484 (0.749) & 0.510 (0.769) & 0.624 (0.859)\\
		\hline 
		PreAct-Resnet-18 &0.382 (0.684) & 0.383 (0.700) & 0.377 (0.693) & 0.367 (0.688) & 0.480 (0.766)\\
		\hline
		RreAct-Resnet-101 & 0.404 (0.689) & 0.385 (0.640) & 0.386 (0.683) & 0.404 (0.700) & 0.490 (0.766)\\
		\hline 
		ResNet-34 &0.408 (0.701) & 0.405 (0.659) & 0.400 (0.683) & 0.412 (0.697) & 0.520 (0.790)\\
		\hline
		ResNet-50 &0.365 (0.644) & 0.357 (0.611) & 0.349 (0.632) & 0.359 (0.642) & 0.492 (0.769)\\
		\hline
		VGG-16 &0.430 (0.699) & 0.400 (0.663) & 0.405 (0.6828) & 0.420 (0.694) & 0.524 (0.780)\\
		\hline
		%		Swin Transformer &XXX & XXX & XXX & XXX & XXX\\
		%		\hline
	\end{tabular}
	\caption{Top-1 and top-5 accuracy on CIFAR-100 dataset associated with different models which are trained with different loss objective function.}\label{Tab:ProofOfConcept_CIFAR100}
\end{table}

\section{Mutual Information and Error Probability Bound of Binary Classification Data Model in $\mbR^n$}\label{Sec:BinaryClassificationDataModel}

In this section, we derive the mutual information bounds for a binary classification data model $P_{X,Y}$ in $\mbR^n\times\{-1,1\}$. In the data generation process, we first sample a label $y\in\{-1,1\}$, and then a corresponding feature $x$ from a Gaussian distribution. We model the feature as a Gaussian random vector $X$ with sample space $\mbR^n$, i.e., 
\begin{align}\label{Defn:BinaryClassificationDataModel}
	P(Y=-1) = q, P(Y=1) = 1-q, \nonumber \\
	p(X=x|y) = \frac{1}{\sqrt{|2\pi\Sigma|}} \exp\left(-\frac{(x-y\mu)^T\Sigma^{-1}(x-y\mu)}{2}\right),
\end{align}
where $\mu\in\mbR^n$ is a mean vector, and $\Sigma\in\mbR^{n\times n}$ is a positive semidefinite matrix. In this data model, we can derive lower and upper bounds of the mutual information $I(X;Y)$, and the results are presented in Theorem \ref{Thm:BinaryClassificationDataModelTruthMI}. Before getting to Theorem \ref{Thm:BinaryClassificationDataModelTruthMI}, we first derive the expectation of quadratic forms of Gaussian random vector in Lemma \ref{Lem:QuadraticFormExpectation}, which will be used for deriving the bounds of mutual information in the data model \eqref{Defn:BinaryClassificationDataModel}.

\begin{lemma}\label{Lem:QuadraticFormExpectation}
	(Expectation of Quadratic Form of Gaussian Random Vector) For a Gaussian random vector $X\in\mbR^n$ following $\mcN(\mu,\Sigma)$, we have
	\begin{align}\label{Eq:QuadraticFormExpectation}
		\mbE_{X}\left[ X^TAX \right] = \tr(A\Sigma) + \mu^T\Sigma^{-1}\mu, \nonumber\\
		\mbE_{X}\left[ (X-\mu)^TA(X-\mu) \right] = \tr(A\Sigma), \nonumber\\
		\mbE_{X}\left[ (X+\mu)^T A (X+\mu) \right] = \tr(A\Sigma) + 4\mu^T\Sigma \mu,
	\end{align}
where $A\in\mbR^{n\times n}$ is a square matrix.
\end{lemma}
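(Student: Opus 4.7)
The plan is to reduce all three identities to a single computation of $\mathbb{E}[X X^T]$ via the standard trace trick. Since $X^T A X$ is a scalar, I would first write $X^T A X = \mathrm{tr}(X^T A X) = \mathrm{tr}(A X X^T)$, and then pull the expectation inside the trace (which is justified by linearity) to obtain $\mathbb{E}_X[X^T A X] = \mathrm{tr}(A \, \mathbb{E}_X[X X^T])$. So the core object I need is the second moment matrix of a Gaussian.

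For that, I would recall (or briefly verify) the standard identity $\mathbb{E}_X[X X^T] = \Sigma + \mu \mu^T$, which follows directly from $\mathrm{cov}(X) = \mathbb{E}_X[X X^T] - \mu \mu^T = \Sigma$. Substituting gives $\mathbb{E}_X[X^T A X] = \mathrm{tr}(A\Sigma) + \mathrm{tr}(A\mu\mu^T) = \mathrm{tr}(A\Sigma) + \mu^T A \mu$, which yields the first identity. For the second identity, I would simply introduce $Z := X - \mu \sim \mathcal{N}(0, \Sigma)$, a zero-mean Gaussian with the same covariance, and apply the first result to $Z$: since the $\mu$-term vanishes, we get $\mathbb{E}_X[(X-\mu)^T A (X-\mu)] = \mathbb{E}_Z[Z^T A Z] = \mathrm{tr}(A\Sigma)$.

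For the third identity, I would expand $(X+\mu)^T A (X+\mu) = X^T A X + \mu^T A X + X^T A \mu + \mu^T A \mu$, take expectations term by term using $\mathbb{E}_X[X] = \mu$, and combine with the first identity. The two cross terms contribute $2\mu^T A \mu$, the quadratic term in $X$ contributes $\mathrm{tr}(A\Sigma) + \mu^T A \mu$, and the deterministic term contributes $\mu^T A \mu$, giving a total of $\mathrm{tr}(A\Sigma) + 4\mu^T A \mu$.

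There is no real obstacle here; the proof is entirely mechanical once one invokes the trace trick. The only subtle points to double-check are (i) the cyclic property of the trace, which requires compatible dimensions (satisfied since $A \in \mathbb{R}^{n\times n}$ and $X \in \mathbb{R}^n$), and (ii) that the stated right-hand sides in the lemma match the standard $\mu^T A \mu$ form rather than something involving $\Sigma^{-1}$ or $\Sigma$ sandwiching $\mu$; if the lemma intends a specific specialization (e.g., $A = \Sigma^{-1}$), then $\mu^T A \mu$ collapses to $\mu^T \Sigma^{-1} \mu$ as written, which is likely how the identities will be consumed later when computing the Gaussian log-density expectations needed for bounding $I(X;Y)$ in the binary classification model.
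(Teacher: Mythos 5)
Your proof is correct and follows essentially the same route as the paper: both rest on the trace trick $x^TAx=\tr(Axx^T)$ together with the Gaussian second moment (the paper decomposes $xx^T=(x-\mu)(x-\mu)^T-\mu\mu^T+x\mu^T+\mu x^T$ inside the integral, which is just an unrolled version of your $\mbE[XX^T]=\Sigma+\mu\mu^T$), and it likewise proves the other two identities by the analogous computation. Your side remark (ii) is also on target: the paper's own derivation yields $\tr(A\Sigma)+\mu^TA\mu$ (and $\tr(A\Sigma)+4\mu^TA\mu$), so the $\mu^T\Sigma^{-1}\mu$ and $4\mu^T\Sigma\mu$ appearing in the lemma statement are typos for the general $A$, matching the intended specialization $A=\Sigma^{-1}$ used later.
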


From Lemma \ref{Lem:QuadraticFormExpectation}, we know that when $A=\Sigma^{-1}$, we have 
\begin{align}
\mbE_{X}\left[ X^T\Sigma^{-1}X \right] = n + \mu^T\Sigma^{-1}\mu,\\
		\mbE_{X}\left[ (X-\mu)^T\Sigma^{-1}(X-\mu) \right] = n,\\
\mbE_{X}\left[ (X+\mu)^T \Sigma^{-1} (X+\mu) \right] = n + 4\mu^T\Sigma \mu.
\end{align}
When $X\sim\mcN(-\mu,\Sigma)$, $\mbE_X[X^TAX] = \tr(A\Sigma) + \mu^TA\mu$. When $X\sim\mcN(-\mu,\Sigma)$, $\mbE_{X}[(X-\mu)^TA(X-\mu)] = \tr(A\Sigma)+4\mu^T\Sigma\mu$.

\begin{thm}\label{Thm:BinaryClassificationDataModelTruthMI}
	(Mutual Information of Binary Classification Dataset Model) For the data model with distribution defined in \eqref{Defn:BinaryClassificationDataModel}, we have the mutual information $I(X;Y)$ satisfying
	\begin{align}\label{Eq:MIinBinaryClassificationModel}
2\min(q,1-q)\mu^T\Sigma^{-1}\mu \leq I(X;Y) \leq 4q(1-q)\mu^T\Sigma^{-1}\mu.
	\end{align}
\end{thm}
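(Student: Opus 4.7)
The plan is to express the mutual information via
\[
I(X;Y) = q\, D(p_-\,\|\,p_X) + (1-q)\, D(p_+\,\|\,p_X),
\]
where $p_\pm := \mcN(\pm\mu,\Sigma)$ and $p_X = q p_- + (1-q) p_+$, and then bound the two conditional KL terms separately. The central closed-form ingredient is the KL divergence between two equal-covariance Gaussians, $D(\mcN(\mu_1,\Sigma)\,\|\,\mcN(\mu_2,\Sigma)) = \tfrac12(\mu_1-\mu_2)^T\Sigma^{-1}(\mu_1-\mu_2)$, which I would derive in one line by writing out $\mbE[\log(p_1/p_2)]$ and evaluating the resulting quadratic-form expectations using Lemma~\ref{Lem:QuadraticFormExpectation}. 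In our symmetric setting this immediately gives $D(p_-\|p_+) = D(p_+\|p_-) = 2\mu^T\Sigma^{-1}\mu$.

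The upper bound is then immediate from joint convexity of KL in its second argument. Since $p_X = q p_- + (1-q) p_+$,
\[
D(p_-\,\|\,p_X) \le q\, D(p_-\|p_-) + (1-q)\, D(p_-\|p_+) = 2(1-q)\mu^T\Sigma^{-1}\mu,
\]
and symmetrically $D(p_+\,\|\,p_X) \le 2q\mu^T\Sigma^{-1}\mu$. Substituting into the decomposition yields $I(X;Y) \le 4q(1-q)\mu^T\Sigma^{-1}\mu$.

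For the lower bound I would work with the log-likelihood ratio representation
\[
D(p_-\,\|\,p_X) = -\mbE_{p_-}\!\bigl[\log\bigl(q + (1-q)e^{2X^T\Sigma^{-1}\mu}\bigr)\bigr],
\]
where the ratio $p_+(X)/p_-(X) = \exp(2X^T\Sigma^{-1}\mu)$ emerges from completing the square in the Gaussian density (another use of Lemma~\ref{Lem:QuadraticFormExpectation}). The strategy is to use a tangent-line bound for the concave map $z \mapsto \log(q + (1-q)e^z)$, so that the expectation can be evaluated exactly against the Gaussian $p_-$, under which $2X^T\Sigma^{-1}\mu$ has mean $-2\mu^T\Sigma^{-1}\mu$ and variance $4\mu^T\Sigma^{-1}\mu$. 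An analogous bound for $D(p_+\|p_X)$, combined with dropping whichever of the two summands in the decomposition carries the smaller weight $\min(q,1-q)$, is meant to deliver the claimed lower bound of $2\min(q,1-q)\mu^T\Sigma^{-1}\mu$.

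The hard part will be the lower bound. Convexity of KL in the second argument is the reason the upper bound is effortless, but the same inequality run in reverse collapses to the trivial $D\ge 0$: indeed the Gaussian MGF gives $\mbE_{p_-}[q + (1-q)e^{2X^T\Sigma^{-1}\mu}] = 1$, so a naive Jensen step yields nothing. The delicate issue is choosing the tangent-line or second-order correction carefully enough that the resulting error terms do not swamp the desired leading contribution $\mu^T\Sigma^{-1}\mu$, and that the bound remains valid outside the small-signal regime. I would be prepared to first reduce to the one-dimensional sufficient statistic $T = X^T\Sigma^{-1}\mu$ to obtain a scalar Gaussian-mixture problem where explicit tail estimates are available, and sanity-check the constants in the symmetric case $q=\tfrac12$ before tackling the general $n$-dimensional, asymmetric statement.
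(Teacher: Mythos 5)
Your upper-bound half is correct and takes a genuinely different route from the paper's: you write $I(X;Y)=q\,D(p_-\,\|\,p_X)+(1-q)\,D(p_+\,\|\,p_X)$ and invoke convexity of KL in its second argument together with $D(p_-\,\|\,p_+)=D(p_+\,\|\,p_-)=2\mu^T\Sigma^{-1}\mu$, whereas the paper's proof works with $I(X;Y)=h(X)-h(X|Y)$, lower-bounds the log of the mixture density by Jensen's inequality for $\exp$, and evaluates the resulting Gaussian quadratic-form expectations via Lemma \ref{Lem:QuadraticFormExpectation}. Both give $4q(1-q)\mu^T\Sigma^{-1}\mu$; your route avoids computing $h(X)$ at all and is shorter and cleaner.

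The lower-bound half has a genuine gap, and it is not repairable by a better choice of tangent point. First, the map $z\mapsto\log\bigl(q+(1-q)e^{z}\bigr)$ is convex (a log-sum-exp), not concave as you claim; tangent lines minorize it, so a tangent-line or Jensen step produces a lower bound on $\mbE_{p_-}\bigl[\log\bigl(q+(1-q)e^{2X^T\Sigma^{-1}\mu}\bigr)\bigr]$, hence an upper bound on $D(p_-\,\|\,p_X)$ --- the wrong direction. Second, and more fundamentally, no argument can deliver the stated lower bound in general, because it is false once $\mu^T\Sigma^{-1}\mu$ is large: since $p_X\geq q\,p_-$ and $p_X\geq(1-q)\,p_+$ pointwise, $D(p_-\,\|\,p_X)\leq\log\tfrac{1}{q}$ and $D(p_+\,\|\,p_X)\leq\log\tfrac{1}{1-q}$, so $I(X;Y)\leq H(Y)\leq\log 2$ nats, while $2\min(q,1-q)\,\mu^T\Sigma^{-1}\mu$ is unbounded (e.g.\ $q=\tfrac12$, $n=1$, $\Sigma=1$, $\mu=2$ makes the claimed lower bound $4>\log 2$). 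This also explains why your sanity check at $q=\tfrac12$ would have failed. For comparison, the paper's own proof of this half bounds $-\log p_X(x)$ from below by $\min\bigl(A(x),B(x)\bigr)$ plus a constant (with $A,B$ the two quadratic forms) and then replaces $\mbE_{p_X}[\min(A,B)]$ by $\min\bigl(\mbE_{p_X}A,\mbE_{p_X}B\bigr)$; that exchange only holds with ``$\leq$'', so the paper's chain breaks at the same place your plan does. The defensible content of your approach is the upper bound; the lower bound as stated would have to be weakened (e.g.\ capped by $H(Y)$ or restricted to a small-signal regime) before any completion of your sketch could succeed.
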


From Theorem \ref{Thm:BinaryClassificationDataModelTruthMI}, we can see that for the mutual information $I(X;Y)$, the maximum upper bound is acheived when $q=0.5$. For a simplified case in $\mbR$ with $\mu=1$ and variance $\sigma^2=1$, when the variance becomes bigger, the two distributions $\mcN(1,\sigma^2)$ and $\mcN(-1,\sigma^2)$ get closer to each other. Thus, conditioning on $X$ can give very little information about $Y$, making it difficult to differentiate the two class labels. In Figure \ref{Fig:XConditionalDistributionsUnderDifferentVariance}, we give illustrations for this phenomenon. As we can see in Figure \ref{Fig:XConditionalDistributionsUnderDifferentVariance}, as the $\sigma^2$ increases from 1 to 100, the two conditional distributions $\mcN(1,\sigma^2)$ and $\mcN(-1,\sigma^2)$ get closer to each other. This results in that less information is revealed about $Y$ when we condition on $X$ with larger variance $\sigma^2$. We also give illustrations of the mutual information bounds for the data distribution in $\mbR$ in Figure \ref{Fig:MutualInformationBounds}. Theorem \ref{Thm:BinaryClassificationDataModelTruthMI} can be easily generalized to multi-class classification in $\mbR^n$, and we leave this for future work.

\begin{figure*}[!htb]
	\centering
	\begin{subfigure}[b]{0.5\textwidth}
		\centering
		\includegraphics[width=\linewidth]{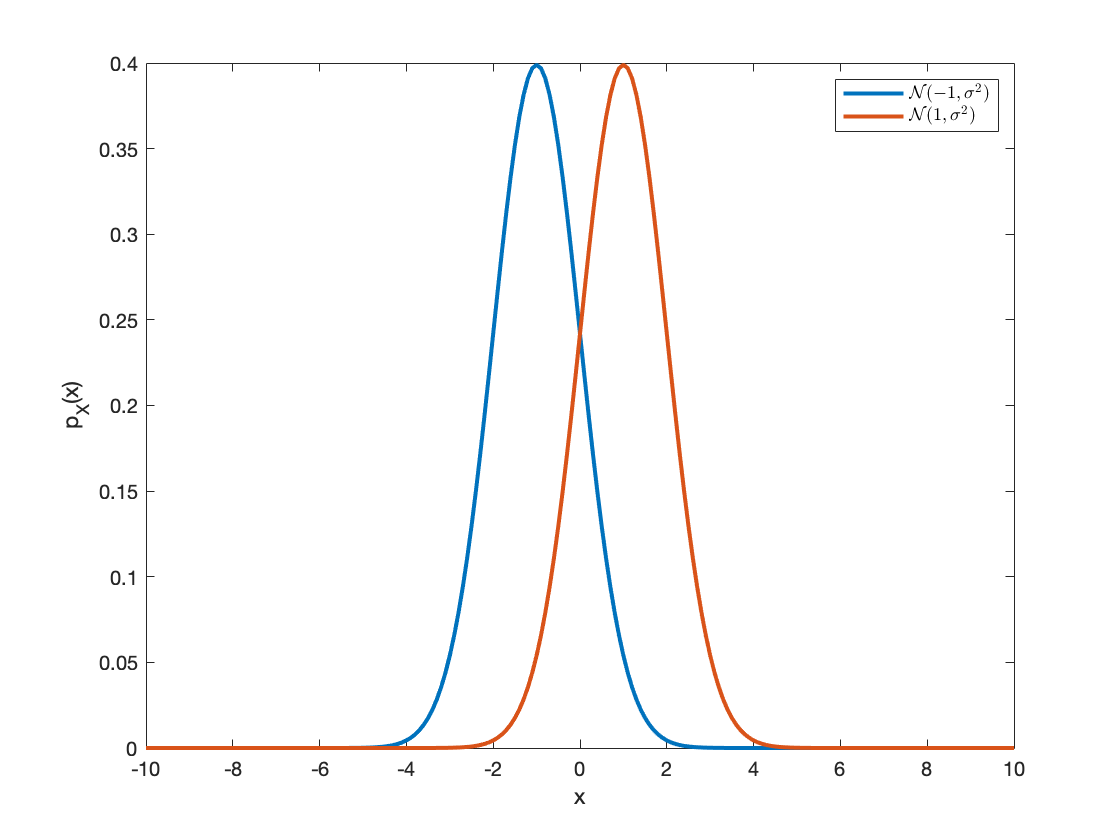}
		\caption{Variance $\sigma^2=1$}
	\end{subfigure}%
	~
	\begin{subfigure}[b]{0.5\textwidth}
		\centering
		\includegraphics[width=\linewidth]{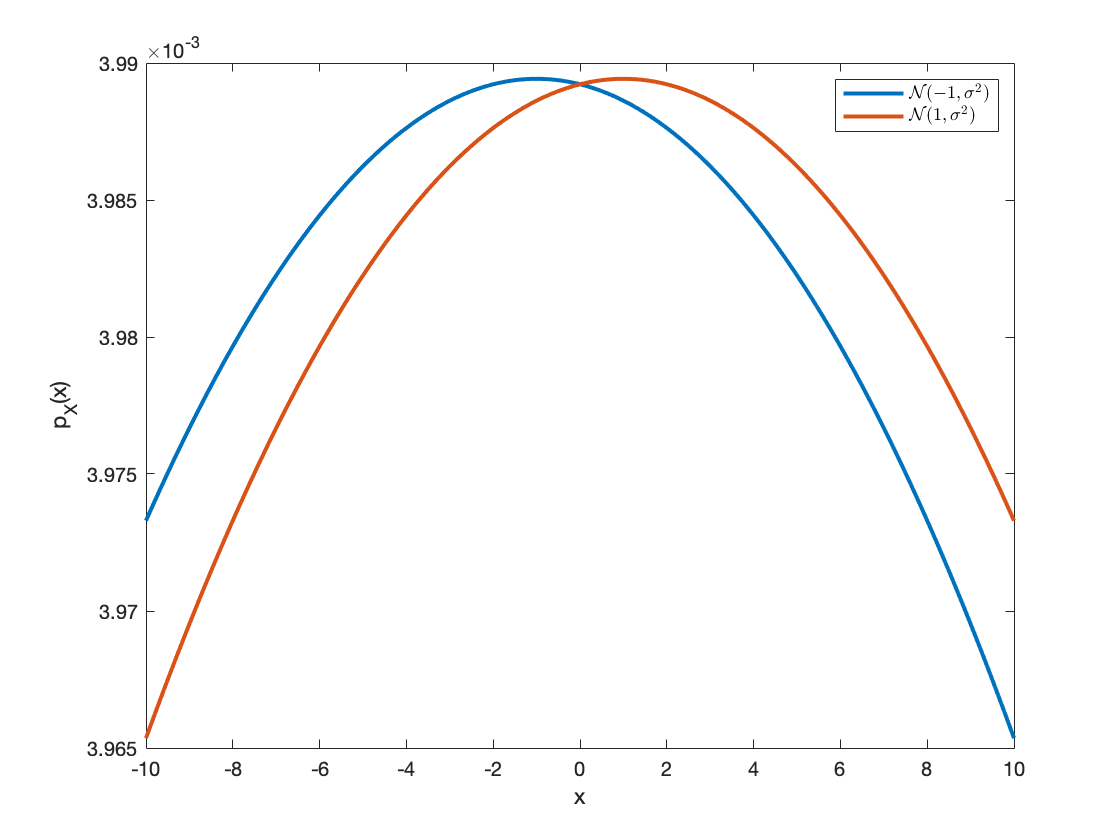}
		\caption{Variance $\sigma^2=100$}
	\end{subfigure}
	\caption{Conditional distribution $p_{X|Y}$ with different variance $\sigma^2$.}\label{Fig:XConditionalDistributionsUnderDifferentVariance}
\end{figure*}

\begin{figure*}[!htb]
	\centering
	\begin{subfigure}[b]{0.5\textwidth}
		\centering
		\includegraphics[width=\linewidth]{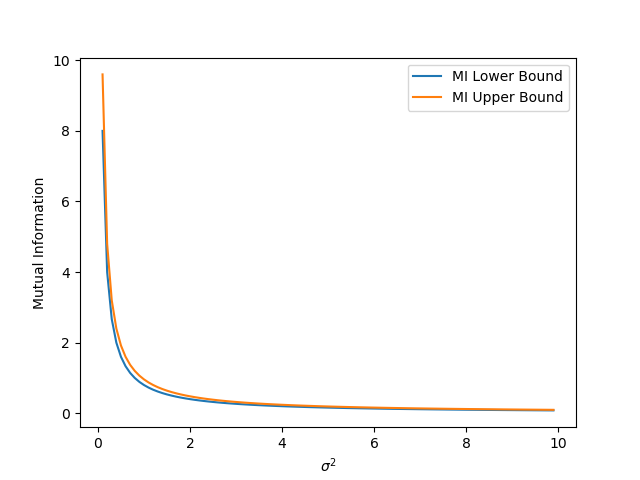}
		\caption{$q=0.4$}
	\end{subfigure}%
	~
	\begin{subfigure}[b]{0.5\textwidth}
		\centering
		\includegraphics[width=\linewidth]{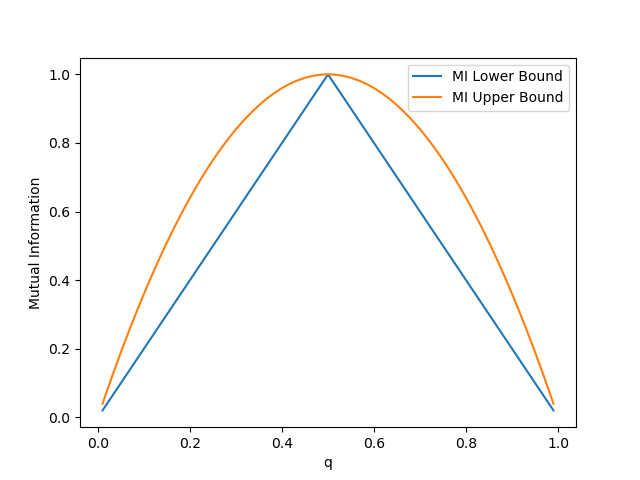}
		\caption{$\sigma^2=1$}
	\end{subfigure}
	\caption{Mutual information bounds of binary classification data model in \eqref{Defn:BinaryClassificationDataModel}.}\label{Fig:MutualInformationBounds}
\end{figure*}

Based on Theorem \ref{Thm:EntEstBound} and \ref{Thm:BinaryClassificationDataModelTruthMI}, we derive a error probability lower bound for the binary classification data model  \eqref{Defn:BinaryClassificationDataModel} in Corollary \ref{Corola:ErrorProbabilityBound}.

\begin{figure}[!htb]
	\centering
	\begin{subfigure}[b]{0.45\textwidth}
		\centering
		\includegraphics[width=\linewidth]{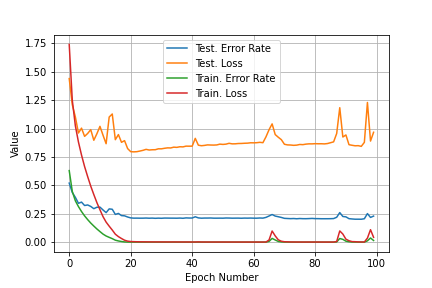}
		\caption{celLoss}\label{Fig:CIFAR10_GoogLeNet_cel}
	\end{subfigure}%
	~
	\begin{subfigure}[b]{0.45\textwidth}
		\centering
		\includegraphics[width=\linewidth]{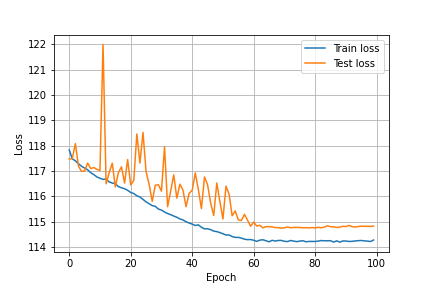}
		\caption{milLoss}\label{Fig:CIFAR10_GoogLeNet_milLoss}
	\end{subfigure}
	\begin{subfigure}[b]{0.45\textwidth}
		\centering
		\includegraphics[width=\linewidth]{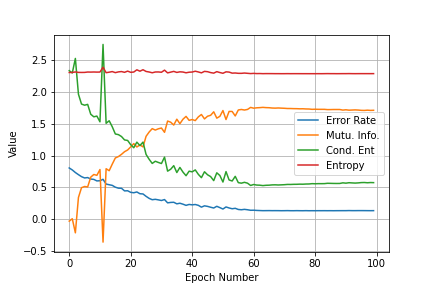}
		\caption{milLoss}\label{Fig:CIFAR10_GoogLeNet_milInformation}
	\end{subfigure}%
	~
	\begin{subfigure}[b]{0.45\textwidth}
		\centering
		\includegraphics[width=\linewidth]{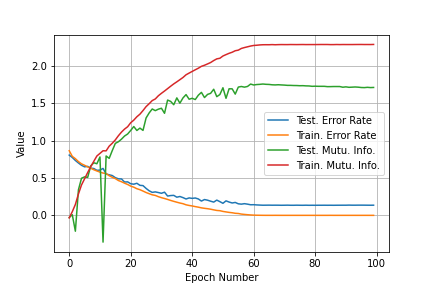}
		\caption{milLoss}\label{Fig:CIFAR10_GoogLeNet_milErrorRate_MI}
	\end{subfigure}
	\begin{subfigure}[b]{0.45\textwidth}
		\centering
		\includegraphics[width=\linewidth]{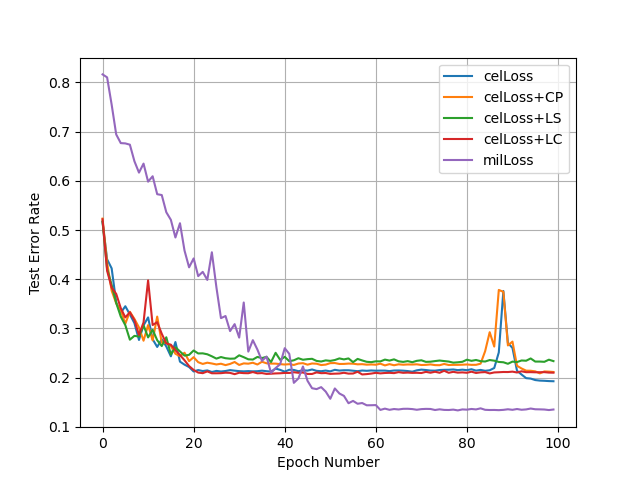}
		\caption{Error rates}\label{Fig:CIFAR10_GoogLeNet_milAcel_ErrorRate}
	\end{subfigure}
	\caption{GoogLeNet on CIFAR-10. \ref{Fig:CIFAR10_GoogLeNet_cel}: error rate and loss during training and test at different epochs. \ref{Fig:CIFAR10_GoogLeNet_milLoss}: loss during training and testing at different epochs. \ref{Fig:CIFAR10_GoogLeNet_milInformation}: error rate, mutual information, label conditional entropy, and label entropy during test at different epochs. \ref{Fig:CIFAR10_GoogLeNet_milErrorRate_MI}: mutual information and error rate during training and testing at different epoch. \ref{Fig:CIFAR10_GoogLeNet_milAcel_ErrorRate}: testing error rate curves associated with different loss functions.}\label{Fig:CIFAR10_GoogLeNet}
\end{figure}

\begin{cor}\label{Corola:ErrorProbabilityBound}
	For the data distribution defined in \eqref{Defn:BinaryClassificationDataModel}, we assume the $Y\to X\to \hat{Y}$ forms a Markov chain where $\hat{Y}$ is the prediction from a classifier, and we follow the learning process in Figure \ref{Fig:ErrorProbability} to learn the classifier. Then, the error probability for an arbitrary classifier must satisfy
	\begin{align*}
	\max\left( 0,\frac{2+H(Y) - 4q(1-q)\mu^T\Sigma^{-1}\mu - a}{4} \right)
	\leq P_{error},
\end{align*}
where $a:=\sqrt{(H(Y) - I(X;Y) - 2)^2 + 4}$.
\end{cor}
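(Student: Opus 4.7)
The plan is straightforward: combine the general error probability lower bound from Theorem \ref{Thm:ErrorProbabilityMI_Bounds} with the distribution-specific upper bound on the mutual information from Theorem \ref{Thm:BinaryClassificationDataModelTruthMI}. Since the corollary explicitly assumes the Markov chain $Y\to X\to \hat Y$ and uses the same learning pipeline as Figure \ref{Fig:ErrorProbability}, I would first apply Theorem \ref{Thm:ErrorProbabilityMI_Bounds} verbatim to obtain
\begin{equation*}
P_{error}\ \geq\ \max\!\left(0,\ \frac{2+H(Y)-I(X;Y)-\sqrt{(H(Y)-I(X;Y)-2)^2+4}}{4}\right).
\end{equation*}

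The second step is to replace $I(X;Y)$ by its upper bound $4q(1-q)\mu^T\Sigma^{-1}\mu$ coming from Theorem \ref{Thm:BinaryClassificationDataModelTruthMI}. This substitution is valid only if the function
\begin{equation*}
f(t):=\frac{2+H(Y)-t-\sqrt{(H(Y)-t-2)^2+4}}{4}
\end{equation*}
is non-increasing in $t$, so that replacing $I(X;Y)$ by a quantity at least as large can only weaken (not invalidate) the bound. A one-line calculus check settles this: differentiating gives
\begin{equation*}
f'(t)=\frac{1}{4}\!\left(-1+\frac{H(Y)-t-2}{\sqrt{(H(Y)-t-2)^2+4}}\right)<0,
\end{equation*}
since $|s|<\sqrt{s^2+4}$ for every real $s$. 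The outer $\max(0,\cdot)$ preserves monotonicity, so we conclude $P_{error}\geq f\bigl(4q(1-q)\mu^T\Sigma^{-1}\mu\bigr)$, which is the claim.

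There is no substantive obstacle once Theorems \ref{Thm:ErrorProbabilityMI_Bounds} and \ref{Thm:BinaryClassificationDataModelTruthMI} are in hand; the argument is a two-step sandwich plus a monotonicity check. The only thing worth flagging is that the stated corollary keeps the symbol $I(X;Y)$ inside the definition of $a$ while replacing it by $4q(1-q)\mu^T\Sigma^{-1}\mu$ in the numerator outside $a$. The monotonicity argument above justifies substituting the mutual-information upper bound in both places simultaneously, which yields a fully explicit (and in fact strictly stronger) version of the inequality; the form written in the corollary then follows by applying the same monotonicity step only to the outer occurrence of $I(X;Y)$.
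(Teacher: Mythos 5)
Your proposal is correct and is essentially the paper's own argument: the paper's proof consists of the single remark that one should plug the mutual-information upper bound of Theorem \ref{Thm:BinaryClassificationDataModelTruthMI} into the error-probability bound of Theorem \ref{Thm:ErrorProbabilityMI_Bounds}, and your monotonicity check on $f(t)$ supplies exactly the justification (that substituting an upper bound for $I(X;Y)$ can only weaken, not invalidate, the lower bound) which the paper leaves implicit. One parenthetical remark is backwards, though: since $H(Y)\le\log 2<2$ for this binary model, replacing $I(X;Y)$ by its larger upper bound \emph{inside} $a$ increases $a$ and therefore lowers the bound, so the fully substituted version is weaker than the mixed form stated in the corollary, not ``strictly stronger.'' This does not affect the validity of your derivation of the stated inequality, since, as you note, substituting only in the outer (linear) occurrence of $I(X;Y)$ is licensed by monotonicity in that argument alone.
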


We can simply plug in the bounds of MI from Theorem \ref{Thm:BinaryClassificationDataModelTruthMI} to Theorem \ref{Thm:EntEstBound} to get Corollary \ref{Corola:ErrorProbabilityBound}. This is also intuitive. For example, when we consider the model in $\mbR$ with mean $\mu\in\mbR$ and $\sigma^2\in[0,+\infty)$, we have $\max\left( 0,\frac{2+H(Y) - 4q(1-q)\mu^T\Sigma^{-1}\mu - a}{4} \right)
\leq P_{error}$. When we increase $\mu$ (the distributions from two classes become farther from each other) and decrease $\sigma^2$ (the distributions from two classes become more concentrated), we are more likely to classify them correctly, thus a lower error probability. We also want to mention that the results from Theorem \ref{Thm:EntEstBound} applies to arbitrary data distribution and arbitrary learning algorithms, but Corollary \ref{Corola:ErrorProbabilityBound} only applies for the binary classification data model in \eqref{Defn:BinaryClassificationDataModel} with arbitrary learning algorithms.

\begin{figure}[!htb]
	\centering
	\begin{subfigure}[b]{0.45\textwidth}
		\centering
		\includegraphics[width=\linewidth]{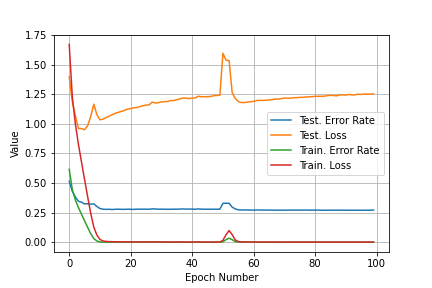}
		\caption{celLoss}\label{Fig:CIFAR10_ResNet18_cel}
	\end{subfigure}%
	~
	\begin{subfigure}[b]{0.45\textwidth}
		\centering
		\includegraphics[width=\linewidth]{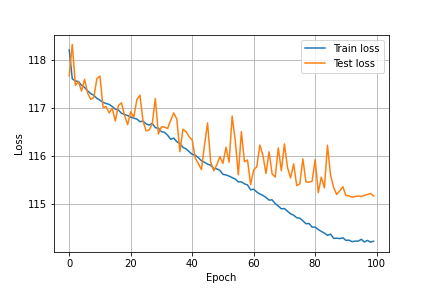}
		\caption{milLoss}\label{Fig:CIFAR10_ResNet18_milLoss}
	\end{subfigure}
	\begin{subfigure}[b]{0.45\textwidth}
		\centering
		\includegraphics[width=\linewidth]{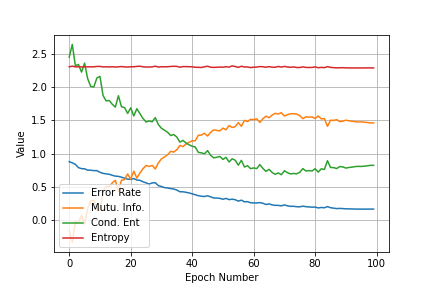}
		\caption{milLoss}\label{Fig:CIFAR10_ResNet18_milInformation}
	\end{subfigure}%
	~
	\begin{subfigure}[b]{0.45\textwidth}
		\centering
		\includegraphics[width=\linewidth]{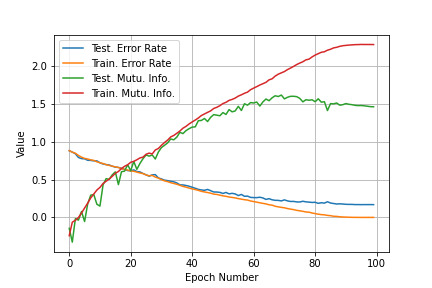}
		\caption{milLoss}\label{Fig:CIFAR10_ResNet18_milErrorRate_MI}
	\end{subfigure}
	\begin{subfigure}[b]{0.45\textwidth}
		\centering
		\includegraphics[width=\linewidth]{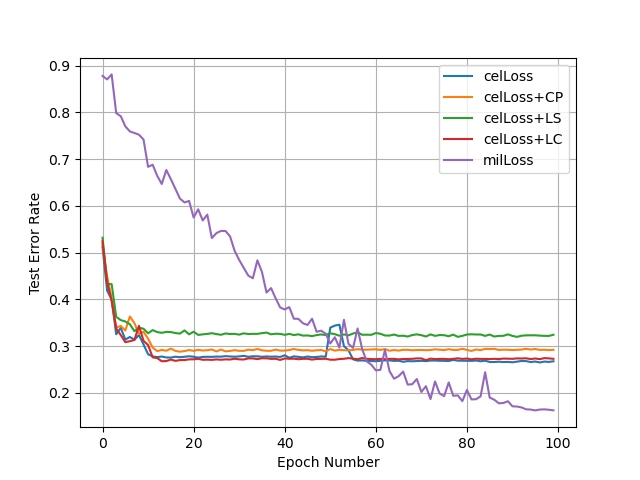}
		\caption{Error rates}\label{Fig:CIFAR10_ResNet18_milAcel_ErrorRate}
	\end{subfigure}
	\caption{ResNet18 on CIFAR-10. \ref{Fig:CIFAR10_ResNet18_cel}: error rate and loss during training and test at different epochs. \ref{Fig:CIFAR10_ResNet18_milLoss}: loss during training and testing at different epochs. \ref{Fig:CIFAR10_ResNet18_milInformation}: error rate, mutual information, label conditional entropy, and label entropy during test at different epochs. \ref{Fig:CIFAR10_ResNet18_milErrorRate_MI}: mutual information and error rate during training and testing at different epoch. \ref{Fig:CIFAR10_ResNet18_milAcel_ErrorRate}: testing error rate curves associated with different loss functions.}\label{Fig:CIFAR10_ResNet18}
\end{figure}

\section{Experimental Results}\label{Sec:ExperimentalResults}

In this section, we present experimental results from multi-class classification on the MNIST, CIFAR-10 , and CIFAR-100 to validate our theory \cite{he_deep_2015}. All our experiments are conducted on a Windows machine with Intel Core(TM) i9 CPU @ 3.7GHz, 64Gb RAM, and 1 NVIDIA RTX 3090 GPU card. %Our implementation will be released at \underline{\href{https://github.com/JAMES-YI/MILC}{MILC}} in github after the reviewing process.

{\bf Implementations and Configurations of Baseline Models} The MNIST classification task is in $[0,255]^{784}\times\{0,1,\cdots,9\}$, and our goal is to classify a given hand-written digital image into one of the 10 classes. The MNIST dataset has 60,000 examples for training, and another 10,000 examles for testing. We use both a multiple layer perceptron (MLP) and a convolutional neural network (CNN) to train two different classifiers by using the regularized form of the mutual information learning loss in  \eqref{Eq:MIAlterEstParametericMethodEquivalentRegluarizedForm}, the conditional entropy learning loss in \ref{Eq:CondEntLoss}, and also its regularized forms as discussed in previous sections, i.e., celLoss with label smoothing regularization (LSR), celLoss with confidence penalty regularization (CP), celLoss with label correction regularization (LC) \cite{szegedy_going_2015,pereyra_regularizing_2017,wang_proselflc_2021}. The regularization parameter $\lambda_{ent}$ is set to be $1e5$. The MLP is a 3-layer fully connected neural network with 64, 64, and 10 neurons in each layer. All the layers except the final layer use a relu activation function. The CNN is a 4-layer neural network with 2 convolutional layers followed by 2 fully connected layers. The first convolutional layer has 10 kernels of size $5\times5$, and the second convolutional layer has 20 kernels of size $5\times5$. A maxpooling layer with stride 2 is applied after each convolutional layer. The two fully connected layers have 50 and 10 neurons, respectively, and the first fully connected layer uses relu activation function. We use SGD optimizer with a constant learning rate 1e-3 and a momentum 0.9, and we do not use weight decay. During training, we use a batch size of 512, and no data augmentation is used. When the classifiers are trained with milLoss, the regularization parameter $\lambda_{ent}=50$. Each model is trained for 77 epochs. 

\begin{figure}[!htb]
	\centering
	\begin{subfigure}[b]{0.45\textwidth}
		\centering
		\includegraphics[width=\linewidth]{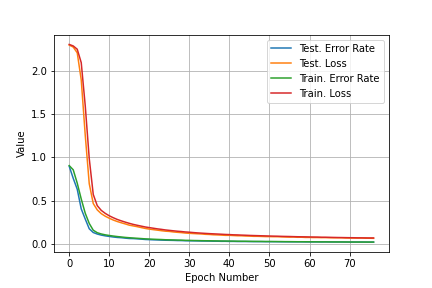}
		\caption{celLoss}\label{Fig:MNIST_CNN_cel}
	\end{subfigure}%
	~
	\begin{subfigure}[b]{0.45\textwidth}
		\centering
		\includegraphics[width=\linewidth]{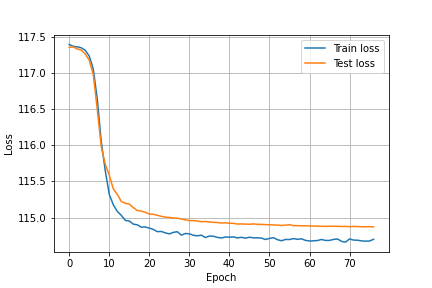}
		\caption{milLoss}\label{Fig:MNIST_CNN_milLoss}
	\end{subfigure}
	~
	\begin{subfigure}[b]{0.45\textwidth}
		\centering
		\includegraphics[width=\linewidth]{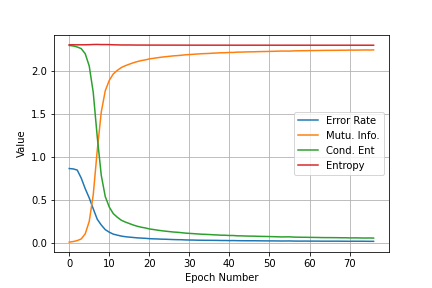}
		\caption{milLoss}\label{Fig:MNIST_CNN_milInformation}
	\end{subfigure}%
	\\
	\begin{subfigure}[b]{0.45\textwidth}
		\centering
		\includegraphics[width=\linewidth]{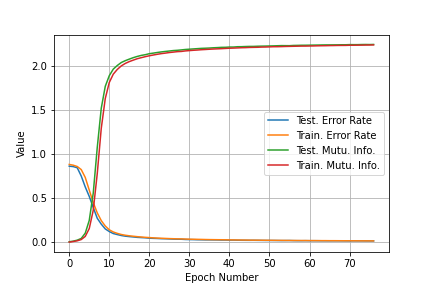}
		\caption{milLoss}\label{Fig:MNIST_CNN_milErrorRate_MI}
	\end{subfigure}
	\begin{subfigure}[b]{0.45\textwidth}
		\centering
		\includegraphics[width=\linewidth]{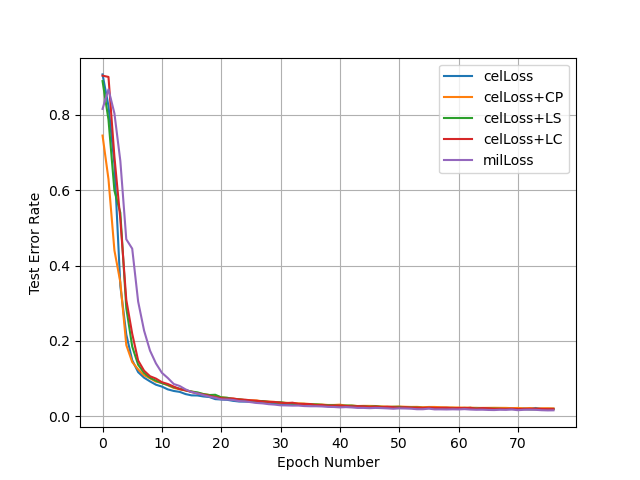}
		\caption{Error rates}\label{Fig:MNIST_CNN_milAcel_ErrorRate}
	\end{subfigure}
	\caption{CNN on MNIST. \ref{Fig:MNIST_CNN_cel}: error rate and loss during training and test at different epochs. \ref{Fig:MNIST_CNN_milLoss}: loss during training and testing at different epochs. \ref{Fig:MNIST_CNN_milInformation}: error rate, mutual information, label conditional entropy, and label entropy during test at different epochs. \ref{Fig:MNIST_CNN_milErrorRate_MI}: mutual information and error rate during training and testing at different epoch. \ref{Fig:MNIST_CNN_milAcel_ErrorRate}: testing error rate curve associated with different loss function.}\label{Fig:MNIST_CNN}
\end{figure}

Similarly, the CIFAR-10 classification task is in $[0,255]^{3072}\times\{0,1,\cdots,9\}$, and we want to assign each image a label. The dataset has 50,000 images for training and another 10,000 images for testing. We use the ResNet-18, GoogLeNet, MobileNetV2, EfficientNetB0, ResNeXt29\_2x64d, and ShuffleNetV2 \cite{he_deep_2015,szegedy_going_2015,sandler_mobilenetv2_2019,tan_efficientnet_2020,xie_aggregated_2017,ma_shufflenet_2018} to train classifiers by the conditional entropy learning loss minimization and the mutual information learning loss minimization, respectively. We use SGD optimizer with a constant learning rate 1e-3 and a momentum 0.9. The batch size is set to be 256. The $\lambda_{ent}$ takes value 5e1 when the mutual information learning loss is used to train the models. Each model is trained for 100 epochs. 

\begin{figure}[!htb]
	\centering
	\begin{subfigure}[b]{0.45\textwidth}
		\centering
		\includegraphics[width=\linewidth]{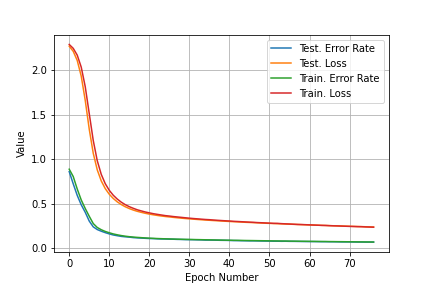}
		\caption{celLoss}\label{Fig:MNIST_MLP_cel}
	\end{subfigure}%
	~
	\begin{subfigure}[b]{0.45\textwidth}
		\centering
		\includegraphics[width=\linewidth]{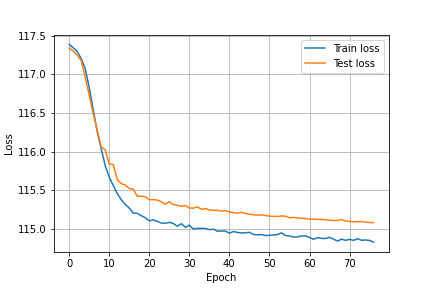}
		\caption{milLoss}\label{Fig:MNIST_MLP_milLoss}
	\end{subfigure}
	\begin{subfigure}[b]{0.45\textwidth}
		\centering
		\includegraphics[width=\linewidth]{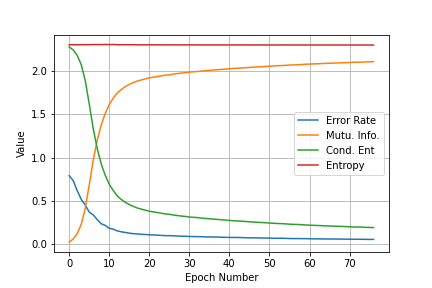}
		\caption{milLoss}\label{Fig:MNIST_MLP_milInformation}
	\end{subfigure}%
	~
	\begin{subfigure}[b]{0.45\textwidth}
		\centering
		\includegraphics[width=\linewidth]{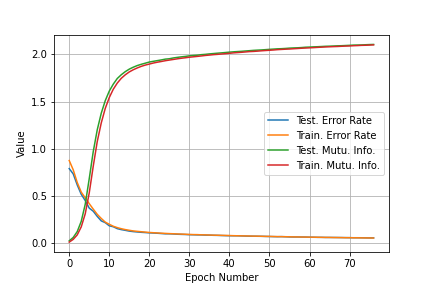}
		\caption{milLoss}\label{Fig:MNIST_MLP_milTrainTest}
	\end{subfigure}
	~
	\begin{subfigure}[b]{0.45\textwidth}
		\centering
		\includegraphics[width=\linewidth]{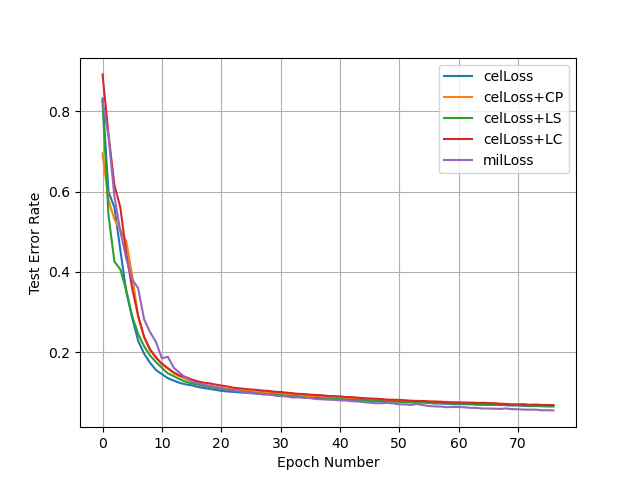}
		\caption{Error rates}\label{Fig:MNIST_MLP_testErrorRate}
	\end{subfigure}
	\caption{Multiple layer perceptron on MNIST. \ref{Fig:MNIST_MLP_cel}: error rate and loss during training and test at different epochs. \ref{Fig:MNIST_MLP_milLoss}: loss during training and testing at different epochs. \ref{Fig:MNIST_MLP_milInformation}: error rate, mutual information, label conditional entropy, and label entropy during test at different epochs. \ref{Fig:MNIST_MLP_milTrainTest}: mutual information and error rate during training and testing at different epoch. \ref{Fig:MNIST_MLP_testErrorRate}: testing error rate curves associated with different loss functions.}\label{Fig:MNIST_MLP}
\end{figure}

The CIFAR-100 dataset is similar to the CIFAR-10 dataset except that we have totally $C=100$ classes. We use DenseNet-121, Inception-ResNet-V2, Inception-V3, PreAct-ResNet-18, PreAct-ResNet-101, ResNet-34, ResNet-50, VGG-16  \cite{huang_densely_2016,szegedy_inception-v4_2016,he_identity_2016,simonyan_very_2014,he_deep_2015}. The label entropy regularization (LER) parameter $\lambda_{ent}$ associated with the mutual information learning is set to be 1e1. The batch size is fixed at 256. Each model is trained for 200 epochs. For each model under each setup for MNIST, CIFAR-10, and CIFAR-100, the regularization parameter $\epsilon$ associated with the LSR, CP, and LC is fixed at 0.1 \cite{pereyra_regularizing_2017}. We do not use any data augmentations, nor do we use the weight decays for the baseline models. Instead, we perform ablation studies over these training techniques under the mutual information learning framework. For MNIST and CIFAR-10, we conduct 3 trials for each model, and the reported results are averaged over the 3 trial. However, for CIFAR-100, we conduct single trial for each model as we do not see much variations in the results across differential trials for MNIST and CIFAR-10 datasets.  

{\bf Classification Performance} The first set of experimental results associated with the baseline models for MNIST, CIFAR-10, and CIFAR-100 datasets are presented in Table \ref{Tab:ProofOfConcept_MNIST}, \ref{Tab:ProofOfConcept_CIFAR10} and \ref{Tab:ProofOfConcept_CIFAR100} where we present the testing data accuracy. From the results, we can see the proposed mutual information learning loss (milLoss) in \eqref{Eq:MIAlterEstParametericMethod_MargDistFromLearnedJointDist} achieved improvements of large margin when compared with the conditional entropy learning loss (celLoss) and its variants in \eqref{Eq:CondEntLoss}, e.g., from 0.52 to 0.68 when the EfficientNet-B0 is used. In fact, under our experiments setup, none of LSR, CP, and LC show any improvements in accuracy.

\begin{figure}[!htb]
	\centering
	\begin{subfigure}[b]{0.45\textwidth}
		\centering
		\includegraphics[width=\linewidth]{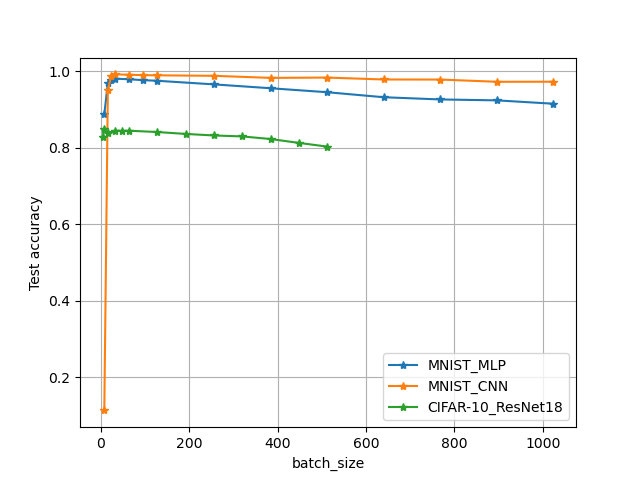}
		\caption{Batch size}
	\end{subfigure}%
	~
	\begin{subfigure}[b]{0.45\textwidth}
		\centering
		\includegraphics[width=\linewidth]{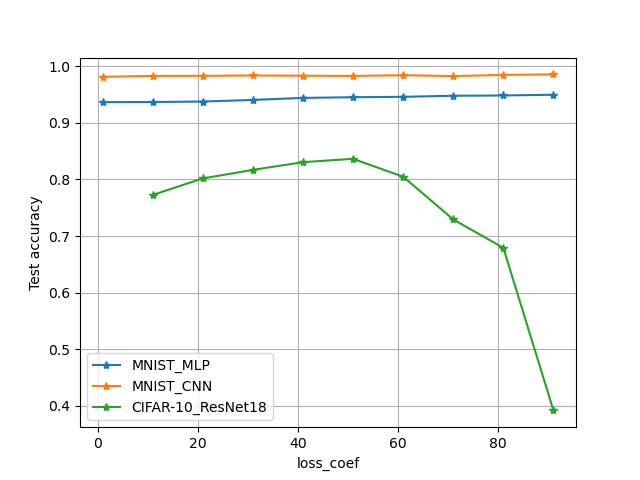}
		\caption{Loss coefficient $\lambda_{ent}$}
	\end{subfigure}
	\caption{Pinpoint batch size and $\lambda_{ent}$ parameters.}\label{Fig:ParamsPinpoint}
\end{figure} 

{\bf Learning Curves} We plot the learning curves over MNIST and CIFAR-10 datasets in Figure \ref{Fig:CIFAR10_GoogLeNet}, \ref{Fig:CIFAR10_ResNet18}, \ref{Fig:MNIST_CNN}, and \ref{Fig:MNIST_MLP} for illustrations. From the results, we can see that the proposed approach can train classifiers with much better classification performances. We can also see a very strong connection between the mutual information and the error rate. The MILCs seems to take longer time to converge than the conditional entropy learned classifiers (CELCs), and we conjecture this is because learning the joint distribution $p_{X,Y}$ is more challenging than learning the conditional distribution $p_{Y|X}$. In the celLoss training approach, we can see that the conditional entropy learned over CIFAR-10 is very small. This means that when a CIFAR-10 image is given, there will be almost no uncertainty left for the label of the CIFAR-10 sample. However, this is not case in practice. As we can see from Figure \ref{Fig:CIFAR10_samples} in the Appendix , there should be much more uncertainty left about the label associated with the CIFAR-10 sample.

We can also see a very strong connection between the mutual information and the error rate. For example, in Figure \ref{Fig:MNIST_CNN_milInformation}, the error rate decreases as the mutual information increases. When the mutual information finally converges to about 2.1, the error rate converges to about 0.06. Notice that the maximum possible entropy of the label is $\log(10) = 2.3$ nats, and the empirical results shows that the label entropy can be easily learned in very small number of epochs. The gap between the learned mutual information 2.1 and the learned conditional entropy 0.1 is about 2.0 which is 20 times larger than the conditional entropy itself. What seems to be surprising is that under our experimental settings, the existing LSR, CP, and LC regularizations for the conditional entropy learning loss (celLoss) do not bring any benefits over the celLoss itself alone. The benefits of of the proposed approach in generalization is even more obvious and significant in the CIFAR-10 classification task. For example, in Figure \ref{Fig:CIFAR10_ResNet18}, the error rate of the model trained by celLoss minimization cannot even go below 0.25, while the model trained via milLoss minimization can achieve error rate of about 0.15. 

{\bf Effects of Batch Size and Entropy Regularization Coefficient} We conduct experiments with typical neural network architectures to investigate how the batch size and the $\lambda_{ent}$ affect the performance, the experimental setup except the batch size or $\lambda_{ent}$ is exactly the same as that of the baseline models. When evaluating effect of the batch size (or the $\lambda_{ent}$), we use fixed $\lambda_{ent}=5e1$ (or fixed batch size of 512 for MNIST and 256 for CIFAR-10). The results are presented in Figure \ref{Fig:ParamsPinpoint}. From the results we can see that for both MNIST and CIFAR-10 dataset, the testing accuracy does not always go up as the batch size increases, which is quite different what is expected for CELCs. When the batch size increases, both the signal pattern and the noise pattern will become stronger. The milLoss essentially learns the mutual information associated with the joint data generation distribution, and it can overfit to the noise pattern as the batch size increases since the mutual information itself encourage the model to consider the overall data generation distribution. This then results in the classification performance degradation. However, for CELCs, despite the stronger noise pattern caused by a larger batch size, the conditional entropy learning loss can help the model avoid overfitting to the noise pattern because it encourages the model to give high confidence prediction of labels. The cost is a less accurate characterization of the joint distribution $p_{X,Y}$ by CELCs.

%\begin{figure}[!htb]
%	\centering
%	\includegraphics[width=\columnwidth]{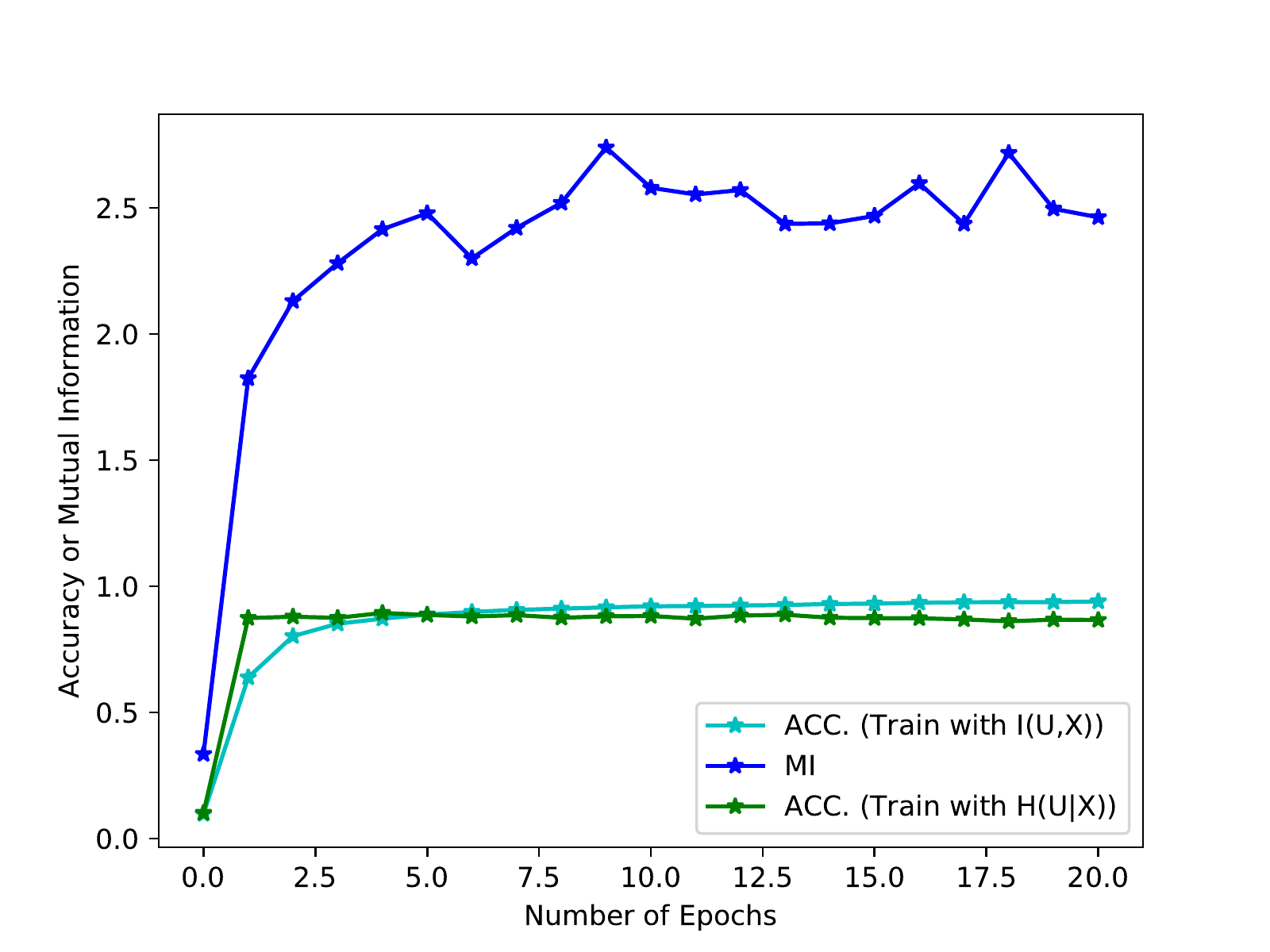}\caption{Behavior of accuracy and mutual information after different number of training epochs. The $0$ horizontal coordinates corresponds to a randomized classifier before training.}\label{Fig:ClassifierAcc_Mi}
%\end{figure}
%
%\begin{figure}[!htb]
%	\centering
%	\includegraphics[width=\columnwidth]{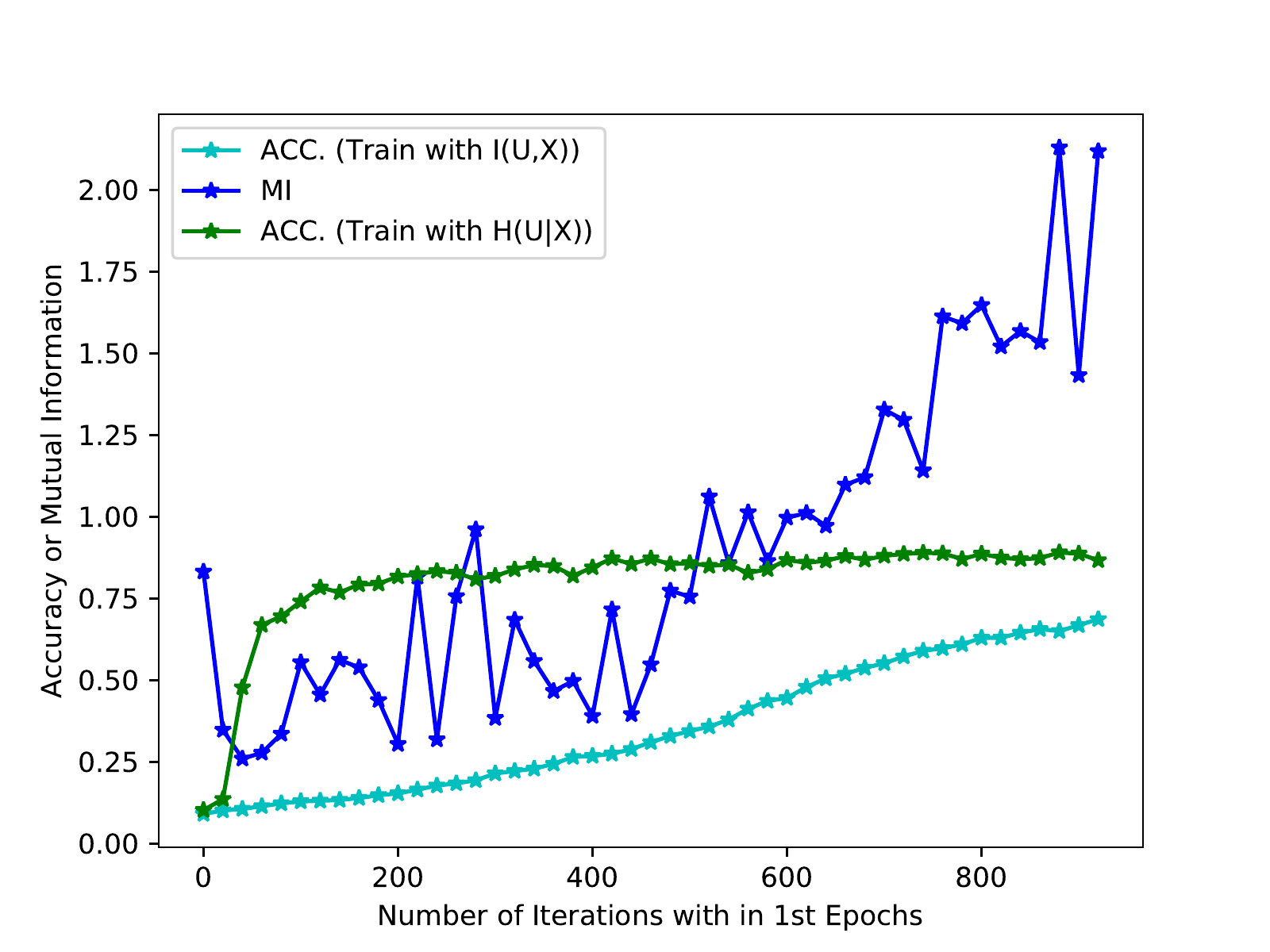}\caption{Behavior of accuracy and mutual information after different number of iterations within 1 training epoch. The 0 horizontal coordinate corresponds to a randomized classifier before training.}\label{Fig:ClassifierAcc_Mi_1epoch}
%\end{figure}

%\begin{figure}[!htb]
%	\centering
%	\includegraphics[width=\columnwidth]{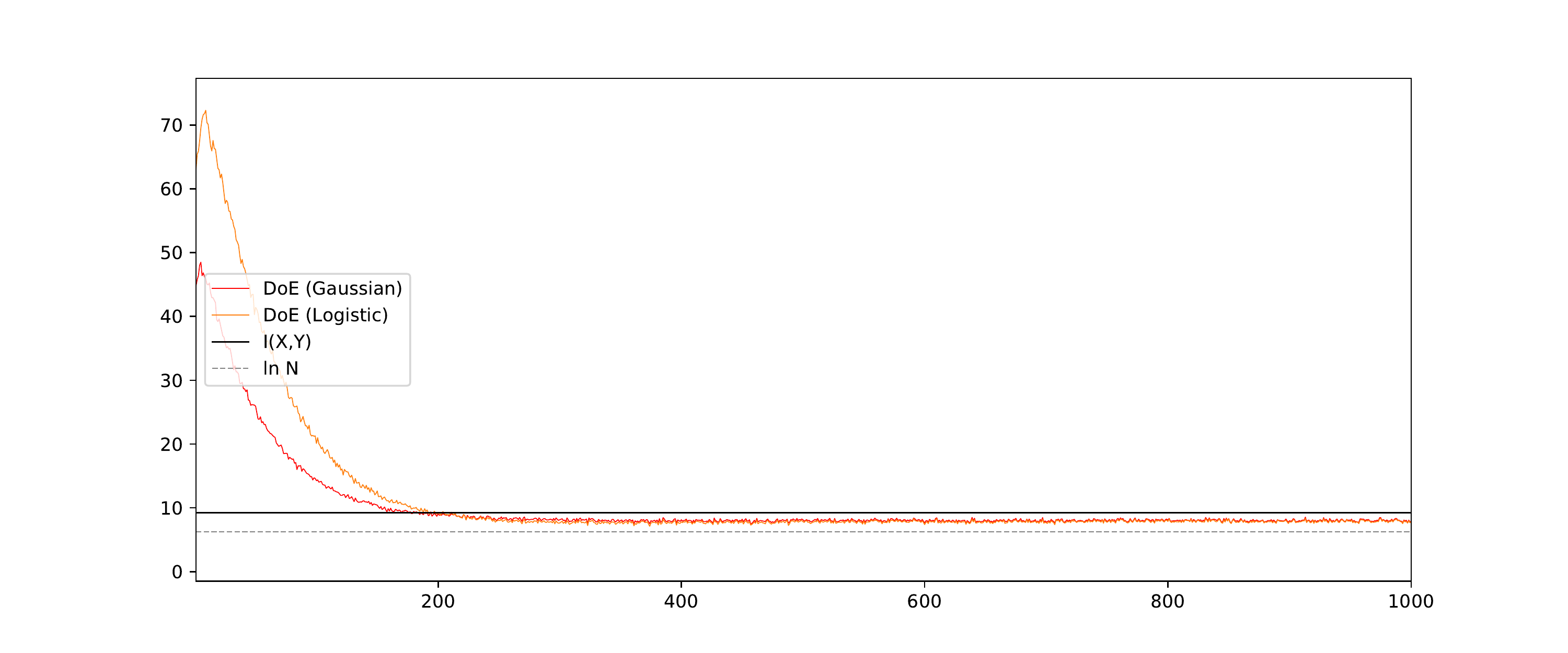}\caption{...}\label{Fig}
% \end{figure}

\section{Conclusions}\label{Sec:Conclusions}

In this paper, we showed that the existing cross entropy loss minimization essentially learns the conditional entropy of the label when the input is revealed. We pointed out some fundamental limitations of this approach which motivate us to propose a mutual information learning framework. For the proposed learning framework, we established rigorous relation between the error probability associated with a model trained on a dataset and the mutual information associated with the distribution for generating the dataset. Besides, we derive the sample complexity for accurately training the mutual information learned classifiers. The application of our theory to a concrete binary classification data model in $\mbR^n$ was given, and we derived the bounds of the mutual information and the error probability associated with it. We also conducted extensive experiments to validate our theory, and the empirical results shows that the proposed mutual information learned classifiers (MILCs) acheive far better generalization performance than those trained via cross entropy minimization.

\section*{Acknowledgement}

We would like to thank Dr. Rui Yan from Microsoft Redmond for inspiring and constructive discussions about loss function design. We would also like to thank Dr. Praneeth Narayanamurthy and Prof. Urbashi Mitra from University of Southern California for inspiring discussions about information bottleneck and mutual information estimation.

\appendix
\section*{Appendix}

\subsection*{Appendix A1: Proof of Theorem \ref{Thm:ITQuantitiesConnections}}\label{AppSec:Proof1}

\begin{thm}\label{Thm:ITQuantitiesConnectionsApp}
	(Connections among different information-theoretic quantities) For a joint distribution $p_{X,Y}$ over a continous random vector $X\in\mbR^n$ and a discrete random variable $Y\in[C]$ where $C$ is a positive integer constant, with the definition of mutual information in \eqref{Eq:MutualInfoDefn} and Defintion \ref{Definition:Ent}-\ref{Definition:ConditionalCrossEntropy}, we have
	\begin{align}\label{Eq:ITQuantitiesConnectionsApp}
		I(X,Y) = H(Y) - H(Y|X), \nonumber\\
		I(X,Y) = h(X) - h(X|Y).
	\end{align}
\end{thm}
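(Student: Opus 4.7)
The plan is to prove both identities by direct manipulation of the definition of mutual information given in \eqref{Eq:MutualInfoDefn}, exploiting the two factorizations $p(x,y)=p(x)P(y|x)$ and $p(x,y)=P(y)p(x|y)$ that are stipulated right after that definition. The argument is essentially the classical Shannon calculation, but carried out carefully in the mixed discrete--continuous setting so that sums over $y\in[C]$ and integrals over $x\in\mathbb{R}^n$ can be freely interchanged.

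For the first identity $I(X;Y)=H(Y)-H(Y|X)$, I would substitute $p(x,y)=p(x)P(y|x)$ into the ratio $\frac{p(x,y)}{p(x)P(y)}$ inside the log, so that the log splits as $\log P(y|x)-\log P(y)$. The integral of the first piece against $p(x,y)$ is $-H(Y|X)$ by Definition \ref{Definition:ConditionalEnt}, since $\int p(x)\sum_y P(y|x)\log(1/P(y|x))\,dx=H(Y|X)$. For the second piece, I would pull $\log P(y)$ out of the $x$-integral, use $\int p(x,y)\,dx=P(y)$ to collapse it, and recognize the resulting sum as $-H(Y)$ via Definition \ref{Definition:Ent}. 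Combining gives $I(X;Y)=H(Y)-H(Y|X)$.

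The second identity $I(X;Y)=h(X)-h(X|Y)$ proceeds in exact parallel, but using the factorization $p(x,y)=P(y)p(x|y)$. After splitting $\log\frac{p(x,y)}{p(x)P(y)}=\log p(x|y)-\log p(x)$, the first term integrates to $-h(X|Y)$ by Definition \ref{Definition:ConditionalEnt} (conditional differential entropy), and the second term uses $\sum_y p(x,y)=p(x)$ to marginalize out $y$ and yields $-h(X)$ by Definition \ref{Definition:Ent}.

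There is no genuinely hard step here; the proof is essentially a bookkeeping exercise. The one point that deserves care, and which I would flag as the only potential obstacle, is justifying the interchange of summation over $y\in[C]$ (finite, hence trivial) with integration over $\mathbb{R}^n$, and the splitting of the logarithm into two pieces whose individual integrals may a priori not be absolutely convergent. Since $[C]$ is finite the sum-integral swap is immediate by Fubini, and for the log-splitting I would simply state the standard convention that the identities hold whenever $h(X)$ and $h(X|Y)$ (respectively $H(Y)$ and $H(Y|X)$) are finite, which is implicit in the setup used throughout the paper.
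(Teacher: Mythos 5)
Your proposal is correct and follows essentially the same route as the paper's own proof: split the logarithm in the definition of $I(X;Y)$ using the factorizations $p(x,y)=p(x)P(y|x)$ and $p(x,y)=P(y)p(x|y)$, marginalize, and identify the resulting terms with the (conditional) entropies from Definitions \ref{Definition:Ent}--\ref{Definition:ConditionalEnt}. Your extra remark on the finiteness convention and the trivial sum--integral interchange is a minor but harmless addition beyond what the paper states.
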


\begin{proof}\label{Proof:ITQuantitiesConnectionsApp}
	(of Theorem \ref{Thm:ITQuantitiesConnectionsApp}) From  \eqref{Eq:MutualInfoDefn}, we have
	\begin{align}\label{Eq:MI&ConditionalDifferentialEntrpApp}
		I(X,Y) 
		& = \int_{\mbR^n} p(x) \sum_{y\in[C]} P(y|x) \left( \log\left( \frac{p(x,y)}{P(y)} \right) -  \log\left( {p(x)} \right)\right) dx \nonumber \\
		& = \int_{\mbR^n} p(x) \sum_{y\in[C]} P(y|x) \log\left( \frac{p(x,y)}{P(y)} \right)dx 
		-
		\int_{\mbR^n} p(x)\sum_{y\in[C]} P(y|x) \log\left( {p(x)} \right) dx \nonumber \\
		& = \int_{\mbR^n} \sum_{y\in[C]} p(x,y) \log\left( {p(x|y)} \right)dx 
		-
		\int_{\mbR^n} p(x) \log\left( {p(x)} \right) dx \nonumber \\
		& = - \int_{\mbR^n} \sum_{y\in[C]} P(y)p(x|y) \log\left( \frac{1}{p(x|y)} \right)dx + h(X) \nonumber \\
		& = h(X) - \sum_{y\in[C]} P(y) \int_{\mbR^n} p(x|y) \log\left( \frac{1}{p(x|y)} \right)dx  \nonumber \\
		& = h(X) - h(X|Y).
	\end{align}
	Similarly, we have from \eqref{Eq:MutualInfoDefn}
	\begin{align}
		I(X,Y)
		& = \sum_{y\in[C]}  P(y) \int_{\mbR^n} p(x|y) \left( \log\left( \frac{p(x,y)}{p(x)} \right) -  \log\left( {P(y)} \right)\right) dx \nonumber \\
		& = \sum_{y\in[C]}  P(y) \int_{\mbR^n} p(x|y) \log\left( \frac{p(x,y)}{p(x)} \right)dx 
		- 
		\sum_{y\in[C]}  P(y) \int_{\mbR^n} p(x|y) \log\left( {P(y)} \right)dx \nonumber \\
		& = \sum_{y\in[C]}  P(y) \int_{\mbR^n} p(x|y) \log\left( \frac{p(x,y)}{p(x)} \right)dx 
		- 
		\sum_{y\in[C]}  P(y)   \log\left( {P(y)} \right)\nonumber \\
		& = H(Y) - \sum_{y\in[C]}  P(y) \int_{\mbR^n} p(x|y) \log\left( \frac{1}{P(y|x)} \right)dx \nonumber \\
		& = H(Y) - \int_{\mbR^n}  \sum_{y\in[C]}  P(y) p(x|y) \log\left( \frac{1}{P(y|x)} \right)dx \nonumber \\
		& = H(Y) - \int_{\mbR^n}  \sum_{y\in[C]}  p(x) P(y|x) \log\left( \frac{1}{P(y|x)} \right)dx \nonumber \\
		& = H(Y) - \int_{\mbR^n}  p(x)   \sum_{y\in[C]} P(y|x) \log\left( \frac{1}{P(y|x)} \right)dx \nonumber \\
		& = H(Y) - H(Y|X).
	\end{align}
\end{proof}

\subsection*{Appendix A2: Proof of Theorem \ref{Thm:EntEstViaCrossEnt}}

\begin{thm}\label{Thm:EntEstViaCrossEntApp}
	(Cross Entropy Minimization as Entropy Learning) For an arbitrary discrete distribution $P_{Y}$ in $[C]$, we have 
	\begin{align}
		H(Y) \leq \inf_{Q_Y} H(P_Y,Q_Y),
	\end{align}
	where $Q_Y$ is a distribution of $Y$, and the equality holds if and only if $P_Y=Q_Y$. When a set of $N$ data points $\mcS:=\{y_i\}_{i=1}^N$ drawn independently from $P_{Y}$ is given, by defining $R(y):= \frac{P_Y(y)}{\hat{P}_Y(y)}$ where $\hat{P}_Y$ is the empirical distribution associated with $\{y_i\}_{i=1}^N$, we have 
	\begin{align}
		H(Y) \leq \inf_{Q^g_Y} H(\hat{P}_Y^g, Q_Y^g),
	\end{align}
	where $\hat{P}_Y^g$ is defined as
	\begin{align}
		\hat{P}_Y^g(y) : = \hat{P}_Y(y) R(y), \forall {y\in[C]},
	\end{align}
	and $Q_Y^g$ is defined as 
	\begin{align}
		Q_Y^g(y) = Q_Y(y)R(y), \forall {y\in[C]},
	\end{align}
	with $Q_Y$ being a distribution of $Y$. The inequality holds if and only if $P_Y = \hat{P}_Y = Q_Y$.
\end{thm}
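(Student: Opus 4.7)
The plan is to split the theorem into its two claims and prove each via a Gibbs-type argument. For the first inequality $H(Y) \leq \inf_{Q_Y} H(P_Y, Q_Y)$, I would write the gap as a Kullback--Leibler divergence:
\begin{align*}
H(P_Y, Q_Y) - H(Y) = \sum_{y \in [C]} P_Y(y) \log \frac{P_Y(y)}{Q_Y(y)} = D_{KL}(P_Y \| Q_Y),
\end{align*}
and invoke the non-negativity of $D_{KL}$, which follows from Jensen's inequality applied to the strictly concave function $\log$: namely, $-\mbE_{P_Y}[\log(Q_Y/P_Y)] \geq -\log\mbE_{P_Y}[Q_Y/P_Y] = -\log 1 = 0$. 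Strict concavity gives equality iff $Q_Y/P_Y$ is $P_Y$-a.s.\ constant, which together with $\sum_y Q_Y(y) = \sum_y P_Y(y) = 1$ forces $Q_Y = P_Y$. Hence the infimum is attained at $Q_Y = P_Y$ and equals $H(Y)$.

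For the generalized (empirical) version, the crucial algebraic observation is that by the very definition of $R$,
\begin{align*}
\hat{P}_Y^g(y) = \hat{P}_Y(y) \cdot R(y) = \hat{P}_Y(y) \cdot \frac{P_Y(y)}{\hat{P}_Y(y)} = P_Y(y),
\end{align*}
so $\hat{P}_Y^g$ coincides with the true distribution $P_Y$, and in particular is a bona fide probability mass function on $[C]$. I would then expand, using $Q_Y^g = Q_Y R$,
\begin{align*}
H(\hat{P}_Y^g, Q_Y^g)
= \sum_{y \in [C]} P_Y(y) \log \frac{1}{Q_Y(y) R(y)}
= H(P_Y, Q_Y) + \mbE_{P_Y}\!\left[\log \frac{1}{R(Y)}\right],
\end{align*}
where the second summand equals $-D_{KL}(P_Y \| \hat{P}_Y)$ by definition of $R$. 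Apply Part~1 to the first summand to obtain $H(P_Y, Q_Y) \geq H(Y)$, and observe that the KL correction term vanishes exactly when $P_Y = \hat{P}_Y$. Combining the two conditions for simultaneous equality in the chain yields the stated requirement $P_Y = \hat{P}_Y = Q_Y$.

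The main obstacle is handling the equality conditions cleanly, because two separate non-negativity constraints are in play (one from the standard Gibbs step on $H(P_Y, Q_Y)$, and one hidden inside the ratio $R$). One must argue that these constraints become tight only jointly, so that the equality in the aggregate bound forces both $Q_Y = P_Y$ and $\hat{P}_Y = P_Y$. A secondary technical point is that $Q_Y^g$ is defined as $Q_Y \cdot R$ with $Q_Y$ a distribution, but $Q_Y^g$ itself need not sum to unity; the decomposition above sidesteps this because the identity $H(\hat{P}_Y^g, Q_Y^g) = H(P_Y, Q_Y) - D_{KL}(P_Y \| \hat{P}_Y)$ holds term-by-term in $y$ without requiring $Q_Y^g$ to be normalized, so the Gibbs bound transfers through the $R$-reparameterization cleanly.
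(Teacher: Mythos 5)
Your first part is fine and is the same argument as the paper's: write $H(P_Y,Q_Y)-H(Y)=D_{KL}(P_Y\|Q_Y)\geq 0$ and use the equality case of Jensen. The problem is the second part. Your own key identity, valid under the statement's definition $Q_Y^g=Q_Y R$ together with $\hat{P}_Y^g=P_Y$, reads
\begin{align*}
H(\hat{P}_Y^g,Q_Y^g)\;=\;H(P_Y,Q_Y)\;-\;D_{KL}(P_Y\|\hat{P}_Y).
\end{align*}
Applying Part 1 to the first summand only yields $H(\hat{P}_Y^g,Q_Y^g)\geq H(Y)-D_{KL}(P_Y\|\hat{P}_Y)$, and since the KL term is nonnegative and \emph{subtracted}, this does not give the claimed bound $H(Y)\leq H(\hat{P}_Y^g,Q_Y^g)$; saying that "the KL correction vanishes exactly when $P_Y=\hat{P}_Y$" is an equality-case remark, not a bound. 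Worse, your identity evaluated at $Q_Y=P_Y$ gives $H(\hat{P}_Y^g,Q_Y^g)=H(Y)-D_{KL}(P_Y\|\hat{P}_Y)<H(Y)$ whenever $\hat{P}_Y\neq P_Y$, so your computation in fact shows that the inequality is \emph{false} under the multiplicative reweighting $Q_Y^g=Q_Y R$. The final combination step is therefore a genuine gap, not a technicality about normalization.

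The resolution is hidden in the paper's own proof, which (despite the statement) uses the reciprocal reweighting $Q_Y^g(y)=Q_Y(y)/R(y)$: inserting the factor $\frac{P_Y(y)}{\hat{P}_Y(y)}\cdot\frac{\hat{P}_Y(y)}{P_Y(y)}$ into $H(P_Y,Q_Y)$ gives
\begin{align*}
H(P_Y,Q_Y)\;=\;-D_{KL}(P_Y\|\hat{P}_Y)\;+\;\sum_{y\in[C]}\hat{P}_Y^g(y)\log\Big(\frac{R(y)}{Q_Y(y)}\Big),
\end{align*}
so that $H(Y)\leq H(P_Y,Q_Y)\leq H(\hat{P}_Y^g,Q_Y^g)$ with $Q_Y^g=Q_Y/R$, the divergence now entering with the favorable sign. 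So to repair your argument you must replace $Q_Y^g=Q_Y R$ by $Q_Y^g=Q_Y/R$ (equivalently, flag the inconsistency between the theorem statement and its proof); with the definition you used, no correct proof exists because the statement itself fails whenever $\hat{P}_Y\neq P_Y$.
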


\begin{proof}\label{Proof:EntEstViaCrossEnt}
	(of Theorem \ref{Thm:EntEstViaCrossEnt}) From the definition of entropy, we have for an arbitrary distribution $Q_Y$ over $Y$
	\begin{align}\label{Eq:EntViaCrossEnt}
		H(Y) 
		& = \sum_{y\in[C]} P_Y(y) \log\left(\frac{1}{P_Y(y)}\right) \nonumber \\
		& = \sum_{y\in[C]} P_Y(y) \log\left(\frac{Q_Y(y)}{P_Y(y) }\frac{1}{Q_Y(y)}\right) \nonumber\\
		& = \sum_{y\in[C]} P_Y(y) \log\left(\frac{Q_Y(y)}{P_Y(y) }\right) + \sum_y P_Y(y) \log\left(\frac{1}{Q_Y(y)}\right) \nonumber\\
		& = H(P_Y,Q_Y)  - D_{KL}(P_Y||Q_Y) \nonumber\\
		& \leq H(P_Y,Q_Y),
	\end{align}
	where 
	\begin{align}\label{Defn:CrossEntropy}
		H(P_Y,Q_Y):= \sum_{y\in[C]} P_Y(y) \log\left(\frac{1}{Q_Y(y)}\right)
	\end{align} 
	is the cross entropy between $P_Y$ and $Q_Y$, and  
	\begin{align}\label{Defn:KLDivergence}
		D_{KL}(P_Y||Q_Y) :=\sum_{y\in[C]} P_Y(y) \log\left(\frac{P_Y(y)}{Q_Y(y) }\right) 
	\end{align}  
	is the KL divergence between $P_Y$ and $Q_Y$, and we used the fact that $D_{KL}(P_Y||Q_Y)\geq0$. The equality holds iff $Q_Y(y) = P_Y(y), \forall {y\in[C]}$. The \eqref{Eq:EntViaCrossEnt} holds for arbitrary $Q_Y$, thus,
	\begin{align}\label{Eq:EntViaCrossEntCont}
		H(Y) = \inf_{Q_Y} H(P_Y,Q_Y).
	\end{align}
	
	For the cross entropy $H(P_Y,Q_Y)$, we have
	\begin{align}\label{Eq:CrossEntViaType}
		H(P_Y,Q_Y) 
		& =  \sum_{y\in[C]} P_Y(y) \log\left(\frac{1}{Q_Y(y)}\right) \nonumber \\
		& =  \sum_{y\in[C]} P_Y(y) \log\left(\frac{1}{Q_Y(y)} \frac{P_Y(y)}{\hat{P}_Y(y)} \frac{\hat{P}_Y(y)}{{P}_Y(y)}\right) \nonumber  \\
		& =  \sum_{y\in[C]} P_Y(y) \log\left(\frac{1}{Q_Y(y)} \frac{P_Y(y)}{\hat{P}_Y(y)} \right) 
		+ \sum_{y\in[C]} P_Y(y) \log\left( \frac{\hat{P}_Y(y)}{{P}_Y(y)}\right) \nonumber  \\ \nonumber 
		& = - \sum_{y\in[C]} P_Y(y) \log\left(\frac{P_Y(y)}{\hat{P}_Y(y)} \right) 
		+ \sum_{y\in[C]} P_Y(y) \log\left(\frac{1}{Q_Y(y)} \frac{{P}_Y(y)}{\hat{P}_Y(y)}\right) \\
		& = -D_{KL}(P_Y||\hat{P}_Y) 
		+ \sum_{y\in[C]} P_Y(y) \log\left(\frac{1}{Q_Y(y)} \frac{{P}_Y(y)}{\hat{P}_Y(y)}\right)
	\end{align}
	Thus
	\begin{align}\label{Eq:CrossEntViaType2}
		H(P_Y,Q_Y) 
		& \leq \sum_{y\in[C]} \hat{P}_Y(y) \frac{P_Y(y)}{\hat{P}_Y(y)}  \log\left(\frac{1}{Q_Y(y)} \frac{{P}_Y(y)}{\hat{P}_Y(y)}\right) \nonumber \\
		& \leq \sum_{y\in[C]} \hat{P}_Y(y) R(y)  \log\left(\frac{1}{Q_Y(y) /R(y)}\right)\\
		&\leq \sum_{y\in[C]} \hat{P}_Y^g(y) \log\left(\frac{1}{Q_Y^g(y)}\right) \nonumber\\
		& = H(\hat{P}_Y^g,Q_Y^g),
	\end{align}
	where we used $\hat{P}_Y^g(y) = \hat{P}_Y(y) R(y)$, and $Q_Y^g(y) = Q_Y(y)/R(y)$, and $R(y) =\frac{{P}_Y(x)}{\hat{P}_Y(y)}$. The equality holds iff $P_Y(y) = \hat{P}_Y(y), \forall {y\in[C]}$ which implies that $D_{KL}(P_Y||\hat{P}_Y)=0$ and $R(y)=1$. Thus, $H(Y)=H(P_Y,Q_Y)=H(P_Y^g,Q_Y^g)$ if and only if $P_Y=Q_Y=\hat{P}_Y$. Thus, 
	\begin{align}\label{Eq:EntViaCrossEntCont1}
		H(Y)
		= \inf_{Q_Y} H(P_Y,Q_Y)
		= \inf_{Q^g_Y} H(\hat{P}_Y^g,{Q}_Y^g).
	\end{align}
	
\end{proof}

\subsection*{Appendix A3: Proof of Theorem \ref{Thm:EntEstBound}}

\begin{thm}\label{Thm:EntEstBoundApp}
	(Error Bound of Entropy Learning from Empirical Distribution) For two arbitrary distributions $P_Y$ and $\hat{P}_Y$ of a discrete random variable $Y$ over $[C]$, we have 
	\begin{align}
		\sum_{y\in[C]} R(y) \log\left( \frac{1}{{P}_Y(y)} \right)
		\leq 
		H_{P_Y}(Y) - H_{\hat{P}_Y}(Y) 
		\leq  \sum_{y\in[C]} R(y) \log\left( \frac{1}{\hat{P}_Y(y)} \right)
	\end{align}
	where $R(y) = P_Y(y) - \hat{P}_Y(y), \forall y \in[C]$, and $H_{P_Y}(Y)$ is the entropy of $Y$ calculated via $P_Y$. The equality holds if and only if $P_Y = \hat{P}_Y$. 
\end{thm}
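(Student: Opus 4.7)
The plan is to reduce both inequalities to the Gibbs inequality, i.e., the non-negativity of KL divergence $D_{KL}(\cdot\|\cdot)\geq 0$ with equality iff the two arguments are equal. Writing $p_y:=P_Y(y)$, $q_y:=\hat P_Y(y)$, $R(y)=p_y-q_y$, the central identity I would use is the elementary decomposition
\begin{align*}
H_{P_Y}(Y)-H_{\hat P_Y}(Y)
= -\sum_{y\in[C]} p_y\log p_y + \sum_{y\in[C]} q_y\log q_y,
\end{align*}
which I will rewrite in two symmetric ways by inserting $\pm\sum_y p_y\log q_y$ and $\pm\sum_y q_y\log p_y$ respectively.

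For the upper bound, I would add and subtract $\sum_y p_y\log q_y$ to get
\begin{align*}
H_{P_Y}(Y)-H_{\hat P_Y}(Y)
= \sum_{y} p_y\log\frac{q_y}{p_y} + \sum_{y}(p_y-q_y)\log\frac{1}{q_y}
= -D_{KL}(P_Y\|\hat P_Y) + \sum_{y} R(y)\log\frac{1}{\hat P_Y(y)}.
\end{align*}
Since $D_{KL}(P_Y\|\hat P_Y)\geq 0$, the desired upper bound $H_{P_Y}(Y)-H_{\hat P_Y}(Y)\leq \sum_y R(y)\log(1/\hat P_Y(y))$ follows immediately.

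For the lower bound, I would instead add and subtract $\sum_y q_y\log p_y$ to obtain
\begin{align*}
H_{P_Y}(Y)-H_{\hat P_Y}(Y)
= \sum_{y}(p_y-q_y)\log\frac{1}{p_y} + \sum_{y} q_y\log\frac{q_y}{p_y}
= \sum_{y} R(y)\log\frac{1}{P_Y(y)} + D_{KL}(\hat P_Y\|P_Y),
\end{align*}
and since $D_{KL}(\hat P_Y\|P_Y)\geq 0$ the lower bound follows. The equality condition in Gibbs' inequality gives $P_Y=\hat P_Y$ in both cases simultaneously, matching the stated equality clause.

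There is no real obstacle here beyond choosing the right algebraic splitting; the whole argument is two applications of non-negativity of KL divergence. The only minor care needed is handling coordinates where $p_y=0$ or $q_y=0$: I would adopt the standard conventions $0\log 0 = 0$ and treat a zero in the denominator as producing $+\infty$ consistently, so that the bounds remain valid (trivially so) in the degenerate cases, and the equality analysis only bites when both distributions are supported on the same atoms.
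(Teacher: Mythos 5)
Your proposal is correct and follows essentially the same route as the paper's proof: in both, the difference $H_{P_Y}(Y)-H_{\hat P_Y}(Y)$ is rewritten as $-D_{KL}(P_Y\|\hat P_Y)+\sum_{y}R(y)\log\bigl(1/\hat P_Y(y)\bigr)$ for the upper bound (and symmetrically with $D_{KL}(\hat P_Y\|P_Y)$ for the lower bound), and nonnegativity of the KL divergence finishes the argument. The paper merely states the lower-bound case as ``similarly,'' whereas you write it out explicitly; otherwise the arguments coincide.
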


\begin{proof}
	(of Theorem \ref{Thm:EntEstBound}) From the definition of entropy, we have 
	\begin{align*}
		H_{P_Y}(Y) - H_{\hat{P}_Y}(Y)
		& = \sum_{y\in[C]} \left( P_Y(y)\log\left(\frac{1}{{P}_Y(y)}\right) \right)
		- \sum_{y\in[C]} \left( \hat{P}_Y(y)\log\left(\frac{1}{\hat{P}_Y(y)}\right) \right) \\
		& = \sum_{y\in[C]} \left( P_Y(y)\log\left(\frac{\hat{P}_Y(y)}{{P}_Y(y)} \frac{1}{\hat{P}_Y(y)} \right) \right)
		- \sum_{y\in[C]} \left( \hat{P}_Y(y)\log\left(\frac{1}{\hat{P}_Y(y)}\right) \right) \\
		& = - \sum_{y\in[C]} \left( P_Y(y)\log\left(\frac{{P}_Y(y)}{\hat{P}_Y(y)} \right) \right)
		+ \sum_{y\in[C]} \left( P_Y(y)\log\left(\frac{1}{\hat{P}_Y(y)} \right) \right) - \sum_{y\in[C]} \left( \hat{P}_Y(y)\log\left(\frac{1}{\hat{P}_Y(y)}\right) \right) \\
		& = -D_{KL}(p_Y||\hat{P}_Y)
		+ \sum_{y\in[C]} \left( P_Y(y)\log\left(\frac{1}{\hat{P}_Y(y)} \right) \right) - \sum_{y\in[C]} \left( \hat{P}_Y(y)\log\left(\frac{1}{\hat{P}_Y(y)}\right) \right), \\
		& \leq \sum_{y\in[C]} R(y) \log\left( \frac{1}{\hat{P}_Y(y)} \right).
	\end{align*}
	The equality holds if and only if $\hat{P}_Y=P_Y$. Similarly, we can show $\sum_{y\in[C]} R(y) \log\left( \frac{1}{{P}_Y(y)} \right)
	\leq 
	H_{P_Y}(Y) - H_{\hat{P}_Y}(Y)$.
\end{proof}

\subsection*{Appendix A4: Proof of Theorem \ref{Thm:ErrorProbabilityMI_Bounds}}

\begin{thm}\label{Thm:ErrorProbabilityMI_BoundsApp}
	(Error Probability Bound via Mutual Information) Assume that the learning process $Y\to X\to\hat{Y}$ in Figure \ref{Fig:ErrorProbability} is a Markov chain where $Y\in[C]$, $X\in\mbR^n$, and $\hat{Y}\in[C]$, then for the prediction $\hat{Y}$ from an arbitrary learned model, we have
	\begin{align}\label{Eq:ErrorProbabilityBounds}
		\max\left( 0,\frac{2+H(Y) - I(X;Y) - a}{4} \right)
		\leq P_{error}
		%\leq 	\min\left( 1,\frac{2+H(Y) - I(X;Y) + a}{4}\right),
	\end{align}
	where $a:=\sqrt{(H(Y) - I(X;Y) - 2)^2 + 4}$.
\end{thm}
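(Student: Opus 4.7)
The plan is to follow the standard Fano-style argument but with the tighter relaxation that the paper advertises, and then convert a binary-entropy bound into a quadratic inequality in $P_{error}$ whose smaller root yields the stated lower bound.

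First I would introduce the error indicator $E = \mathbf{1}\{Y \neq \hat{Y}\}$, so that $P(E=1) = P_{error}$ and $H(E) = H(P_{error})$ (binary entropy). Applying the chain rule two ways to $H(E,Y\mid \hat{Y})$ and using $H(E\mid Y,\hat{Y}) = 0$ gives
\begin{equation*}
    H(Y\mid \hat{Y}) \;=\; H(E\mid \hat{Y}) + H(Y\mid E,\hat{Y}) \;\leq\; H(P_{error}) + P_{error}\, H(Y\mid \hat{Y}, E=1),
\end{equation*}
where I also used that $H(Y\mid \hat{Y}, E=0)=0$ (since $Y=\hat{Y}$ on $E=0$) and $H(E\mid \hat{Y})\leq H(E)$. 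The key departure from standard Fano is the next step: instead of bounding $H(Y\mid \hat{Y}, E=1)\leq \log(C-1)$, I relax it to $H(Y\mid \hat{Y})$ (using that conditioning on an additional event can only change the entropy by a controlled amount, or equivalently noting that on $E=1$ the $Y$ distribution is supported on at most all of $[C]$ and its entropy is no larger than that of the unconditional posterior $H(Y|\hat{Y})$). Rearranging,
\begin{equation*}
    (1-P_{error})\, H(Y\mid \hat{Y}) \;\leq\; H(P_{error}).
\end{equation*}

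Next, by the data-processing inequality applied to the Markov chain $Y\to X\to \hat{Y}$, we have $I(Y;\hat{Y})\leq I(X;Y)$, which rewrites as $H(Y\mid \hat{Y})\geq H(Y) - I(X;Y)$. Substituting into the previous display yields
\begin{equation*}
    (1-P_{error})\bigl(H(Y)-I(X;Y)\bigr) \;\leq\; H(P_{error}).
\end{equation*}
Now I invoke Lemma \ref{Lem:ErrorEntropyUB} with $x=P_{error}$ to bound $H(P_{error})\leq 1 - 2(P_{error}-0.5)^2$. Writing $M:=H(Y)-I(X;Y)$ and $p:=P_{error}$, the inequality becomes
\begin{equation*}
    (1-p)M \;\leq\; 1 - 2(p-0.5)^2,
\end{equation*}
which after expansion is a quadratic inequality $2p^2 - (2+M)p + (M-1/2)\leq 0$.

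Since the leading coefficient is positive, $p$ must lie between the two roots of the quadratic. Computing the discriminant gives $(2+M)^2 - 8(M-1/2) = (M-2)^2 + 4 = a^2$, so the roots are $\frac{(2+M)\pm a}{4}$. Therefore $p\geq \frac{2+M-a}{4}$, which is exactly the stated bound after taking a $\max$ with $0$ to enforce the trivial nonnegativity constraint on $P_{error}$. The main obstacle I expect is justifying the relaxation $H(Y\mid \hat{Y}, E=1)\leq H(Y\mid \hat{Y})$ cleanly (the paper hints this is exactly where the improvement over Fano lives), and handling the boundary case $P_{error}=1$ so that the division by $1-P_{error}$ is legitimate; everything after that reduces to routine algebra on a quadratic.
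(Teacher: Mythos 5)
Your proposal is correct relative to the paper and takes essentially the same route: the same error-indicator decomposition of $H(E,Y\mid\hat{Y})$, the same relaxation $H(Y\mid\hat{Y},E=1)\leq H(Y\mid\hat{Y})$ in place of Fano's $\log(C-1)$, the same data-processing step $I(Y;\hat{Y})\leq I(X;Y)$, and the same quadratic from Lemma \ref{Lem:ErrorEntropyUB} with discriminant $(M-2)^2+4=a^2$. The relaxation you flag as the delicate point is invoked in exactly the same form (without further justification) in the paper's proof, and no division by $1-P_{error}$ is actually needed, so your attempt matches the paper's argument step for step.
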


\begin{proof}\label{Proof:ErrorProbabilityMI_Bounds}
	(of Theorem \ref{Thm:ErrorProbabilityMI_Bounds}) We define a random variable $E$
	\begin{align}
		E = \begin{cases}
			0, \text{if } Y= \hat{Y},\\
			1, \text{if } Y\neq \hat{Y},
		\end{cases}
	\end{align}
	then the error probability will become
	\begin{align}
		P_{error} = P_{E}(E=1).
	\end{align}
	From the properties of conditional joint entropy, we have
	\begin{align}\label{Eq:UpperBound}
		H(E,Y|\hat{Y})
		& = H(E|\hat{Y})  + H(Y|E,\hat{Y}) \nonumber \\
		& = H(E|\hat{Y}) + P_E(E=0) H(Y|\hat{Y},E=0) + P_E(E=1) H(Y|\hat{Y},E=1)  \nonumber\\
		& = H(E|\hat{Y}) + P_{E}(E=1)H(Y|\hat{Y},E=1) \nonumber\\
		& \leq H(E|\hat{Y}) + P_E(E=1) H(Y|\hat{Y})  \nonumber\\
		& \leq H(E) + P_E(E=1) H(Y|\hat{Y})  \nonumber\\
		& = H(P_{error}) + P_{error}H(Y|\hat{Y}),
	\end{align}
	where we used the fact that $H(Y|\hat{Y},E=0)=0$, $H(Y|\hat{Y},E=1)\leq H(Y|\hat{Y})$, and $H(E|\hat{Y}) \leq H(E)=H(P_{error})$ with $H(P_{error})$ defined as follows
	\begin{align*}
		H(P_{error}):=	-P_{error}\log(P_{error}) - (1-P_{error})\log(1-P_{error}).
	\end{align*}
	We also have
	\begin{align}
		H(E,Y|\hat{Y})
		& = H(Y|\hat{Y}) + H(E|Y,\hat{Y}) \nonumber\\
		& = H(Y|\hat{Y}) \label{Eq:HYcYhat}\\
		& = H(Y) - I(Y;\hat{Y}) \label{Eq:MITerm}\\
		& \geq H(Y) - I(X;Y),
	\end{align}
	where we used the fact that $H(E|Y,\hat{Y})=0$, and the data processing inequality associated with Markov process $Y\to X\to \hat{Y}$, i.e., 
	\begin{align}\label{Eq:DataProcessingIneq}
		I(X;Y) \geq I(Y;\hat{Y}).
	\end{align}
	Thus, from \eqref{Eq:UpperBound} and \eqref{Eq:HYcYhat}, we have
	\begin{align}
		H(P_{error}) + P_{error}H(Y|\hat{Y}) \geq H(Y|\hat{Y}).
	\end{align}
	Combining the above and \eqref{Eq:MITerm}, we get
	\begin{align}\label{Eq:IntermediateBound}
		H(P_{error})  
		\geq (1-P_{error}) \left(H(Y) - I(X;Y) \right),
	\end{align}
	which implies $P_{error} \geq 1 - \frac{H(P_{error})}{H(Y) - I(X;Y)}$.
	
	From Lemma \ref{Lem:ErrorEntropyUB}, we have
	\begin{align*}
		(1-P_{error}) \left(H(Y) - I(X;Y) \right) \leq 1-2(P_{error} - 0.5)^2,
	\end{align*}
	or
	\begin{align}
		2P_{error}^2 - (2+H(Y) - I(X;Y))P_{error} + H(Y) - I(X;Y)-0.5 \leq 0.
	\end{align}
	By solving the above inequality for $P_{error}$, we get
	\begin{align*}
		\frac{2+H(Y) - I(X;Y) - a}{4}
		\leq P_{error}
		\leq 	\frac{2+H(Y) - I(X;Y) + a}{4},
	\end{align*}
	where $a$ is defined as 
	\begin{align*}
		a:=\sqrt{(H(Y) - I(X;Y) - 2)^2 + 4}.
	\end{align*}
	Since $P_{error}\in[0,1]$, we have 
	\begin{align}
		\max\left( 0,\frac{2+H(Y) - I(X;Y) - a}{4} \right)
		\leq P_{error}
		\leq 	\min\left( 1,\frac{2+H(Y) - I(X;Y) + a}{4}\right),
	\end{align}
	where the upper bound is trivial.

\end{proof}

\subsection*{Appendix A5: Proof of Corollary \ref{Cor:HoeffdingIneqDouble}}

\begin{cor}\label{Cor:HoeffdingIneqDoubleApp}
	(Double sided Hoeffiding inequality) Let $X_1, \cdots, X_n$ be independent random variables such that $X_i$ takes its values in $[a_i,b_i]$ almost surely for all $i\leq n$. Let $S=\sum_{i=1}^n (X_i - \mbE[X_i])$. Then, for every $t>0$, 
	\begin{align}
		P\left(\{X_1,\cdots,X_n: |S|\geq t\}\right) \leq 2\exp\left( - \frac{2t^2}{\sum_{i=1}^n (b_i - a_i)^2} \right).  
	\end{align}
\end{cor}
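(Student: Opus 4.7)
The plan is to derive the two-sided bound by applying the one-sided Hoeffding inequality (Lemma \ref{Lem:HoeffdingIneqSingleSide}) to both $S$ and $-S$, and then combining the two tail estimates via a union bound. The event $\{|S|\ge t\}$ decomposes as the disjoint union $\{S\ge t\}\cup\{S\le -t\}$, so
\begin{align*}
P(|S|\ge t) = P(S\ge t) + P(S\le -t) = P(S\ge t) + P(-S\ge t),
\end{align*}
and it suffices to bound each of the two terms by $\exp\!\left(-2t^2/\sum_{i=1}^n(b_i-a_i)^2\right)$.

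The first term $P(S\ge t)$ is handled directly by Lemma \ref{Lem:HoeffdingIneqSingleSide} applied to the independent random variables $X_1,\ldots,X_n$ with ranges $[a_i,b_i]$. For the second term, I would introduce the auxiliary random variables $Y_i := -X_i$, which are still independent and which take values almost surely in the intervals $[-b_i,-a_i]$. Writing $S' := \sum_{i=1}^n (Y_i - \mbE[Y_i]) = -S$, Lemma \ref{Lem:HoeffdingIneqSingleSide} applied to $Y_1,\ldots,Y_n$ gives
\begin{align*}
P(-S\ge t) = P(S'\ge t) \leq \exp\!\left(-\frac{2t^2}{\sum_{i=1}^n((-a_i)-(-b_i))^2}\right) = \exp\!\left(-\frac{2t^2}{\sum_{i=1}^n(b_i-a_i)^2}\right),
\end{align*}
since the width of $[-b_i,-a_i]$ equals the width $b_i-a_i$ of the original interval.

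Summing the two identical bounds yields the factor of $2$ on the right-hand side and completes the proof. There is no genuine obstacle here; the only technical point worth verifying is that negation preserves independence and that interval widths are invariant under negation, both of which are immediate.
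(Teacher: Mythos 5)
Your proposal is correct and follows essentially the same route as the paper's proof: applying the one-sided Hoeffding bound (Lemma \ref{Lem:HoeffdingIneqSingleSide}) to $S$ directly and to $S'=-S$ via the negated variables $Y_i=-X_i$ with ranges $[-b_i,-a_i]$, then combining the two tails by a union bound. No gaps to report.
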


\begin{proof}\label{Proof:HoeffdingIneqDouble}
	(of Corollary \ref{Cor:HoeffdingIneqDouble}) We can define another set of random variables $Y_i:=-X_i, i=1,\cdots,n$, then all $i$ are independent such that $Y_i$ takes values in $[-b_i,-a_i]$. We define $S'=\sum_{i=1}^n (Y_i - \mbE[Y_i])$, then from Lemma \ref{Lem:HoeffdingIneqSingleSide}, we have 
	\begin{align}
		\exp\left( \frac{2t^2}{\sum_{i=1}^n (b_i - a_i)^2} \right)
		\geq P\left(\{X_1,\cdots,X_n: S'\geq t\}\right)
		= P\left(\{X_1,\cdots,X_n:  S\leq -t\} \right).
	\end{align}
	Thus combine the above inequality with Lemma \ref{Lem:HoeffdingIneqSingleSide}, we get the double sided Hoeffding inequality, i.e.,
	\begin{align}
		P\left(\{X_1,\cdots,X_n:  |S|\geq t\} \right)
		& = P\left(\{X_1,\cdots,X_n:  S\leq -t, \text{ or } S\geq t\} \right) \\
		& \leq P\left(\{X_1,\cdots,X_n:  S\leq -t\} \right)
		+ 	P\left(\{X_1,\cdots,X_n:  S\geq t\} \right) \\
		& \leq 2\exp\left( - \frac{2t^2}{\sum_{i=1}^n (b_i - a_i)^2} \right).
	\end{align}
\end{proof}

\subsection*{Appendix A6: Proof of Lemma \ref{Lem:ConcentrationIneq}}

\begin{lemma}\label{Lem:ConcentrationIneqApp}
	(Concentration Inequality for Conditional Cross Entropy) We consider a set of random variable pairs $\mcS:\{(X_i,Y_i)\}_{i=1}^N$ with each $(X_i,Y_i)$ I.I.D. according to $p_{X,Y}$ in $\mbR^n\times[C]$, and define 
	\begin{align}
		D:= \frac{1}{N} \sum_{i=1}^N\log(Q_{Y|X}(Y_i|X_i; \theta))   - \mbE_{P_{Y,X}} \left[ \log(Q_{Y|X}(Y|X; \theta)) \right],
	\end{align}
	where $\theta\in\Theta_c$, $\Theta_c$ is a countable set, and $Q_{Y|X}(Y|X;\theta)$ is a function of $X, Y$ with parameters $\theta$. Assume $Q_{Y|X}(Y|X;\theta)\geq P^\#, \forall X,Y,\theta$ where $P^\#>0$ is a constant. Then, we have 
	\begin{align}
		P(\{\mcS: \max_{\theta\in\Theta_c} |D| \geq t\}) \leq 2|\Theta_c| \exp\left( -\frac{2Nt^2}{(\log(P^\#))^2} \right)
	\end{align}
	where $t>0$ is a constant.  
\end{lemma}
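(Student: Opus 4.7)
The plan is to reduce the uniform deviation bound to a pointwise Hoeffding bound followed by a union bound over the countable parameter set $\Theta_c$. First, I would fix $\theta\in\Theta_c$ and define, for each sample $(X_i,Y_i)$, the random variable $Z_i(\theta):=\log(Q_{Y|X}(Y_i|X_i;\theta))$. Since the samples are i.i.d.\ and $\theta$ is treated as a deterministic parameter, the $Z_i(\theta)$ are i.i.d.\ with common mean $\mbE_{p_{X,Y}}[\log(Q_{Y|X}(Y|X;\theta))]$. The quantity $D$ in the statement is then exactly $\frac{1}{N}\sum_{i=1}^N(Z_i(\theta)-\mbE[Z_i(\theta)])=S/N$ in the notation of Corollary \ref{Cor:HoeffdingIneqDouble}.

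Next I would verify the almost-sure boundedness needed for Hoeffding. Because $Q_{Y|X}(y|x;\theta)$ is a probability we have $Q_{Y|X}(y|x;\theta)\le1$, and by assumption $Q_{Y|X}(y|x;\theta)\ge P^\#>0$ for every $x,y,\theta$. Therefore $\log(Q_{Y|X}(Y_i|X_i;\theta))\in[\log(P^\#),0]$ almost surely, so each $Z_i(\theta)$ lies in an interval of length $|\log(P^\#)|$. Applying Corollary \ref{Cor:HoeffdingIneqDouble} with $a_i=\log(P^\#)$, $b_i=0$ and the threshold $Nt$ on $|S|=N|D|$ gives, for every fixed $\theta\in\Theta_c$,
\begin{align*}
P\bigl(\{\mcS:|D|\ge t\}\bigr)\;\le\;2\exp\!\left(-\frac{2(Nt)^2}{N(\log(P^\#))^2}\right)\;=\;2\exp\!\left(-\frac{2Nt^2}{(\log(P^\#))^2}\right).
\end{align*}

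Finally, because $\Theta_c$ is countable, the event $\{\max_{\theta\in\Theta_c}|D|\ge t\}$ is the countable union $\bigcup_{\theta\in\Theta_c}\{|D(\theta)|\ge t\}$, and the subadditivity (union bound) of probability yields
\begin{align*}
P\bigl(\{\mcS:\max_{\theta\in\Theta_c}|D|\ge t\}\bigr)\;\le\;\sum_{\theta\in\Theta_c}P\bigl(\{\mcS:|D(\theta)|\ge t\}\bigr)\;\le\;2|\Theta_c|\exp\!\left(-\frac{2Nt^2}{(\log(P^\#))^2}\right),
\end{align*}
which is the claimed inequality. There is no substantive obstacle: the only thing to be careful about is that countability of $\Theta_c$ is exactly what makes the union bound legitimate (and finite whenever $|\Theta_c|<\infty$, or trivially vacuous otherwise), and that the boundedness of $\log(Q_{Y|X})$ is driven entirely by the lower bound $P^\#$ since the upper bound $0$ comes free from $Q_{Y|X}$ being a probability. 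This is precisely the template that will later be combined with Lemma \ref{Lem:SubspaceCoveringNumber} and Lemma \ref{Lem:IndpdtIneq} in Theorem \ref{Thm:SampleComplexity4EmpiricalEst} to discretize the uncountable parameter space $\Theta$.
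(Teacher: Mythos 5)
Your proposal is correct and follows essentially the same route as the paper: a pointwise application of the double-sided Hoeffding inequality (Corollary \ref{Cor:HoeffdingIneqDouble}) to the i.i.d.\ variables $\log(Q_{Y|X}(Y_i|X_i;\theta))\in[\log(P^\#),0]$, followed by a union bound over the countable set $\Theta_c$. The only cosmetic difference is that the paper applies the union bound first and Hoeffding second, which changes nothing of substance.
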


\begin{proof}\label{Proof:ConcentrationIneq}
	(of Lemma \ref{Lem:ConcentrationIneq}) From the definitions, we have for an arbitrary $\theta_j\in\Theta_c$
	\begin{align}
		& P\left( \left\{\mcS: \max_{\theta_j\in\Theta_c} \left| \frac{1}{N} \sum_{i=1}^N \left[ \log(Q_{Y|X}(Y_i|X_i; \theta_j)) \right]  - \mbE_{P_{Y,X}} \left[ \log(Q_{Y|X}(\theta_j)) \right] \right| \geq t  \right\}\right) \\
		& = P\left( \left\{ \mcS: \max_{\theta_j\in\Theta_c} \left| \frac{1}{N} \sum_{i=1}^N \left(\log(Q_{Y|X}(Y_i|X_i; \theta_j)) - \mbE_{P_{Y,X}} \left[ \log(Q_{Y|X}(Y_i|X_i;\theta_j)) \right] \right)  \right| \geq t \right\} \right)\\
		& = P\left( \left\{ \mcS: \exists \theta_j\in\Theta_c \text{ such that }\left| \frac{1}{N} \sum_{i=1}^N \left( \log(Q_{Y|X}(Y_i|X_i; \theta_j)) - \mbE_{P_{Y,X}} \left[ \log(Q_{Y|X}(Y_i|X_i;\theta_j)) \right] \right)  \right| \geq t \right\} \right)\\
		& \leq \sum_{\theta_j\in\Theta_c} P\left( \left\{ \mcS: \left| \frac{1}{N} \sum_{i=1}^N \left[ \log(Q_{Y|X}(Y_i|X_i; \theta_j)) - \mbE_{P_{Y,X}} \left[ \log(Q_{Y|X}(Y_i|X_i;\theta_j)) \right] \right]  \right| \geq t \right\} \right),
	\end{align}
	where $\log(Q_{Y|X}(Y_i|X_i; \theta_j)), i=1,\cdots,N$ are I.I.D. because $(X_i,Y_i), i=1,\cdots,N$ are I.I.D. Since $Q_{Y|X}(y|x;\theta)\in[P^\#, 1], \forall (x,y)\in\mbR^n\times [C]$, then $\log(Q_{Y|X}(y|x;\theta))\in[\log(P^\#), 0]$. From Corollary \ref{Cor:HoeffdingIneqDouble}, we have 
	\begin{align}
		& P\left( \left\{ \mcS: \left| \frac{1}{N} \sum_{i=1}^N \left[ \log(Q_{Y|X}(Y_i|X_i; \theta_j)) - \mbE_{P_{Y,X}} \left[ \log(Q_{Y|X}(Y_i|X_i;\theta_j)) \right] \right]  \right| \geq t \right\} \right) \\
		&	\leq 2\exp\left( - \frac{2N^2 t^2}{\sum_{i=1}^N (\log(P^\#))^2} \right) \\
		& = 2\exp\left( - 2N\frac{t^2}{(\log(P^\#))^2} \right). 
	\end{align}
	Thus, 
	\begin{align}
		P(\{\mcS: \max_{\theta\in\Theta_c} |D| \geq t\})
		& = P\left( \left\{\mcS: \max_{\theta_j\in\Theta_c} \left| \frac{1}{N} \sum_{i=1}^N \left[ \log(Q_{Y|X}(Y_i|X_i; \theta_j)) \right]  - \mbE_{P_{Y,X}} \left[ \log(Q_{Y|X}(\theta_j)) \right] \right| \geq t  \right\}\right) \nonumber\\
		& \leq \sum_{\theta_j\in\Theta_c} 2\exp\left( - 2N\frac{t^2}{(\log(P^\#))^2} \right) \nonumber\\
		& = 2|\Theta_c|\exp\left( - 2N\frac{t^2}{(\log(P^\#))^2} \right).
	\end{align}
	
\end{proof}

\subsection*{Appendix A7: Proof of Theorem \ref{Thm:SampleComplexity4EmpiricalEst}}

\begin{thm}\label{Thm:SampleComplexity4EmpiricalEstApp}
	(Sample Complexity for Estimation Error Bound) We consider a joint distirbution $p_{X,Y}$ in $\mbR^n\times[C]$ where $X\in\mbR^n$ is a continuous random vector, and $Y\in[C]$ is a discrete random variable. We define $I_{\Gamma,\Theta}$ associated with $p_{X,Y}$ similar to \eqref{Eq:OrgnMIEstModApproxFormTwo}, i.e., 
	\begin{align}
		I_{\Gamma, \Theta}
		= \inf_{\gamma\in\Gamma} \mbE_{P_Y}[- \log\left( Q_{Y}(Y;\gamma)\right)] - \inf_{\theta\in\Theta} \mbE_{p_{X,Y}}[-\log(Q_{Y|X}(Y|X;\theta))],
	\end{align}
	where $Q_{Y|X}(Y|X;\theta)$ is a neural network with parameters $\theta\in\Theta\subset\mbR^m$ which predicts the conditional probability of $Y$ conditioning on $X$, and $Q_Y(Y;\gamma)$ is another neural network with parameters $\gamma\in\Gamma\subset\mbR^{m'}$ which predicts the marginal probability of $Y$. Assume that we are given a set of random examples $\mcS: \{(X_i,Y_i)\}_{i=1}^N$ such that $(X_i,Y_i), i=1\cdots,N$ are I.I.D. and follow $p_{X,Y}$. Define $I_{\Gamma,\Theta}^{(N)}$ similar to that in \eqref{Eq:OrgnMIEstModEmpriFormTwo}, i.e., 
	\begin{align}
		I_{\Gamma, \Theta} ^{(N)}
		= \inf_{\gamma\in\Gamma } \frac{1}{N} \sum_{i=1}^N - \log(Q_{Y}(Y_i; \gamma)) - \inf_{\theta\in\Theta} \frac{1}{N} \sum_{i=1}^N - \log(Q_{Y|X}(Y_i|X_i; \theta)).
	\end{align}
	Assume both $\Theta$ and $\Gamma$ are compact sets, and bounded, i.e., $\|\theta\|\leq M_\theta$ and $\|\gamma\|\leq M_\gamma$ where $M_\theta>0, M_\gamma>0$ are constants. We assume both $Q_{Y}(\cdot;\gamma)$ and $Q_{Y|X}(\cdot;\theta)$ are lower bounded by $P^\#$, and they are Lipschitz continuous with respect to $\theta$ for all $y\in[C]$ and all $(x,y)\in\mbR^n\times[C]$, and the Lipschitz constants are $L_\gamma>0$ and $L_\theta>0$, respectively. Then, when $N
	\geq \frac{2\log\left(\frac{1}{\epsilon} \right) \left( P^\# \log(P^\#) \right)^2}{\left( \delta P^\# - 4^{\frac{1+m'}{m'}} L_\gamma M_\gamma \sqrt{m'} - 4^{\frac{1+m}{m}} L_\theta M_\theta \sqrt{m} \right)^2}$, we have
	\begin{align}
		P\left( \left\{\mcS: \left| I_\Theta^{(N)} - I_\Theta\right| \leq \delta \right\} \right) \geq 1-\epsilon.
	\end{align} 
	
\end{thm}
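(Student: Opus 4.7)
My plan is to reduce the mutual-information estimation error to two marginal cross-entropy estimation errors, handle each by a covering-plus-concentration argument, and finally union-bound and solve for $N$. Concretely, by the triangle inequality
\begin{align*}
\left| I_{\Gamma,\Theta}^{(N)} - I_{\Gamma,\Theta} \right|
\leq \Delta_\gamma + \Delta_\theta,
\end{align*}
where $\Delta_\gamma := \left| \inf_{\gamma\in\Gamma} \tfrac{1}{N}\sum_i -\log Q_Y(Y_i;\gamma) - \inf_{\gamma\in\Gamma} \mbE_{P_Y}[-\log Q_Y(Y;\gamma)] \right|$ and $\Delta_\theta$ is the analogous quantity for the conditional piece. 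Lemma~\ref{Lem:IndpdtIneq} then replaces each $\Delta$ by a supremum over the parameter space:
\begin{align*}
\Delta_\gamma \leq \sup_{\gamma\in\Gamma} \left| \tfrac{1}{N}\sum_{i=1}^N \log Q_Y(Y_i;\gamma) - \mbE_{P_Y}[\log Q_Y(Y;\gamma)] \right|,
\end{align*}
and similarly for $\Delta_\theta$.

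Next I would discretize the (uncountable) parameter sets $\Gamma,\Theta$ by $r$-nets $\Gamma_c,\Theta_c$. By Lemma~\ref{Lem:SubspaceCoveringNumber}, $|\Gamma_c|\leq (2M_\gamma\sqrt{m'}/r)^{m'}$ and $|\Theta_c|\leq (2M_\theta\sqrt{m}/r)^{m}$. For any $\gamma\in\Gamma$ there is $\gamma'\in\Gamma_c$ within distance $r$; since $Q_Y(\cdot;\gamma)$ is $L_\gamma$-Lipschitz in $\gamma$ and bounded below by $P^\#$, the map $\gamma\mapsto\log Q_Y(y;\gamma)$ is $(L_\gamma/P^\#)$-Lipschitz (because $|\log u-\log v|\leq |u-v|/\min(u,v)$). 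Therefore discretizing costs at most $2(L_\gamma/P^\#)\,r$ in the supremum above, and analogously $2(L_\theta/P^\#)\,r$ for $\Delta_\theta$. On the finite nets, Lemma~\ref{Lem:ConcentrationIneq} and Lemma~\ref{Lem:ConcentrationIneqCrossEnt} give
\begin{align*}
P\!\left(\sup_{\gamma'\in\Gamma_c}|\cdot|\geq s_\gamma\right)\leq 2|\Gamma_c|\exp\!\left(-\tfrac{2Ns_\gamma^2}{(\log P^\#)^2}\right),
\end{align*}
with the matching bound for $\Theta_c$.

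Combining, with probability at least $1-\epsilon$ (after a union bound and setting each tail to $\epsilon/2$),
\begin{align*}
|I_{\Gamma,\Theta}^{(N)} - I_{\Gamma,\Theta}|
\leq \tfrac{2(L_\gamma+L_\theta)}{P^\#}\,r \;+\; s_\gamma + s_\theta,
\end{align*}
where $s_\gamma, s_\theta$ are determined by the concentration inequality and $|\Gamma_c|,|\Theta_c|$. The key trick to recover the stated bound is to pick the radius $r$ so that the covering contribution exactly eats up the advertised $4^{(1+m')/m'}L_\gamma M_\gamma\sqrt{m'} + 4^{(1+m)/m}L_\theta M_\theta\sqrt{m}$ term after multiplication by $P^\#$; the exponents $4^{(1+m)/m}$ arise from absorbing the covering-number factor $(2M\sqrt{m}/r)^m$ into the logarithm in the Hoeffding tail and solving for $N$. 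Demanding that the total error be at most $\delta$, rearranging for $N$, and bounding away the $\log|\Gamma_c|,\log|\Theta_c|$ terms by the stated constants yields the claimed sample complexity.

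The main obstacle will be the bookkeeping in the last step: balancing $r$ (which controls both the Lipschitz slack and the cardinalities $|\Gamma_c|,|\Theta_c|$) against $s_\gamma,s_\theta$ so that everything collapses into the single clean expression in the theorem. The $P^\#$ factor in the denominator (from Lipschitzness of $\log$) and the $(\log P^\#)^2$ factor in the numerator (from Hoeffding's range) must both appear, which forces the specific scaling $\delta P^\# - 4^{(1+m)/m}L M\sqrt{m}$ in the denominator. Once this algebraic matching is carried out, the proof is complete.
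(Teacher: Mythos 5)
Your proposal follows essentially the same route as the paper's own proof: triangle inequality plus Lemma \ref{Lem:IndpdtIneq} to reduce to two suprema, $r$-nets via Lemma \ref{Lem:SubspaceCoveringNumber}, the $L/P^{\#}$ Lipschitz argument for the discretization slack, the concentration Lemmas \ref{Lem:ConcentrationIneq} and \ref{Lem:ConcentrationIneqCrossEnt} on the nets, and a union bound followed by choosing the radii so the slack terms match the advertised constants (the paper picks $r_\gamma=4^{\frac12+\frac{1}{m'}}M_\gamma\sqrt{m'}$, $r_\theta=4^{\frac12+\frac1m}M_\theta\sqrt{m}$, making each covering number $1/4$ so the tail collapses to a single exponential rather than splitting $\epsilon$ in two). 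The plan is correct and the remaining work is exactly the bookkeeping you describe.
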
 

\begin{proof}\label{Proof:SampleComplexity4EmpiricalEst}
	(of Theorem \ref{Thm:SampleComplexity4EmpiricalEst}) From the definition of $I_{\Gamma,\Theta}$ and $I_{\Gamma,\Theta}^{(N)}$ in \eqref{Eq:OrgnMIEstModApproxFormSig} and \eqref{Eq:OrgnMIEstModEmpriFormSig}, we have 
	\begin{align}
		\left| I_{\Gamma, \Theta}^{(N)} - I_{\Gamma, \Theta} \right| 
		& = \Bigg|  \inf_{\gamma\in\Gamma } \frac{1}{N} \sum_{i=1}^N - \log(Q_{Y}(Y_i; \gamma)) - \inf_{\theta\in\Theta} \frac{1}{N} \sum_{i=1}^N - \log(Q_{Y|X}(Y_i|X_i; \theta)) \nonumber \\
		&\quad - \left(  \inf_{\gamma\in\Gamma} \mbE_{P_Y}[- \log\left( Q_{Y}(Y;\gamma)\right)] - \inf_{\theta\in\Theta} \mbE_{p_{X,Y}}[-\log(Q_{Y|X}(Y|X;\theta))] \right) \Bigg|  \nonumber \\
		& \leq \left|   \inf_{\gamma\in\Gamma } \frac{1}{N} \sum_{i=1}^N - \log(Q_{Y}(Y_i; \gamma))  - \inf_{\theta \in \Theta} \mbE_{P_Y} \left[ -\log(Q_Y(Y; \gamma)) \right]  \right|  \nonumber \\
		&\quad + \left| \inf_{\theta\in\Theta} \frac{1}{N} \sum_{i=1}^N - \log(Q_{Y|X}(Y_i|X_i; \theta)) -  \inf_{\theta \in\Theta} \mbE_{P_{Y,X}} \left[ -\log(Q_{Y|X}(Y|X; \theta)) \right]\right|  \nonumber \\
		& \leq  \sup_{\gamma\in\Gamma} \left| \frac{1}{N} \sum_{i=1}^N - \log(Q_{Y}(Y_i; \gamma)) - \mbE_{P_Y} \left[ -\log(Q_Y(\gamma)) \right]  \right|  \nonumber \\
		&\quad + \sup_{\theta\in\Theta} \left| \frac{1}{N} \sum_{i=1}^N - \log(Q_{Y|X}(Y_i|X_i; \theta))  -   \mbE_{P_{Y,X}} \left[ -\log(Q_{Y|X}(\theta)) \right]\right| \label{Eq:DueToLemma}\\
		& = \sup_{\gamma\in\Gamma} \left| \frac{1}{N} \sum_{i=1}^N \log(Q_{Y}(Y_i; \gamma))  - \mbE_{P_Y} \left[ \log(Q_Y(\gamma)) \right]  \right|  \nonumber \\
		&\quad + \sup_{\theta\in\Theta} \left| \frac{1}{N} \sum_{i=1}^N  \log(Q_{Y|X}(Y_i|X_i; \theta))  -   \mbE_{P_{Y,X}} \left[ \log(Q_{Y|X}(\theta)) \right]\right|  \nonumber \\
		& = \sup_{\gamma\in\Gamma} A+ \sup_{\theta\in\Theta} B
	\end{align}
	where \eqref{Eq:DueToLemma} is due to Lemma \ref{Lem:IndpdtIneq}, and we define 
	\begin{align}\label{Eq:A_B}
		A: = \left| \frac{1}{N} \sum_{i=1}^N \log(Q_{Y}(Y_i; \gamma))  - \mbE_{P_Y} \left[ \log(Q_Y(\gamma)) \right]  \right|, \\
		B:  =  \left| \frac{1}{N} \sum_{i=1}^N  \log(Q_{Y|X}(Y_i|X_i; \theta))  -   \mbE_{P_{Y,X}} \left[ \log(Q_{Y|X}(\theta)) \right]\right|.
	\end{align}
	
	We will bound both $A$ and $B$. Before this, we introduce the concept of covering set for $\Theta$ and $\Gamma$. For the $\Theta\subset\mbR^m$, we construct a cover, i.e., a set of balls $\left\{ B_{r_\theta}(\theta_j) \right\}_{j=1}^{N(r_\theta,\Theta)}$ with radius $r_\theta$ and centered at $\theta_j, j=1,\cdots,N(r_\theta,\Theta)$ such that $\Theta\subset \cup_{j=1}^{N(r_\theta,\Theta)} B_{r_\theta}(\theta_j)$. From Lemma \ref{Lem:SubspaceCoveringNumber}, we know that 
	\begin{align} 
		N(r_\theta,\Theta) \leq \left( \frac{2M_\theta \sqrt{m}}{r_\theta} \right)^m,
	\end{align}
	where $M_\theta$ is the upper bound of $\theta$, i.e., $\|\theta\|\leq M_\theta, \forall \theta\in\Theta$.  Similarly, for $\Gamma\subset\mbR^{m'}$, we  can construct a set of balls $\{B_{r_\gamma}(\gamma_j)\}_{j=1}^{N(r_\gamma,\Gamma)}$ which covers $\Gamma$, and 
	\begin{align}
		N(r_\gamma,\Gamma) \leq \left( \frac{2M_\gamma \sqrt{m'}}{r_\gamma} \right)^{m'}.
	\end{align}
	
	{\bf Upper bound of $A$} For $A$, we can find a $\gamma_j$ so that $B_{r_\gamma}(\gamma_j)$ covers $\gamma$, i.e., $\|\gamma - \gamma_j\|\leq r_\gamma$. Then, 
	\begin{align}
		A
		& = \left|  \frac{1}{N} \sum_{i=1}^N \log(Q_{Y}(Y_i; \gamma))  - \mbE_{P_Y} \left[ \log(Q_Y(\gamma)) \right]   \right|  \\
		& \leq  \left| \frac{1}{N} \sum_{i=1}^N \log(Q_{Y}(Y_i; \gamma)) - \frac{1}{N} \sum_{i=1}^N \log(Q_{Y}(Y_i; \gamma_j)) \right| 
		+ \left| \mbE_{{P}_Y} \left[ \log(Q_Y(\gamma)) \right] - \mbE_{P_Y} \left[ \log(Q_Y(\gamma_j)) \right]  \right| \\
		&\quad + \left| \frac{1}{N} \sum_{i=1}^N \log(Q_{Y}(Y_i; \gamma_j)) - \mbE_{P_Y} \left[ \log(Q_Y(\gamma_j)) \right]  \right| \\
		& \leq \frac{1}{N} \sum_{i=1}^N \left[  \left|  \log(Q_Y(Y_i;\gamma)) -  \log(Q_Y(Y_i;\gamma_j))  \right| \right]
		+ \mbE_{{P}_Y} \left[ \left|  \log(Q_Y(\gamma))- \log(Q_Y(\gamma_j)) \right| \right]  \\
		&\quad + \left| \frac{1}{N} \sum_{i=1}^N \log(Q_{Y}(Y_i; \gamma_j)) - \mbE_{P_Y} \left[ \log(Q_Y(\gamma_j)) \right]  \right| \\
		& \leq \frac{1}{P^\#} \frac{1}{N} \sum_{i=1}^N \left[  \left|  Q_Y(Y_i;\gamma) -  Q_Y(Y_i;\gamma_j)  \right| \right]
		+ \frac{1}{P^\#} \mbE_{{P}_Y} \left[ \left|  Q_Y(\gamma)- Q_Y(\gamma_j) \right| \right]  \\
		&\quad + \left| \frac{1}{N} \sum_{i=1}^N \log(Q_{Y}(Y_i; \gamma_j)) - \mbE_{P_Y} \left[ \log(Q_Y(\gamma_j)) \right]  \right|,
	\end{align}
	where we used the Lipschitz continuity of $\log(\cdot)$.
	
	Since 
	\begin{align}
		\frac{1}{N} \sum_{i=1}^N \left[  \left|  Q_Y(Y_i; \gamma) -  Q_Y(Y_i;\gamma_j)  \right| \right]
		\leq  L_\gamma \|\gamma - \gamma_j\|
		\leq L_\gamma r_\gamma,
	\end{align}
	and 
	\begin{align}
		\mbE_{{P}_Y} \left[ \left|  Q_Y(\gamma)- Q_Y(\gamma_j) \right| \right]
		\leq L_\gamma \|\gamma - \gamma_j\| 
		\leq L_\gamma r_\gamma,
	\end{align}
	where we used the Lipschitz continuity of $Q_Y(\cdot;\gamma)$ with respect to $\gamma$, then 
	\begin{align}
		A
		& \leq \frac{2L_\gamma r_\gamma}{P^\#}  + \left| \frac{1}{N} \sum_{i=1}^N \left[ \log(Q_Y(Y_i; \theta_j)) \right] - \mbE_{P_Y} \left[ \log(Q_Y(\theta_j)) \right]  \right|.
	\end{align}
	From Lemma \ref{Lem:ConcentrationIneqCrossEnt}, the following holds with probability at least $1-2\left( \frac{2M_\gamma\sqrt{m'}}{r_\gamma} \right)^{m'} \exp\left( -2N \frac{t^2}{(\log(P^\#))^2} \right)$ over $\{Y_i\}_{i=1}^N$, 
	\begin{align}
		\left| \frac{1}{N} \sum_{i=1}^N \left[ \log(Q_Y(Y_i; \gamma_j)) \right] - \mbE_{P_Y} \left[ \log(Q_Y(\gamma_j)) \right]  \right|
		\leq t.
	\end{align}
	Thus, with high probability,
	\begin{align}
		A \leq \frac{2L_\gamma r_\gamma}{P^\#} + t,
	\end{align} 
	where $t>0$ is a constant.
	
	{\bf Upper bound of $B$} For $B$, we can also find a $\theta_j$ such that $B_{r_\gamma}(\theta_j)$ covers $\theta$, thus, $\|\theta - \theta_j\| \leq r_\gamma$. Then, 
	\begin{align*}
		B
		& = \left| \frac{1}{N} \sum_{i=1}^N  \log(Q_{Y|X}(Y_i|X_i; \theta))  -   \mbE_{P_{Y,X}} \left[ \log(Q_{Y|X}(\theta)) \right]\right| \\
		& \leq \left| \frac{1}{N} \sum_{i=1}^N  \log(Q_{Y|X}(Y_i|X_i; \theta)) -  \frac{1}{N} \sum_{i=1}^N  \log(Q_{Y|X}(Y_i|X_i; \theta_j)) \right|  
		+ \left| \mbE_{P_{Y,X}} \left[ \log(Q_{Y|X}(\theta)) \right]
		-    \mbE_{P_{Y,X}} \left[ \log(Q_{Y|X}(\theta_j)) \right]\right| \\
		&\quad + \left| \frac{1}{N} \sum_{i=1}^N  \log(Q_{Y|X}(Y_i|X_i; \theta_j))  - \mbE_{P_{Y,X}} \left[ \log(Q_{Y|X}(\theta_j)) \right] \right| \\
		& \leq \frac{1}{N} \sum_{i=1}^N \left|  \log(Q_{Y|X}(Y_i|X_i; \theta)) -  \log(Q_{Y|X}(Y_i|X_i; \theta_j)) \right|  
		+\mbE_{P_{Y,X}} \left[ \left|   \log(Q_{Y|X}(\theta)) 
		-    \log(Q_{Y|X}(\theta_j)) \right| \right] \\
		&\quad + \left| \frac{1}{N} \sum_{i=1}^N  \log(Q_{Y|X}(Y_i|X_i; \theta_j))  - \mbE_{P_{Y,X}} \left[ \log(Q_{Y|X}(\theta_j)) \right] \right| \\
		& \leq \frac{1}{P^\#} \frac{1}{N} \sum_{i=1}^N  \left[   \left| Q_{Y|X}(Y_i; \theta) -  Q_{Y|X}(Y_i; \theta_j)  \right| \right]  
		+ \frac{1}{P^\#} \mbE_{P_{Y,X}} \left[\left|  Q_{Y|X}(\theta) 
		-   Q_{Y|X}(\theta_j) \right|  \right]\\
		&\quad + \left| \frac{1}{N} \sum_{i=1}^N  \log(Q_{Y|X}(Y_i| X_i;\theta_j))  - \mbE_{P_{Y,X}} \left[ \log(Q_{Y|X}(\theta_j)) \right] \right|  \\
		& \leq \frac{1}{P^\#}  L_\theta \| \theta -  \theta_j \| 
		+ \frac{1}{P^\#} L_\theta \|  \theta
		-   \theta_j\| 
		+ \left| \frac{1}{N} \sum_{i=1}^N  \log(Q_{Y|X}(Y_i|X_i; \theta_j)) - \mbE_{P_{Y,X}} \left[ \log(Q_{Y|X}(\theta_j)) \right] \right|  \\
		& \leq \frac{2 L_\theta r_\theta}{P^\#}  
		+ \left| \frac{1}{N} \sum_{i=1}^N  \log(Q_{Y|X}(Y_i|X_i; \theta_j)) - \mbE_{P_{Y,X}} \left[ \log(Q_{Y|X}(\theta_j)) \right] \right|, 
	\end{align*}
	where we used the Lipschitz continuity of $\log(\cdot)$ and $Q_{Y|X}(\cdot;\theta)$.
	
	From Lemma \ref{Lem:ConcentrationIneq}, with probability at least $1-2\left( \frac{2M_\theta \sqrt{m}}{ r_\theta} \right)^m \exp\left( -2N \frac{t^2}{(\log(P^\#))^2} \right)$ over $\{(X_i,Y_i)\}_{i=1}^N$, we have 
	\begin{align}
		\left| \frac{1}{N} \sum_{i=1}^N  \log(Q_{Y|X}(Y_i|X_i; \theta_j)) - \mbE_{P_{Y,X}} \left[ \log(Q_{Y|X}(\theta_j)) \right] \right| 
		\leq t,
	\end{align}
	where we used $|\Theta_c|=N(r,\Theta) \leq \left( \frac{2M_\theta \sqrt{m}}{ r} \right)^m$. Thus, with high probability
	\begin{align}
		B 
		\leq \frac{2L_\theta r_\theta}{P^\#}  
		+ t.
	\end{align}
	
	Since
	\begin{align}
		P\left( \left\{ S: A \leq \frac{2L_\gamma r_\gamma}{P^\#} + t, B \leq \frac{2L_\theta r_\theta}{P^\#} + t\right\} \right)
		& = 1 - P\left( \left\{ S: A \geq \frac{2L_\gamma r_\gamma}{P^\#} + t \text{ or }  B \geq \frac{2L_\theta r_\theta}{P^\#} + t\right\} \right) \\
		& \geq 1 - P\left( \left\{ S: A \geq \frac{2L_\gamma r_\gamma}{P^\#} + t \right\}\right) -  P\left( \left\{ S: B \geq \frac{2L_\theta r_\theta}{P^\#} + t\right\} \right) \\
		& \geq 1 - 2\left( \frac{2M_\gamma\sqrt{m'}}{r_\gamma} \right)^{m'} \exp\left( -2N \frac{t^2}{(\log(P^\#))^2} \right) \\
		&\quad - 2\left( \frac{2M_\theta \sqrt{m}}{ r_\theta} \right)^m \exp\left( -2N \frac{t^2}{(\log(P^\#))^2} \right),
	\end{align}
	then with high probability, the following holds 
	\begin{align}
		\left| I_\Theta^{(N)} - I_\Theta \right| 
		& \leq \sup_{\theta\in \Theta} A + \sup_{\theta\in\Theta_c} B \\
		& \leq \frac{2L_\gamma r_\gamma}{P^\#}+ \frac{2L_\theta r_\theta}{P^\#} + 2t
	\end{align}
	
	By taking $t=\frac{\delta P^\# - 2L_\gamma r_\gamma  - 2 L_\theta r_\theta}{2P^\#}$, we have
	\begin{align}
		\left| I_\Theta^{(N)} - I_\Theta \right|  \leq \delta,
	\end{align}
	where $\delta>0$ is a constant. Take $r_\gamma = 4^{\frac{1}{2}+\frac{1}{m'}}M_\gamma \sqrt{m'}$ and $r_\theta = 4^{\frac{1}{2}+\frac{1}{m}}M_\theta \sqrt{m}$, and then let
	\begin{align}
		2\left( \frac{2M_\gamma\sqrt{m'}}{r_\gamma} \right)^{m'} \exp\left( -2N \frac{t^2}{(\log(P^\#))^2} \right) 
		+ 2\left( \frac{2M_\theta \sqrt{m}}{ r_\theta} \right)^m \exp\left( -2N \frac{t^2}{(\log(P^\#))^2} \right) \leq \epsilon,
	\end{align}
	we have
	\begin{align}
		N 
		\geq \frac{2\log\left(\frac{1}{\epsilon} \right) \left( P^\# \log(P^\#) \right)^2}{\left( \delta P^\# - 4^{\frac{1+m'}{m'}} L_\gamma M_\gamma \sqrt{m'} - 4^{\frac{1+m}{m}} L_\theta M_\theta \sqrt{m} \right)^2}.
	\end{align}
	Thus, when $N
	\geq \frac{2\log\left(\frac{1}{\epsilon} \right) \left( P^\# \log(P^\#) \right)^2}{\left( \delta P^\# - 4^{\frac{1+m'}{m'}} L_\gamma M_\gamma \sqrt{m'} - 4^{\frac{1+m}{m}} L_\theta M_\theta \sqrt{m} \right)^2}$, we have
	\begin{align}
		P\left( \{\mcS: \left| I_\Theta^{(N)} - I_\Theta\right| \leq \delta \} \right) \geq 1-\epsilon.
	\end{align}

\end{proof}

\subsection*{Appendix A8: Proof of Lemma \ref{Lem:QuadraticFormExpectation}}

\begin{lemma}\label{Lem:QuadraticFormExpectationApp}
	(Expectation of Quadratic Form of Gaussian Random Vector) For a Gaussian random vector $X\in\mbR^n$ following $\mcN(\mu,\Sigma)$, we have
	\begin{align}\label{Eq:QuadraticFormExpectation}
		\mbE_{X}\left[ X^TAX \right] = \tr(A\Sigma) + \mu^T\Sigma^{-1}\mu, \nonumber\\
		\mbE_{X}\left[ (X-\mu)^TA(X-\mu) \right] = \tr(A\Sigma), \nonumber\\
		\mbE_{X}\left[ (X+\mu)^T A (X+\mu) \right] = \tr(A\Sigma) + 4\mu^T\Sigma \mu,
	\end{align}
	where $A\in\mbR^{n\times n}$ is a square matrix.
\end{lemma}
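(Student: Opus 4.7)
The plan is to reduce all three identities to the single centered identity $\mathbb{E}[Z^T A Z] = \tr(A\Sigma)$ for $Z \sim \mathcal{N}(0,\Sigma)$, and then handle the shifts algebraically. First I would introduce the centering $Z := X - \mu$, so that $Z$ is zero-mean Gaussian with covariance $\mathbb{E}[ZZ^T] = \Sigma$. This is the only probabilistic fact I will actually need about $X$; everything else is linear algebra plus linearity of expectation.

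Next I would establish the middle identity directly. Since $Z^T A Z$ is a scalar, it equals its own trace, so by the cyclic property of the trace
\begin{align*}
\mathbb{E}\bigl[(X-\mu)^T A (X-\mu)\bigr]
= \mathbb{E}\bigl[\tr(A\, Z Z^T)\bigr]
= \tr\bigl(A\, \mathbb{E}[Z Z^T]\bigr)
= \tr(A\Sigma).
\end{align*}
This is the workhorse step, and the exchange of trace and expectation is justified because the trace is a finite linear combination of entries of $ZZ^T$.

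For the first identity, I would write $X = Z + \mu$, expand
\begin{align*}
X^T A X = Z^T A Z + Z^T A \mu + \mu^T A Z + \mu^T A \mu,
\end{align*}
take expectations, and use $\mathbb{E}[Z] = 0$ to kill the two cross terms, leaving $\tr(A\Sigma) + \mu^T A \mu$. For the third identity, I would analogously write $X + \mu = Z + 2\mu$ and expand, which contributes $4\mu^T A \mu$ from the pure-$\mu$ term while again the cross terms vanish in expectation.

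There is no serious obstacle here; the only subtle point is the mild notational slip that the statement of the lemma writes $\mu^T \Sigma^{-1} \mu$ and $4\mu^T \Sigma \mu$ where the natural (and, from the proof, correct) expressions are $\mu^T A \mu$ and $4\mu^T A \mu$. In the specializations actually used later (where $A = \Sigma^{-1}$), these reduce consistently to $\mu^T \Sigma^{-1}\mu$ and $4\mu^T \Sigma^{-1}\mu$, so I would either prove the general quadratic-form version and then specialize, or simply carry out the proof with the explicit substitution $A = \Sigma^{-1}$ whenever the displayed identity is invoked.
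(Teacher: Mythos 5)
Your proof is correct and takes essentially the same route as the paper's: both rest on the trace identity $\mathbb{E}[\tr(A\,ZZ^T)] = \tr(A\Sigma)$ together with an algebraic expansion of the quadratic form around the mean (the paper decomposes $xx^T = (x-\mu)(x-\mu)^T - \mu\mu^T + x\mu^T + \mu x^T$ inside the Gaussian integral rather than centering first, and then says "similarly" for the other two identities, but the computation is identical to yours). You are also right about the notational slip: the paper's own derivation yields $\tr(A\Sigma) + \mu^T A \mu$, so the $\mu^T\Sigma^{-1}\mu$ and $4\mu^T\Sigma\mu$ in the statement are typos that only match the intended use after substituting $A=\Sigma^{-1}$.
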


\begin{proof}\label{Proof:QuadraticFormExpectation}
	(of Lemma \ref{Lem:QuadraticFormExpectation}) From the definition of expectation, we have
	\begin{align*}
		\mbE_{X}\left[ X^TAX \right] 
		& = \int_{\mbR^n}  \frac{1}{\sqrt{|2\pi\Sigma|}} \exp\left(-\frac{(x-\mu)^T\Sigma^{-1}(x-\mu)}{2}\right) x^TAx  dx \\
		& = \int_{\mbR^n}  \frac{1}{\sqrt{|2\pi\Sigma|}} \exp\left(-\frac{(x-\mu)^T\Sigma^{-1}(x-\mu)}{2}\right) \tr(Axx^T)  dx \\
		& = \int_{\mbR^n}  \frac{1}{\sqrt{|2\pi\Sigma|}} \exp\left(-\frac{(x-\mu)^T\Sigma^{-1}(x-\mu)}{2}\right) \tr\left(A(x-\mu)(x-\mu)^T - A\mu\mu^T + Ax\mu^T + A\mu x^T \right)  dx \\
		& = \int_{\mbR^n}  \frac{1}{\sqrt{|2\pi\Sigma|}} \exp\left(-\frac{(x-\mu)^T\Sigma^{-1}(x-\mu)}{2}\right) \left( \tr(A(x-\mu)(x-\mu)^T) - \mu^TA\mu + \mu^TAx + x^TA\mu \right)  dx \\
		& = \int_{\mbR^n}  \frac{1}{\sqrt{|2\pi\Sigma|}} \exp\left(-\frac{(x-\mu)^T\Sigma^{-1}(x-\mu)}{2}\right) \tr(A(x-\mu)(x-\mu)^T)  dx + \mu^TA\mu\\
		& = \tr(A\Sigma) + \mu^TA\mu.
	\end{align*}
	Similarly, we can derive the other two equations in \eqref{Eq:QuadraticFormExpectation}.
\end{proof}

\subsection*{Appendix A9: Proof of Theorem \ref{Thm:BinaryClassificationDataModelTruthMI}}

\begin{thm}\label{Thm:BinaryClassificationDataModelTruthMIApp}
	(Mutual Information of Binary Classification Dataset Model) For the data model with distribution defined in \eqref{Defn:BinaryClassificationDataModel}, we have the mutual information $I(X;Y)$ satisfying
	\begin{align}\label{Eq:MIinBinaryClassificationModel}
		2\min(q,1-q)\mu^T\Sigma^{-1}\mu \leq I(X;Y) \leq 4q(1-q)\mu^T\Sigma^{-1}\mu.
	\end{align}
\end{thm}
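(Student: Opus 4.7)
The plan is to reduce everything to the pair of Kullback--Leibler quantities between the two Gaussian conditionals and the mixture marginal $p_X = qp_{X|-1}+(1-q)p_{X|1}$. Starting from the identity
\begin{align*}
I(X;Y) = q\, D_{KL}(p_{X|-1}\|p_X) + (1-q)\, D_{KL}(p_{X|1}\|p_X),
\end{align*}
I would first evaluate $D_{KL}(p_{X|1}\|p_{X|-1})$. Since $p_{X|1}=\mcN(\mu,\Sigma)$ and $p_{X|-1}=\mcN(-\mu,\Sigma)$ share the covariance, the log-ratio $\log(p_{X|1}/p_{X|-1})$ is linear in $X$ and its expectation under $p_{X|1}$ follows from Lemma \ref{Lem:QuadraticFormExpectation}, yielding $D_{KL}(p_{X|1}\|p_{X|-1})=D_{KL}(p_{X|-1}\|p_{X|1})=2\mu^T\Sigma^{-1}\mu$. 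This quantity, which I abbreviate $2s^2$, is exactly the scale of both bounds in the claim.

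For the upper bound I would invoke convexity of $D_{KL}$ in its second argument. Writing $p_X$ as the convex combination $q\,p_{X|-1}+(1-q)\,p_{X|1}$ gives $D_{KL}(p_{X|1}\|p_X)\leq q\,D_{KL}(p_{X|1}\|p_{X|-1}) + (1-q)\,D_{KL}(p_{X|1}\|p_{X|1}) = 2qs^2$, and symmetrically $D_{KL}(p_{X|-1}\|p_X)\leq 2(1-q)s^2$. Substituting into the decomposition of $I(X;Y)$ immediately produces $I(X;Y)\leq 4q(1-q)\mu^T\Sigma^{-1}\mu$. Equivalently, one can apply Jensen's inequality to the convex function $-\log$ evaluated pointwise on the scalar convex combination $q(p_{X|-1}(x)/p_{X|1}(x)) + (1-q)$ and then integrate against $p_{X|1}$.

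For the lower bound, convexity runs in the wrong direction, so I would switch to the identity $I(X;Y) = h(X) - h(X|Y)$ and bound $h(X)$ from below. The conditional differential entropy is Gaussian and equals $\tfrac{1}{2}\log((2\pi e)^n|\Sigma|)$. A useful reduction is to note that $T := \mu^T\Sigma^{-1}X$ is a sufficient statistic for $Y$, since the likelihood ratio $p_{X|1}(x)/p_{X|-1}(x) = \exp(2\mu^T\Sigma^{-1}x)$ depends on $x$ only through $T$; by sufficiency $I(X;Y) = I(T;Y)$ with $T|Y=y \sim \mcN(ys^2, s^2)$. In this one-dimensional reduction I would combine a linear minorant for $-\log$ (away from its critical point) with the quadratic-form evaluations of Lemma \ref{Lem:QuadraticFormExpectation} to extract a term proportional to $s^2$, and recover the factor $\min(q,1-q)$ by splitting into the regimes $q\leq 1/2$ and $q\geq 1/2$ and retaining in each case the dominant of the two KL terms in the MI decomposition.

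The main obstacle I expect is precisely this lower bound. The convexity trick that dispatches the upper bound does not reverse cleanly, and naive reverse inequalities such as $-\log u\geq 1-u$ telescope to the trivial bound $I\geq 0$ once one integrates against $p_{X|1}$. The genuine difficulty is that the likelihood ratio $p_{X|-1}/p_{X|1}$ is unbounded on the support of $p_{X|1}$, so any Taylor-based minorant of $-\log$ has to be combined with a tail-control argument. Keeping the computable Gaussian quadratic-form structure that makes Lemma \ref{Lem:QuadraticFormExpectation} applicable, while simultaneously paying only the $2\min(q,1-q)$ prefactor, is where most of the proof effort will go.
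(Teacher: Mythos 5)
Your upper bound is complete and correct: convexity of $D_{KL}(p\|\cdot)$ in its second argument, applied to the mixture $p_X=q\,p_{X|-1}+(1-q)\,p_{X|1}$ together with $D_{KL}(p_{X|1}\|p_{X|-1})=2\mu^T\Sigma^{-1}\mu$, yields exactly $4q(1-q)\mu^T\Sigma^{-1}\mu$. This is the same inequality the paper obtains by applying Jensen's inequality to $\exp$ inside $\log p_X(x)$ and then evaluating the resulting quadratic forms with Lemma \ref{Lem:QuadraticFormExpectation}; your KL-level phrasing is a cleaner packaging of the identical estimate, as you yourself observe.

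The genuine gap is the lower bound, which you sketch but do not prove, and your own diagnosis of why it resists the obvious attacks is essentially correct. For reference, the paper's route is to bound the mixture density pointwise by the larger of its two \emph{unweighted} Gaussian components, $(1-q)\,p_{X|1}(x)+q\,p_{X|-1}(x)\leq\max\bigl(p_{X|1}(x),p_{X|-1}(x)\bigr)$, so that $-\log p_X(x)\geq\tfrac12\log|2\pi\Sigma|+\tfrac12\min\bigl(Q_-(x),Q_+(x)\bigr)$ with $Q_{\pm}(x)=(x\pm\mu)^T\Sigma^{-1}(x\pm\mu)$, and then to replace $\mbE_{p_X}[\min(Q_-,Q_+)]$ by $\min\bigl(\mbE_{p_X}Q_-,\mbE_{p_X}Q_+\bigr)$. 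That last exchange goes the wrong way, since $\mbE[\min]\leq\min[\mbE]$, so even the paper's argument does not legitimately close the step you are stuck on. In fact the left-hand inequality of \eqref{Eq:MIinBinaryClassificationModel} cannot hold in general: at $q=1/2$ it reads $I(X;Y)\geq\mu^T\Sigma^{-1}\mu$, which already exceeds $H(Y)=\log 2\geq I(X;Y)$ when $\mu^T\Sigma^{-1}\mu=1$. So the obstacle you flag is not merely technical; no tail-control refinement of a linear minorant of $-\log$ will recover the constant $2\min(q,1-q)$ as stated, and a completed proof must either cap the bound by $H(Y)$ or restrict the size of $\mu^T\Sigma^{-1}\mu$. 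Your sufficient-statistic reduction to $T=\mu^T\Sigma^{-1}X$ is sound and would be a good starting point for such a corrected one-dimensional bound, but as written your proposal does not establish the lower bound of the theorem.
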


\begin{proof}\label{Proof:BinaryClassificationDataModelTruthMI}
	(of Theorem \ref{Thm:BinaryClassificationDataModelTruthMI}) From the definition of mutual information, we have
	\begin{align}\label{Defn:MIinBinaryClassificationModel}
		I(X,Y) := h(X) - h(X|Y), 
	\end{align}
	where the differential entropy $h(X|Y=1) = h(X|Y=-1) = \frac{1}{2}\log(|2\pi e\Sigma|)$ and $e$ is the natural number. Thus 
	\begin{align}\label{Eq:ConditionalEntropy_XcY}
		h(X|Y) 
		= P(Y=1)\times h(X|Y=1) + P(Y=-1)\times h(X|Y=-1) =\frac{1}{2}\log(|2\pi e\Sigma|).
	\end{align}
	
	From the definition of data model in \eqref{Defn:BinaryClassificationDataModel}, we have the marginal distirbution $p_X$
	\begin{align}\label{Eq:XMarginalDistribution}
		p_X(x) 
		& = P_Y(Y=1) \times p_{X|Y=1}(X=x)
		+ P_Y(Y=-1) \times p_{X|Y=-1}(X=x) \nonumber\\
		%& = P_Y(Y=1) \times \frac{1}{\sqrt{2\pi}} \exp\left(-\frac{(x-1)^2}{2}\right)
		%+ P_Y(Y=-1) \times \frac{1}{\sqrt{2\pi}} \exp\left(-\frac{(x+1)^2}{2}\right) \nonumber\\
		& =  \frac{(1-q)}{\sqrt{|2\pi\Sigma|}} \exp\left(-\frac{(x-\mu)^T\Sigma^{-1}(x-\mu)}{2}\right)
		+  \frac{q}{\sqrt{|2\pi\Sigma|}} \exp\left(-\frac{(x+\mu)^T\Sigma^{-1}(x+\mu)}{2}\right).
	\end{align}
	we have
	\begin{align}\label{Eq:XEntropy}
		h(X)
		& := \int_\mbR p_X(x) \log\left( \frac{1}{p_X(x)} \right) dx \nonumber \\
		& = \int_\mbR p_X(x) \times  \log\left( \frac{1}{ \frac{(1-q)}{\sqrt{|2\pi\Sigma|}} \exp\left(-\frac{(x-\mu)^T\Sigma^{-1}(x-\mu)}{2}\right)
			+  \frac{q}{\sqrt{|2\pi\Sigma|}} \exp\left(-\frac{(x+\mu)^T\Sigma^{-1}(x+\mu)}{2}\right)} \right) dx\nonumber \\
		& =  (-1)\times \int_\mbR p_X(x)  \log\left( 
		\frac{(1-q)}{\sqrt{|2\pi\Sigma|}} \exp\left(-\frac{(x-\mu)^T\Sigma^{-1}(x-\mu)}{2}\right)
		+  \frac{q}{\sqrt{|2\pi\Sigma|}} \exp\left(-\frac{(x+\mu)^T\Sigma^{-1}(x+\mu)}{2}\right)
		\right) dx.
	\end{align}
	
	From Jensen's inequality for convex function $\exp(x)$, we have
	\begin{align*}%\label{Eq:LogOfConvexCombinationOfExp}
		& \log\left( 
		\frac{(1-q)}{\sqrt{|2\pi\Sigma|}} \exp\left(-\frac{ (x-\mu)^T\Sigma^{-1}(x-\mu) }{2}\right)
		+  \frac{q}{\sqrt{|2\pi\Sigma|}} \exp\left(-\frac{(x+\mu)^T\Sigma^{-1}(x+\mu)}{2}\right)
		\right) \\
		& =  \log\left( \frac{1}{\sqrt{|2\pi\Sigma|}} \right) 
		+ \log\left( {(1-q) \times  \exp\left(-\frac{ (x-\mu)^T\Sigma^{-1}(x-\mu) }{2}\right)
			+ q \times  \exp\left(-\frac{ (x+\mu)^T\Sigma^{-1}(x+\mu) }{2}\right)} \right) \\
		&\geq  \log\left( \frac{1}{ \sqrt{|2\pi\Sigma|} } \right) 
		+ \log\left( \exp\left((1-q)\times  \left(-\frac{ (x-\mu)^T\Sigma^{-1}(x-\mu) }{2}\right) + q\times \left(-\frac{ (x+\mu)^T\Sigma^{-1}(x+\mu) }{2}\right)\right) \right) \\ 
		& =  \log\left( \frac{1}{ \sqrt{|2\pi\Sigma|} } \right) 
		+ \left((1-q)\times  \left(-\frac{(x-\mu)^T\Sigma^{-1}(x-\mu)}{2}\right) + q\times \left(-\frac{(x+\mu)^T\Sigma^{-1}(x+\mu)}{2}\right)\right)\\
		& = \log\left( \frac{1}{ \sqrt{|2\pi\Sigma|} } \right) 
		+ \left( -\frac{1}{2}x^T\Sigma^{-1}x - \frac{1}{2}\mu^T\Sigma^{-1}\mu + (1-2q)x^T\Sigma^{-1}\mu \right).
	\end{align*}
	Since 
	\begin{align*}
		\int_\mbR p_X(x)
		\left( \log\left( \frac{1}{ \sqrt{|2\pi\Sigma|} } \right) 
		-  \frac{1}{2}\mu^T\Sigma^{-1}\mu \right) dx
		= \log\left( \frac{1}{ \sqrt{|2\pi\Sigma|} } \right) 
		-  \frac{1}{2}\mu^T\Sigma^{-1}\mu,
	\end{align*}
	\begin{align*}
		\int_{\mbR^n} p_X(x) (1-2q)x^T\Sigma^{-1}\mu dx
		& = (1-q)(1-2q)\mu^T\Sigma^{-1}\mu + q (1-2q)(-\mu)^T\Sigma^{-1}\mu \\
		& = (1-2q)^2 \mu^T\Sigma^{-1}\mu,
	\end{align*}
	and 
	\begin{align*}
		\int_{\mbR^n} p_X(x) \left(-\frac{1}{2}x^T\Sigma^{-1}x \right)dx
		= - \frac{1}{2}\left( n + \mu^T\Sigma^{-1}\mu \right),
	\end{align*}
	where we used Lemma \ref{Lem:QuadraticFormExpectation}, then we have from \eqref{Eq:XEntropy}
	\begin{align}\label{Eq:XEntropyUb}
		h(X) 
		& \leq (-1) \int_{\mbR^n} p_X(x) 
		+ \left(\log\left( \frac{1}{ \sqrt{|2\pi\Sigma|} } \right)  -\frac{1}{2}x^T\Sigma^{-1}x - \frac{1}{2}\mu^T\Sigma^{-1}\mu + (1-2q)x^T\Sigma^{-1}\mu \right) \nonumber\\
		& =(-1) \left( \log\left( \frac{1}{ \sqrt{|2\pi\Sigma|} } \right) 
		-  \frac{1}{2}\mu^T\Sigma^{-1}\mu + (1-2q)^2 \mu^T\Sigma^{-1}\mu - \frac{1}{2}\left( n + \mu^T\Sigma^{-1}\mu \right) \right) \nonumber\\
		& = \frac{1}{2}\log|2\pi e\Sigma| + 4q(1-q)\mu^T\Sigma^{-1} \mu.
	\end{align}
	Thus, combining \eqref{Defn:MIinBinaryClassificationModel}, \eqref{Eq:ConditionalEntropy_XcY}, and \eqref{Eq:XEntropyUb}, we can get 
	\begin{align}
		I(X;Y) 
		& \leq \frac{1}{2}\log|2\pi e\Sigma| + 4q(1-q)\mu^T\Sigma^{-1} \mu -  \frac{1}{2}\log|2\pi e\Sigma| \\
		& = 4q(1-q)\mu^T\Sigma^{-1} \mu.
	\end{align}
	
	We now derive the lower bound of $I(X;Y)$. Since
	\begin{align*}
		& \log\left( 
		\frac{(1-q)}{\sqrt{|2\pi\Sigma|}} \exp\left(-\frac{ (x-\mu)^T\Sigma^{-1}(x-\mu) }{2}\right)
		+  \frac{q}{\sqrt{|2\pi\Sigma|}} \exp\left(-\frac{(x+\mu)^T\Sigma^{-1}(x+\mu)}{2}\right)
		\right) \\
		& =  \log\left( \frac{1}{\sqrt{|2\pi\Sigma|}} \right) 
		+ \log\left( {(1-q) \times  \exp\left(-\frac{ (x-\mu)^T\Sigma^{-1}(x-\mu) }{2}\right)
			+ q \times  \exp\left(-\frac{ (x+\mu)^T\Sigma^{-1}(x+\mu) }{2}\right)} \right) \\
		& \leq \log\left( \frac{1}{\sqrt{|2\pi\Sigma|}} \right) 
		+ \log\left( \max\left(\exp\left(-\frac{ (x-\mu)^T\Sigma^{-1}(x-\mu) }{2}\right),  \exp\left(-\frac{ (x+\mu)^T\Sigma^{-1}(x+\mu) }{2}\right)\right) \right) \\
		& = \log\left( \frac{1}{\sqrt{|2\pi\Sigma|}} \right) 
		+ \log\left( \exp\left(\max\left(-\frac{ (x-\mu)^T\Sigma^{-1}(x-\mu) }{2},-\frac{ (x+\mu)^T\Sigma^{-1}(x+\mu) }{2}\right)\right) \right) \\
		& = \log\left( \frac{1}{\sqrt{|2\pi\Sigma|}} \right)  + \max\left(-\frac{ (x-\mu)^T\Sigma^{-1}(x-\mu) }{2},-\frac{ (x+\mu)^T\Sigma^{-1}(x+\mu) }{2}\right)
	\end{align*}
	then
	\begin{align}\label{Eq:DifferentialEntropy3}
		h(X)
		& \geq (-1)\times \int_\mbR p_X(x)\left( \log\left( \frac{1}{\sqrt{|2\pi\Sigma|}} \right)  + \max\left(-\frac{ (x-\mu)^T\Sigma^{-1}(x-\mu) }{2},-\frac{ (x+\mu)^T\Sigma^{-1}(x+\mu) }{2}\right) \right) dx \nonumber \\
		& = - \log\left( \frac{1}{\sqrt{|2\pi\Sigma|}} \right)   +  \min\left(I_-,I_+\right),
	\end{align}
	where
	\begin{align*}
		I_-
		&:=\int_{\mbR^n} p_X(x) \frac{ (x-\mu)^T\Sigma^{-1}(x-\mu) }{2} dx \\
		&=\int_\mbR 	\left(\frac{(1-q)}{\sqrt{|2\pi\Sigma|}} \exp\left(-\frac{ (x-\mu)^T\Sigma^{-1}(x-\mu) }{2}\right)
		+  \frac{q}{\sqrt{|2\pi\Sigma|}} \exp\left(-\frac{(x+\mu)^T\Sigma^{-1}(x+\mu)}{2}\right)\right)\\
		&\quad 	\times \frac{ (x-\mu)^T\Sigma^{-1}(x-\mu) }{2} dx
	\end{align*}
	and 
	\begin{align*}
		I_+
		&:=
		\int_{\mbR^n} p_X(x) \frac{ (x+\mu)^T\Sigma^{-1}(x+\mu) }{2} dx \\
		& =\int_\mbR 	\left(\frac{(1-q)}{\sqrt{|2\pi\Sigma|}} \exp\left(-\frac{ (x-\mu)^T\Sigma^{-1}(x-\mu) }{2}\right)
		+  \frac{q}{\sqrt{|2\pi\Sigma|}} \exp\left(-\frac{(x+\mu)^T\Sigma^{-1}(x+\mu)}{2}\right)\right)\\
		&\quad \times \frac{ (x+\mu)^T\Sigma^{-1}(x+\mu) }{2} dx.
	\end{align*}
	
	From Lemma \ref{Lem:QuadraticFormExpectation}, we have 
	\begin{align}\label{Eq:I_Minus}
		L_- 
		& = \frac{(1-q)n+q(n+4\mu^T\Sigma^{-1}\mu)}{2} \nonumber\\
		& = \frac{n}{2}+2q\mu^T\Sigma^{-1}\mu,
	\end{align}
	and 
	\begin{align}\label{Eq:I_Plus}
		I_+
		& = \frac{(1-q)(n+4\mu^T\Sigma^{-1}\mu) + qn}{2} \nonumber\\
		& = \frac{n}{2} + 2(1-q)\mu^T\Sigma^{-1}\mu.
	\end{align}
	
	Combining  \eqref{Eq:DifferentialEntropy3},  \eqref{Eq:I_Plus}, and \eqref{Eq:I_Minus}, we have 
	\begin{align}
		h(X) \geq  - \log\left( \frac{1}{\sqrt{|2\pi\Sigma|}} \right) + \frac{n}{2} + 2\min(q,1-q)\mu^T\Sigma^{-1}\mu.
	\end{align}
	Then from \eqref{Defn:MIinBinaryClassificationModel}, we have 
	\begin{align*}
		I(X;Y) \geq 2\min(q,1-q)\mu^T\Sigma^{-1}\mu.
	\end{align*}
	
\end{proof}

\subsection*{Appendix B1: Ignored Label Conditional Entropy in CIFAR-10 Dataset}\label{Sec:ImgNt_IgnrLblCE}

In this section, we give examples in Figure \ref{Fig:CIFAR10_samples} from CIFAR-10 to show the information loss during the annotation process \cite{he_deep_2015}.

\begin{figure*}[!htb]
	\centering
	\begin{subfigure}[b]{0.24\textwidth}
		\centering
		\includegraphics[width=\linewidth]{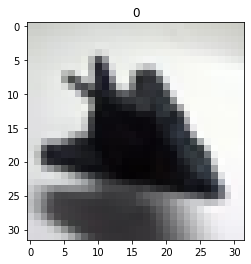}
		\caption{Truth label: class 0 airplane}
	\end{subfigure}%
	~
	\begin{subfigure}[b]{0.24\textwidth}
		\centering
		\includegraphics[width=\linewidth]{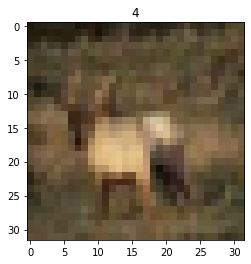}
		\caption{Truth label: class 4 dear}
	\end{subfigure}
	\begin{subfigure}[b]{0.24\textwidth}
		\centering
		\includegraphics[width=\linewidth]{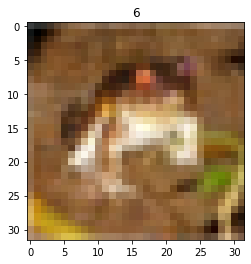}
		\caption{Truth label: class 6 frog}
	\end{subfigure}%
	~
	\begin{subfigure}[b]{0.24\textwidth}
		\centering
		\includegraphics[width=\linewidth]{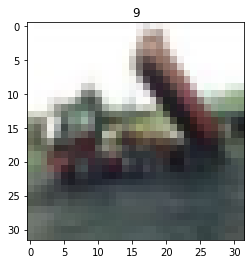}
		\caption{Truth label: class 9 truck}
	\end{subfigure}
	\caption{Examples from CIFAR-10 dataset. When such images are given, it is usually challenging to determine the label class with 100\% certainty. When we use a single label during annotation process and the one-hot encoding during the training process, the extra information left for the labels are ignord.}\label{Fig:CIFAR10_samples}
\end{figure*}

\newpage
\bibliography{202204_MILC.bib}
\bibliographystyle{unsrt}

\end{document}